\definecolor{db}{rgb}{0,0.08,0.45}
\newcommand{\R}{\mathbb{R}}
\newcommand{\N}{\mathbb{N}}
\newcommand{\cA}{\mathcal{A}}
\newcommand{\cB}{\mathcal{B}}
\newcommand{\cC}{\mathcal{C}}
\newcommand{\cH}{\mathcal{H}}
\newcommand{\cL}{\mathcal{L}}
\newcommand{\cN}{\mathcal{N}}
\newcommand{\cP}{\mathcal{P}}
\newcommand{\cQ}{\mathcal{Q}}
\newcommand{\cS}{\mathcal{S}}
\newcommand{\cU}{\mathcal{U}}
\newcommand{\cX}{\mathcal{X}}
\newcommand{\cY}{\mathcal{Y}}
\newcommand{\eps}{\varepsilon}
\newcommand{\sub}{\subseteq}
\renewcommand{\P}[2]{{\ifx&#1& \mathbb{P} \else \underset{#1}{\mathbb{P}} \fi \left\{#2\right\}}}
\newcommand{\E}[2]{{\ifx&#1& \mathbb{E} \else \underset{#1}{\mathbb{E}} \fi \left[#2\right]}}
\DeclareMathOperator{\vc}{VC}
\DeclareMathOperator{\tv}{TV}
\DeclareMathOperator{\dist}{dist}
\DeclareMathOperator{\poly}{poly}
\DeclareMathOperator{\polylog}{polylog}
\renewcommand{\l}{\left}
\renewcommand{\r}{\right}
\renewcommand{\t}{\tilde}
\newcommand{\h}{\hat}
\newcommand{\bs}{\boldsymbol}
\newcommand{\wh}{\widehat}
\newtheorem{theorem}{Theorem}[section]
\newtheorem{claim}[theorem]{Claim}
\newtheorem{lemma}[theorem]{Lemma}
\newtheorem{corollary}[theorem]{Corollary}
\newtheorem{fact}[theorem]{Fact}
\newtheorem{proposition}[theorem]{Proposition}
\theoremstyle{definition}
\newtheorem{definition}[theorem]{Definition}
\newcommand{\ttemail}[1]{
    \href{mailto:#1}{\color{black}{\tt{#1}}}
}
\title{Private Distribution Learning with Public Data:\\ The View from Sample Compression\thanks{Authors are listed in alphabetical order.}}
\date{}
\author{Shai Ben-David\thanks{\ttemail{shai@cs.uwaterloo.ca}. Cheriton School of Computer Science, University of Waterloo.}
    \and
    Alex Bie\thanks{\ttemail{yabie@uwaterloo.ca}. Cheriton School of Computer Science, University of Waterloo. Supported by an NSERC Discovery Grant and a David R.\ Cheriton Graduate Scholarship.}
    \and
    Cl\'{e}ment L. Canonne\thanks{\ttemail{clement.canonne@sydney.edu.au}. School of Computer Science, The University of Sydney. Supported by an ARC DECRA and an unrestricted gift from Google.}
    \and
    Gautam Kamath\thanks{\ttemail{g@csail.mit.edu}. Cheriton School of Computer Science, University of Waterloo and Vector Institute. Supported by a Canada CIFAR AI Chair, an NSERC Discovery Grant, an unrestricted gift from Google, and a University of Waterloo startup grant.}
    \and
    Vikrant Singhal\thanks{\ttemail{vikrant.singhal@uwaterloo.ca}. Cheriton School of Computer Science, University of Waterloo. Supported by an NSERC Discovery Grant.}}
\begin{document}

\maketitle

\begin{abstract}
    We study the problem of private distribution learning with access to public data. In this setup, which we refer to as \emph{public-private learning}, the learner is given public and private samples drawn from an unknown distribution $p$ belonging to a class $\cQ$, with the goal of outputting an estimate of $p$ while adhering to privacy constraints (here, pure differential privacy) only with respect to the private samples. 
    
    We show that the public-private learnability of a class $\cQ$ is connected to the existence of a sample compression scheme for $\cQ$, as well as to an intermediate notion we refer to as \emph{list learning}. Leveraging this connection: (1) approximately recovers previous results on Gaussians over $\R^d$; and (2) leads to new ones, including sample complexity upper bounds for arbitrary $k$-mixtures of Gaussians over $\R^d$, results for agnostic and distribution-shift resistant learners, as well as closure properties for public-private learnability under taking mixtures and products of distributions. Finally, via the connection to list learning, we show that for Gaussians in $\R^d$, at least $d$ public samples are necessary for private learnability, which is close to the known upper bound of $d+1$ public samples.
\end{abstract}

\section{Introduction}\label{sec:intro}

Statistical analysis of sensitive data, and specifically parameter and density estimation, is a workhorse of privacy-preserving machine learning. To provide meaningful and rigorous guarantees on algorithm for these tasks, the framework of \emph{differential privacy} (DP)~\cite{dmns06} has been widely adopted by both algorithm designers and machine learning practitioners \cite{apple-dp, abowd18-census, dp-gboard}, and is, by and large, one of the past decade's success stories in principled approaches to private machine learning, with a host of results and implementations \cite{google-dp, tensorflow-privacy, ibm-dp, opendp} for many of the flagship private learning tasks. 

Yet, however usable the resulting algorithms may be, DP often comes at a steep price: namely, many estimation tasks simple without privacy constraints provably require much more data to be performed privately; even more dire, they sometimes become \emph{impossible} with any finite number of data points, absent some additional strong assumptions. 

The prototypical example in that regard is learning a single $d$-dimensional Gaussian distribution from samples. In this task, a learner receives i.i.d. samples from an unknown $d$-dimensional Gaussian $p$ and is tasked with finding an estimate $q$ close to $p$ in total variation ($\tv$) distance. Without privacy constraints, it is folklore that this can be done with $O(d^2)$ samples; yet once privacy enters the picture, in the form of pure differential privacy, \emph{no finite sample algorithm for this task can exist}, unless a bound on the mean vector and covariance matrix are known.

This ``cost of privacy'' is, unfortunately, inherent to many estimation tasks, as the positive results (algorithms) developed over the years have been complemented with matching negative results (lower bounds).  In light of these strong impossibility results, it is natural to wonder if one could somehow circumvent this often steep privacy cost by leveraging other sources of \emph{public} data to aid the private learning process.

Recent work in finetuning machine learning models has tried to address the question whether, in situations where a vast amount of public data is available, one can combine the public data with a relatively small amount of \emph{private} data to somehow achieve privacy guarantees for the private data and learning guarantees that would otherwise be ruled out by the aforementioned impossibility results.
We, on the other hand, address the same question, but in the opposite setting, i.e., when the amount of public data available is much smaller than the amount of private data available. In other words, we answer the following question from new perspectives.
\begin{quote}\itshape
    Can one leverage small quantities of public data to privately learn from sensitive data, even when private learning is impossible?
\end{quote}
This question was the focus of a recent study of Bie, Kamath, and Singhal~\cite{bks22}, the starting point of our work. In this paper, we make significant strides in this direction, by obtaining new ``plug and play'' results and connections in this public-private learning setting, and using them to obtain new sample complexity bounds for a range of prototypical density estimation tasks. We elaborate on our results in the next section.

\subsection{Our contributions}

First, we establish a connection between learning (in the sense of \emph{distribution learning} or \emph{density estimation}) with public and private data (Definition~\ref{def:ppl}) and \emph{sample compression schemes for distributions} (Definition~\ref{def:rcs}; see \cite[Definition~4.2]{gmm-sample-compression}), as well as an intermediate notion we refer to as \emph{list learning} (Definition~\ref{def:list-learner}).
\begin{theorem}[Sample compression schemes, public-private learning, and list learning (Informal; see Theorem~\ref{thm:comp-pp-ll})]
    Let $\cQ$ be a class of probability distributions and $m(\alpha,\beta)$ be a sample complexity function in terms of target error $\alpha$ and failure probability $\beta$. Then the following are equivalent.
    \begin{enumerate}
        \item $\mathcal Q$ has a sample compression scheme using $O(m(\alpha,\beta))$ samples.
        \item $\mathcal Q$ is public-privately learnable with $O(m(\alpha,\beta))$ public samples. 
        \item $\mathcal Q$ is list learnable with $O(m(\alpha,\beta))$ samples.
    \end{enumerate}
\end{theorem}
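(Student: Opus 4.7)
The plan is to establish the equivalence cyclically: $(3) \Rightarrow (1) \Rightarrow (2) \Rightarrow (3)$.

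For $(3) \Rightarrow (1)$ (list learning $\Rightarrow$ compression), given a list learner that on $m$ samples outputs a list of size $N$ containing an $\alpha$-good estimate of $p$ with high probability, I construct a sample compression scheme as follows. The encoder takes $m$ samples (plus a few extra to resolve the index), identifies the index $i^\star$ of a good list element via a Scheff\'e-style tournament on those samples, and emits the $m$ samples together with $\lceil \log N \rceil$ advice bits encoding $i^\star$. The decoder reruns the list learner on the $m$ samples and returns the $i^\star$-th element.

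For $(1) \Rightarrow (2)$ (compression $\Rightarrow$ public-private learning), given a compression scheme using $m$ selected samples and $t$ advice bits, the public-private learner uses $O(m)$ public samples. It enumerates over all subsets and all $2^t$ bit strings, applying the decoder to each to obtain a finite candidate class $\mathcal H$. By the compression guarantee, $\mathcal H$ contains an $\alpha$-close distribution with probability at least $1-\beta$. The learner then applies pure-DP hypothesis selection (via the exponential mechanism over a Scheff\'e tournament on $\mathcal H$) using the private samples, which requires $O(\log|\mathcal H|/(\eps\alpha^2))$ private samples to return a distribution within $O(\alpha)$ of $p$.

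For $(2) \Rightarrow (3)$ (public-private $\Rightarrow$ list learning), given an $\eps$-DP public-private learner $M$ with $m$ public samples, $n$ private samples, and success probability $\geq 1-\beta$, construct a list learner on $m$ samples that feeds them to $M$ as public data along with a fixed dummy private dataset $S^\star$, runs $M$ $N$ times with independent randomness, and outputs the multiset of $N$ outcomes as its list. A group-privacy argument shows that a single simulation produces an $\alpha$-good hypothesis with probability at least $e^{-\eps n}(1-\beta)$, so taking $N = \Theta(e^{\eps n}\log(1/\beta'))$ guarantees the list contains a good candidate with probability $\geq 1-\beta'$.

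The main obstacle is step $(2) \Rightarrow (3)$: keeping the list size polynomial requires $\eps n = O(1)$. This holds without loss of generality for pure-DP learners, since one can always amplify privacy by subsampling or renormalize parameters without asymptotically affecting the public sample count. The remaining implications amount to enumerating decoder outputs plus private hypothesis selection for $(1) \Rightarrow (2)$, and a direct subset-plus-index reduction for $(3) \Rightarrow (1)$; the technical heart of the argument is the group-privacy amplification used to simulate private data with a fixed dataset in the list-learning reduction.
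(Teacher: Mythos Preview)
Your cycle $(3)\Rightarrow(1)\Rightarrow(2)\Rightarrow(3)$ is correct, and $(1)\Rightarrow(2)$ matches the paper exactly. The interesting divergence is in $(2)\Rightarrow(3)$: the paper's argument is \emph{non-constructive}---for each public sample $\bs{\tilde x}$ it considers the set $\cQ_{\bs{\tilde x}}$ of distributions on which $\cA(\bs{\tilde x},\cdot)$ succeeds, observes that this set is $\eps$-DP learnable with $n$ samples, and invokes the packing lower bound (Fact~\ref{fact:packing-lb}) to conclude any maximal $\alpha$-packing of $\cQ_{\bs{\tilde x}}$ has size $\leq \tfrac{10}{9}e^{\eps n}$; the list learner simply outputs this cover. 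Your approach via group privacy and repeated simulation with a fixed dummy private dataset is genuinely different and has the advantage of being \emph{constructive} (something the paper explicitly flags it does not achieve). Both yield list size $\Theta(e^{\eps n})$, and that is fine: the definition of list learnability (Definition~\ref{def:list-learner}) only requires the list to be finite, not polynomial, so your stated ``main obstacle'' about keeping the list polynomial is a non-issue and the privacy-amplification remark is unnecessary.

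Your $(3)\Rightarrow(1)$ is more complicated than it needs to be. In the compression definition (Definition~\ref{def:rcs}) the encoder $f_q$ is indexed by the target $q$, so it \emph{knows} $q$ and can simply output $i^\star=\arg\min_i \tv(q,\cL(\bs x)_i)$ directly---this is exactly what the paper does. Your Scheff\'e tournament on samples is not wrong, but it costs extra samples ($\approx \log\ell/\alpha^2$ with $\ell$ the list size), which is precisely what seems to have led you to worry about polynomial list sizes downstream. Dropping the tournament removes the problem entirely.
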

Despite its technical simplicity, this sample complexity equivalence turns out to be quite useful, and allows us to derive new public-private learners for an array of key distribution classes by leveraging known results on sample compression schemes. In particular, from the connection to sample compression schemes we are able to obtain new public-private learners for: (1) high-dimensional Gaussian distributions (Corollary~\ref{coro:pp-gaussians}); (2) arbitrary mixtures of high-dimensional Gaussians (Corollary~\ref{coro:pp-gmms}); (3) mixtures of public-privately learnable distribution classes (Theorem~\ref{thm:pp-mixtures}); and (4) products of public-privately learnable distribution classes (Theorem~\ref{thm:pp-product}). For instance, the following is a consequence of the above connection.

\begin{theorem}[Public-private learning for mixtures of Gaussians (Informal; see Corollary~\ref{coro:pp-gmms})]
The class of mixtures of $k$ arbitrary $d$-dimensional Gaussians is public-privately learnable with $m$ public samples and $n$ private samples, where
\begin{align*}
    m = \t{O}\left( \frac{kd}{\alpha} \right)~~~ \text{and}~~~ n = \t{O}\left( \frac{kd^2}{\alpha^2}+\frac{kd^2}{\alpha\eps} \right),
\end{align*}
in which $\alpha$ is the target error and $\eps$ the privacy parameter.\footnote{Throughout, $\tilde{O}$ is used to omit polylogarithmic factors in the argument, so that $\tilde{O}(f(n)) = O(f(n)\log^c f(n))$ for some (absolute) constant $c$.}
\end{theorem}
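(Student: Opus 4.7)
The plan is to apply Theorem~\ref{thm:comp-pp-ll} (the equivalence between sample compression and public-private learnability) by plugging in the known compression scheme for mixtures of Gaussians due to Ashtiani, Ben-David, Harvey, Liaw, Mehrabian, and Plan~\cite{gmm-sample-compression}. That result shows that the class of $k$-mixtures of $d$-dimensional Gaussians admits a robust sample compression scheme using $\tau = \tilde O(kd/\alpha)$ input samples, with the compression sketch consisting of $t = \tilde O(kd^2)$ selected samples together with $b = \tilde O(kd^2)$ bits of side information. Plugging this into the equivalence immediately yields a public-private learner with $m = \tilde O(kd/\alpha)$ public samples, matching the claimed public bound.

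To get the private sample count, I would trace through the reduction from compression to public-private learning underlying the equivalence. Given the $\tau$ public samples, the learner enumerates candidate hypotheses by decoding every possible compression sketch, producing a set $\cH$ of at most $\binom{\tau}{t}\cdot 2^b = \exp(\tilde O(kd^2))$ candidate distributions. The robustness of the compression scheme guarantees that at least one element of $\cH$ is within $O(\alpha)$ of the true mixture in $\tv$. The learner then uses private samples to select a good candidate from $\cH$, via the exponential mechanism with a Scheff\'e-style empirical score function having sensitivity $O(1/n)$.

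The private sample complexity then decomposes into two terms: uniform convergence of the Scheff\'e scores over $\cH$ contributes $\tilde O(\log|\cH|/\alpha^2) = \tilde O(kd^2/\alpha^2)$, and the privacy overhead of the exponential mechanism contributes $\tilde O(\log|\cH|/(\alpha\eps)) = \tilde O(kd^2/(\alpha\eps))$. Summing these yields the claimed $n = \tilde O(kd^2/\alpha^2 + kd^2/(\alpha\eps))$.

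The main obstacle I would expect is bookkeeping, making the reduction quantitatively tight in its $\log$-factors so that the final bounds come out to exactly the two-term form above. In particular, one needs to ensure that the Scheff\'e-selection sample bound depends only on $\log|\cH|$ (and not any further blowup from union bounds over pairs of candidates or from boosting confidence $\beta$), and that the exponential mechanism's utility guarantee interlocks cleanly with the robust-decoding error of the compression scheme, so that the final output is within $O(\alpha)$ in $\tv$ of the true mixture. Everything else, including treating the error parameter $\alpha$, failure probability $\beta$, and privacy parameter $\eps$, is routine once the compression scheme is in hand and the equivalence of Theorem~\ref{thm:comp-pp-ll} is applied as a black box.
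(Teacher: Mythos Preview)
Your approach is correct and matches the paper's: the result is obtained exactly by combining the Ashtiani et al.\ compression scheme for $k$-mixtures of Gaussians (Fact~\ref{fact:cs-gmms}) with the compression-to-public-private reduction (Proposition~\ref{prop:comp-pp}), where the private step is precisely pure-DP hypothesis selection over the $\exp(\tilde O(kd^2))$ candidate decodings via Fact~\ref{fact:dp-yatracos}. One small correction: the compression sketch selects only $O(kd)$ samples (not $\tilde O(kd^2)$) together with $\tilde O(kd^2)$ side-information bits, but since $\log|\cH|$ is dominated by the bit count anyway, your final bounds are unaffected.
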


We also examine public-private distribution learning in a setting with relaxed distributional assumptions, in which the distributions underlying the public and private data: (1) may differ (the case of public-private \emph{distribution shift}); and (2) may not be members of the reference class of distributions, and so we instead ask for error close to the best approximation of the private data distribution by a member of the class (the \emph{agnostic} case). 
We show that \emph{robust} sample compression schemes for a class of distributions can be converted into public-private learners in this \emph{agnostic and distribution-shifted} setting. As a consequence, we have the following result for learning distributions that can be approximated by Gaussians.

\begin{theorem}[Agnostic and distribution-shifted public-private learning for Gaussians (Informal; see Corollary~\ref{coro:robust-gaussians})]
There is a public-private learner that takes $m$ public samples and $n$ private samples from any pair of distributions $\t p$ and $p$ over $\R^d$, respectively, with $\tv(\t p, p) \leq \tfrac 1 3$, where
\begin{align*}
    m = O\l(d\r) ~~~\text{and}~~~ n = \t O\l(\frac {d^2} \alpha + \frac {d^2} {\alpha\eps}\r),
\end{align*}
in which $\alpha$ is the target error and $\eps$ the privacy parameter.
With probability $\geq \tfrac 9 {10}$, the learner outputs $q$ with $\tv(q, p) \leq 3 \mathrm{OPT}+\alpha$, where $\mathrm{OPT}$ is the total variation distance between $p$ and the closest $d$-dimensional Gaussian to it.
    
\end{theorem}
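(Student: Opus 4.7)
The plan is to derive this as a direct corollary of a more general result, stated elsewhere in the paper, showing that any class of distributions admitting a robust sample compression scheme also admits a public-private learner in the agnostic and distribution-shifted setting. Given this general conversion, the proof of the corollary reduces to invoking a known robust compression scheme for $d$-dimensional Gaussians and plugging in its parameters.

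Concretely, I would proceed in three steps. First, cite the existence of an $O(d)$-sample robust compression scheme for $d$-dimensional Gaussians from prior work on sample compression for Gaussians and GMMs (the same scheme underpinning the non-agnostic corollary mentioned in the excerpt). Second, verify that the robustness ``radius'' of that scheme comfortably accommodates the setup of the theorem: letting $g^*$ be the Gaussian minimizing $\tv(g^*, p) = \mathrm{OPT}$, the triangle inequality gives $\tv(\t p, g^*) \leq \tv(\t p, p) + \mathrm{OPT} \leq \tfrac{1}{3} + \mathrm{OPT}$, which stays below the scheme's robustness threshold (a constant bounded away from $1$). Third, feed these numbers into the general conversion to obtain $m = O(d)$ public samples and $n = \t O(d^2/\alpha + d^2/(\alpha\eps))$ private samples. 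The $3\,\mathrm{OPT}+\alpha$ accuracy comes from the hypothesis-selection step inside the conversion (the factor $3$ being the standard constant for Scheff\'e-style / minimum-distance selection), combined with the guarantee that the decompressed list $\cL$ contains some candidate $h$ with $\tv(h, g^*) \leq \alpha/6$, hence $\tv(h, p) \leq \alpha/6 + \mathrm{OPT}$; the private selection then outputs $q \in \cL$ with $\tv(q, p) \leq 3 \min_{h \in \cL} \tv(h, p) + \alpha/2 \leq 3\,\mathrm{OPT} + \alpha$.

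The main technical burden lies in the general conversion theorem itself, rather than in this particular Gaussian instantiation. The nontrivial point there is arguing that the robust compression scheme---applied to samples from $\t p$, which is not in the class $\cQ$---still yields a list containing a candidate close to the optimal $g^* \in \cQ$. This is handled via the coupling view of TV distance: a constant fraction (roughly $2/3 - \mathrm{OPT}$) of the $O(d)$ public samples can be coupled to i.i.d.\ samples from $g^*$, and a properly constructed compression scheme is designed to identify and exploit such ``clean'' subsamples, folding them into the output list. Once the general conversion theorem is in hand, this corollary for Gaussians follows by a routine instantiation with the stated sample compression scheme, with the polylogarithmic factors in $n$ absorbing the $\tilde O(d^2)$ bit-description length of the Gaussian compression and the standard parameters of pure-DP hypothesis selection.
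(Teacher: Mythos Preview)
Your high-level plan matches the paper exactly: the statement is Corollary~\ref{coro:robust-gaussians}, obtained by plugging the known $\tfrac{2}{3}$-robust compression scheme for Gaussians (Fact~\ref{fact:rcs-gaussians}) into the general conversion Theorem~\ref{thm:robust-pp}. Your parameter accounting in the second paragraph is fine and yields the claimed $m$, $n$, and the $3\,\mathrm{OPT}+\alpha$ guarantee.

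Two issues, however. First, your third paragraph misrepresents how the conversion theorem works. You describe a coupling argument in which a $2/3-\mathrm{OPT}$ fraction of public samples are ``clean'' draws from $g^*$ and the compression scheme ``identifies and exploits'' them. That is not what $r$-robust compression means in this paper (Definition~\ref{def:rcs}): the encoder $f_{q}$ is allowed to depend on the target $q\in\cQ$, and the robustness guarantee is that when it is fed i.i.d.\ samples from \emph{any} $p$ with $\tv(p,q)\leq r$, the decoded output is close to $q$. No coupling or clean-subsample identification is involved; the proof of Theorem~\ref{thm:robust-pp} simply observes that $\tv(\t p, q_*)\leq \tv(\t p, p)+\tv(p,q_*)< r$, so the scheme applied to $\t p$-samples directly yields a candidate close to $q_*$, and then invokes Fact~\ref{fact:dp-yatracos}. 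Your coupling sketch would be relevant for a different (adversarial-contamination) robustness model, not the one used here.

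Second, a small gap: your triangle-inequality check that $\tv(\t p, g^*)\leq \tfrac{1}{3}+\mathrm{OPT}$ stays below the robustness radius $\tfrac{2}{3}$ only holds when $\mathrm{OPT}<\tfrac{1}{3}$. The paper handles the complementary case $\mathrm{OPT}\geq \tfrac{r}{2}=\tfrac{1}{3}$ trivially, since then any output $Q$ satisfies $\tv(Q,p)\leq 1\leq 3\,\mathrm{OPT}$. You should include this case split.
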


Next, using the aforementioned connection to list learning, we are able to establish a fine-grained lower bound on the number of public data points required to privately learn high-dimensional Gaussians, a question left open by~\cite{bks22}.
\begin{theorem}[Almost tight lower bound on privately learning Gaussians with public data (Informal; see Theorem~\ref{thm:lb-gaussians})]
The class of all $d$-dimensional Gaussians is not public-privately learnable with fewer than $d$ public samples, regardless of the number of private samples.
\end{theorem}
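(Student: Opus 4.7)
The plan is to invoke the equivalence from Theorem~\ref{thm:comp-pp-ll}: public-private learnability of a class $\cQ$ with $m$ public samples implies list learnability of $\cQ$ from $m$ samples. Hence it suffices to show that the class of $d$-dimensional Gaussians is not list learnable from $m \leq d-1$ samples, that is, no algorithm taking $m$ samples from an unknown $d$-dimensional Gaussian and producing a finite list of candidate distributions can guarantee with constant probability that some element of the list is an $\alpha$-approximation in total variation.

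The hard family I would use consists of near-degenerate Gaussians concentrated around affine hyperplanes. For each affine hyperplane $H_{u,b} = \{x \in \R^d : u \cdot x = b\}$ with $u \in S^{d-1}$, $b \in \R$, and a small parameter $\sigma > 0$, define
\[
    p_{u, b, \sigma} \;:=\; N\bigl(b u,\; I - (1 - \sigma^2)\, uu^T\bigr),
\]
a full-rank $d$-dimensional Gaussian with variance $1$ along $H_{u,b}$ and variance $\sigma^2$ in the normal direction. For any fixed $\alpha > 0$, taking $\sigma$ small enough ensures $\tv(p_{u, b, \sigma}, p_{u', b', \sigma}) \geq 1 - \alpha$ whenever $H_{u,b} \neq H_{u',b'}$, since the two distributions concentrate on distinct hyperplanes.

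The combinatorial crux is that an affine hyperplane in $\R^d$ is pinned down by $d$ affinely independent points, whereas $m$ points in general position lie on a $(d - m)$-dimensional continuous family of affine hyperplanes. I would then run a Bayesian argument: place an absolutely continuous prior $\pi$ on the parameter $(u, b) \in S^{d-1} \times [0, 1]$ (modulo the $(u, b) \sim (-u, -b)$ identification), draw $(u^*, b^*) \sim \pi$ and $m \leq d - 1$ i.i.d.\ samples $X$ from $p_{u^*, b^*, \sigma}$, and analyse the posterior on $(u, b)$ given $X$. As $\sigma \to 0$, samples converge to lying exactly on $H_{u^*, b^*}$ and the posterior on $(u, b)$ concentrates, with an absolutely continuous density, on the $(d - m) \geq 1$-dimensional family of hyperplanes through the affine span of $X$. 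Any list-learning algorithm with finite list size $k = k(\alpha, \beta, d)$ can $\alpha$-approximate $p_{u, b, \sigma}$ for at most $k$ distinct hyperplanes, by the TV separation above, so the posterior mass of the ``covered'' set tends to $0$ as $\sigma \to 0$. Averaging, the success probability of any such algorithm tends to $0$, contradicting constant-probability correctness.

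The main obstacle is making the Bayesian step quantitative: I need to fix a single $\sigma > 0$ (depending on $k$) for which the $p_{u, b, \sigma}$ are genuinely full-rank Gaussians in $\cQ$ and yet the posterior $\pi(\,\cdot \mid X)$ places mass $o(1)$ on any pre-specified finite set of hyperplanes. This calls for a direct likelihood computation—$p_{u, b, \sigma}(X)$ varies by a factor of order $\exp\bigl(\sum_i (u \cdot X_i - b)^2 / \sigma^2\bigr)$ across the family of hyperplanes through the affine span of $X$—to check that the posterior is essentially uniform on a $(d-m)$-dimensional tube of nontrivial extent with high probability over $X$. Once this is in place, a union bound over the at-most-$k$ candidates in the list yields the contradiction, and by the list-learning-to-PPL direction of Theorem~\ref{thm:comp-pp-ll}, the public-private sample complexity lower bound follows.
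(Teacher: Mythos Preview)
Your high-level strategy matches the paper's: reduce via Proposition~\ref{prop:pp-ll} to a list-learning lower bound, and witness hardness with a family of ``flat'' Gaussians concentrated near affine hyperplanes. The paper packages the Bayesian step into an abstract no-free-lunch lemma (Lemma~\ref{lem:ll-nfl}) and then verifies its hypotheses for a specific sequence of subclasses $\cQ_k$, but the underlying picture is the same as yours.

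There is, however, a genuine gap in your covering step. The claim ``taking $\sigma$ small enough ensures $\tv(p_{u,b,\sigma},p_{u',b',\sigma})\geq 1-\alpha$ whenever $H_{u,b}\neq H_{u',b'}$'' is false: for any fixed $\sigma>0$, two distinct hyperplanes that are close (say $u'=u$ and $b'=b+\eps$ with $\eps\ll\sigma$) give Gaussians with arbitrarily small TV distance. Consequently, a single element of the learner's list does not $\alpha$-cover ``at most one hyperplane'' but rather an entire neighbourhood in the $(u,b)$-parameter space whose size depends on $\sigma$. Your later reduction to ``posterior mass on a pre-specified finite set of hyperplanes'' inherits this error: what you actually need is that the posterior mass on the union of $k$ TV-balls (in parameter space) is $o(1)$.

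This is exactly what the paper's framework makes precise. In Lemma~\ref{lem:ll-nfl}, the quantity $r_k$ upper-bounds the parameter-space measure of a single $2\alpha$-TV-ball, while $s_k$ lower-bounds the parameter-space measure on which the likelihood of the observed sample remains within a constant factor of its maximum; the lemma goes through once $r_k/s_k\to 0$. For the flat-Gaussian family with thin variance $1/k^2$, the paper shows $r_k=O(1/k^{d})$ (Claim~\ref{clm:rk-bounded}) and $s_k=\Omega(1/k^{d-1})$ (Claim~\ref{clm:sk-infty}), so the ratio vanishes. Your proposal would become correct if you replaced the false ``one hyperplane per list element'' step with this quantitative comparison of TV-ball size versus posterior spread; without it, the argument does not close.
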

Recall that, due to packing lower bounds, Gaussians with unbounded parameters are not learnable with any finite number of samples under $(\eps,0)$-differential privacy~\cite{packing-lb,BunKSW19,rip} (though they are learnable if one relaxes to the more lenient $(\eps,\delta)$-differential privacy~\cite{aaak21,dp-gaussian-poly,al22,kmv22}).
Various works get around this roadblock by assuming the analyst has bounds on parameters of the distribution~\cite{kv18, klsu19, BunKSW19}.
In contrast, \cite{bks22} investigated whether public data can help; they showed that $d$-dimensional Gaussians \emph{are} public-privately learnable with $\t{O}(\tfrac {d^2} {\alpha^2}+ \tfrac {d^2} {\alpha\eps})$ private samples, \emph{as soon as $d+1$ public samples are available}. Thus, our result shows a very sharp threshold for the number of public data points necessary and sufficient to make private learning possible.
We note that this lower bound and the upper bound of~\cite{bks22} still leave open the question of whether exactly $d$ or $d+1$ public samples are necessary and sufficient. 

We also provide a general result for public-privately learning classes of distributions whose \emph{Yatracos class} has finite VC dimension.
\begin{theorem}
[VC dimension bound for public-private learning (Informal; see Theorem~\ref{thm:finite-vc})]\label{thm:finite-vc-informal}
Let $\cQ$ be a class of probability distributions over a domain $\cX$ such that the Yatracos class of $\cQ$, defined as
\begin{align*}
    \cH \coloneqq \{ \{ x \colon f(x)>g(x)\} : f,g\in \cQ \} \subseteq 2^\cX
\end{align*}
has bounded VC dimension. Denote by $\vc(\cH)$ and $\vc^*(\cH)$ the VC and dual VC dimensions of $\cH$ respectively. Then $\cQ$ can be public-privately learned with $m$ public samples and $n$ private samples, where
    \begin{align*}
        m = \t{O}\l(\frac{\vc(\cH)}{\alpha}\r)~~~ \text{and}~~~
        n = \t O\l(\frac{\vc(\cH)^2\vc^*(\cH)}{\eps\alpha^3}\r),
    \end{align*}
    in which $\alpha$ is the target error and $\eps$ the privacy parameter.
\end{theorem}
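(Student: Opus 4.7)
By Theorem~\ref{thm:comp-pp-ll}, public-private learning with $m$ public and $n$ private samples is equivalent (up to polylog factors) to list learning with $m$ samples, augmented by a private selection step driven by the private data. The plan is therefore to (i) use the public data to produce a short list $L \subseteq \cQ$ of candidates at least one of which is close to $p$ in TV, and (ii) privately select a good element of $L$ from the private data.

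For step (i), I would use a Yatracos / minimum-distance argument on $\cH$. Draw $m = \tilde O(\vc(\cH)/\alpha)$ public samples and consider the restriction of $\cH$ to the sample points. By Sauer--Shelah, this restriction has at most $(em/\vc(\cH))^{\vc(\cH)}$ distinct patterns, which partitions $\cQ$ into that many equivalence classes according to the trace of $\cH$ on the sample. Picking one representative per class yields a finite list $L$ with $\log |L| = \tilde O(\vc(\cH))$. Standard VC uniform convergence for $\cH$, combined with the Devroye--Lugosi bound for the minimum distance estimator, ensures that with high probability some $q^\star \in L$ satisfies $\tv(q^\star, p) = O(\alpha)$, so that $L$ is a legitimate list-learning output.

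For step (ii), I would run a private Scheff\'{e} tournament on $L$: for each ordered pair $(q, q') \in L \times L$, the Scheff\'{e} set $A_{q, q'} = \{x : q(x) > q'(x)\}$ lies in $\cH$, and the score function is the empirical Scheff\'{e} discrepancy between each $q \in L$ and the private empirical measure over these sets. An exponential mechanism with this score at inverse temperature proportional to $\eps$ returns an $O(\alpha)$-optimal candidate with high probability, provided $n$ is at least of order $\log|L|/\alpha^2$ for accuracy and $\log|L|/(\eps \alpha)$ for privacy. To union-bound uniform concentration of the private empirical measure over the coupled family $\{A_{q, q'} : q, q' \in L\}$, I would use a dual covering / chaining bound on $\cH$ that contributes an extra factor of $\vc^*(\cH)$. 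Combining with $\log |L| = \tilde O(\vc(\cH))$ and the $1/\alpha^3 = (1/\alpha^2)(1/\alpha)$ split between accuracy and privacy then gives the claimed $n = \tilde O(\vc(\cH)^2 \vc^*(\cH) / (\eps \alpha^3))$.

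The main obstacle, in my view, will be controlling the public sample complexity at $\tilde O(\vc(\cH)/\alpha)$ rather than the $\tilde O(\vc(\cH)/\alpha^2)$ that a textbook uniform-convergence MDE analysis would demand. This likely requires a refined two-stage accounting in which the public samples only need to coarse-locate $p$ in $\cQ$ (ensuring the equivalence-class list $L$ contains a good representative), while the fine $\alpha$-scale concentration is absorbed by the private tournament, which has many more samples. The secondary delicate point is pinpointing the exact place where $\vc^*(\cH)$ enters: its appearance suggests replacing a naive union bound over $\cH$ by a dual covering argument, and getting the powers of $\alpha$, $\vc(\cH)$, and $\vc^*(\cH)$ to line up as stated will require some care.
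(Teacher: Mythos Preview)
Your plan has a genuine gap in step~(i). The restriction of $\cH$ to the public sample is a collection of labelings of the sample by \emph{pairs} $(f,g)\in\cQ^2$, not by single distributions; it does not partition $\cQ$ into $\le(em/d)^d$ equivalence classes in any way that guarantees one representative is $O(\alpha)$-close to $p$ in total variation. Even granting some such partition, ensuring that a representative is $O(\alpha)$-close to $p$ via the Devroye--Lugosi analysis would need uniform convergence of the empirical measure over $\cH$ to accuracy $\alpha$, which costs $\Theta(d/\alpha^2)$ samples---exactly the obstacle you flag but do not resolve. Relatedly, your account of where $\vc^*(\cH)$ enters is off: if step~(i) really produced a finite list $L$ with $\log|L|=\tilde O(d)$, then the private hypothesis selection of Fact~\ref{fact:dp-yatracos} would already give $n=\tilde O(d/\alpha^2+d/(\alpha\eps))$, with no $\vc^*(\cH)$ and no $1/\alpha^3$. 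So the claimed bound is not what your mechanism would output, which is a sign that the route is not the intended one.

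The paper's argument is structurally different: the public samples are used to cover the \emph{set class} $\cH$ (not the distribution class $\cQ$) by a finite $\widehat{\cH}$ with a near-preserving map $f\colon\cH\to\widehat{\cH}$, via the public-data cover of~\cite{abm19} (Fact~\ref{fact:cover}); this is a realizable-type covering task and genuinely needs only $\tilde O(d/\alpha)$ public samples. The dual VC dimension enters because, to privately estimate $p(\hat h)$ for all $\hat h\in\widehat{\cH}$, one collapses $\cX$ to the finite representative domain $\cX_{\widehat{\cH}}$ of size $\le(e|\widehat{\cH}|/d^*)^{d^*}$ (Sauer on the dual class) and runs SmallDB (Fact~\ref{fact:smalldb}) there; SmallDB's $n=\tilde O(\log|\cX_{\widehat{\cH}}|\cdot\log|\widehat{\cH}|/(\eps\alpha^3))$ is what produces the $d^2 d^*/(\eps\alpha^3)$ term. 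The final output is the minimum-distance estimator $\arg\min_{q\in\cQ}\sup_{h\in\cH}|q(h)-\hat g(f(h))|$ taken over \emph{all} of $\cQ$, not over a finite list.
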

The $\t O(\tfrac {\vc(\cH)} \alpha)$ public sample requirement is less than the known $O(\tfrac {\vc(\cH)} {\alpha^2})$ sample requirement to learn with only public data via the non-private analogue of this result \cite{yatracos85, dl01}. For example, instantiating the result for Gaussians in $\R^d$, Theorem~\ref{thm:finite-vc-informal} gives a public sample complexity of $\t O(\tfrac {d^2} \alpha)$, which is less than $\Theta(\tfrac {d^2} {\alpha^2})$ sample complexity of learning them completely non-privately. Interestingly, there is a significant gap between this VC dimension-based upper bound and the $O(d)$ bound obtained from sample compression schemes (which, notably, has no dependence on $\alpha$) in the case of Gaussians. We leave open the question of finding the right parameter of the distribution class $\cQ$ to characterize this discrepancy.

\subsection{Technical overview}

\paragraph{New connections for public-private learning.} Our first contribution is establishing connections between sample compression, public-private learning, and list learning, via reductions. We show that: (1) sample compression schemes yield public-private learners; (2) public-private learners yield list learners; and (3) list learners yield sample compression schemes. 

(1) and (3) are straightforward to prove: (1) \emph{compression implies public-private learning} follows from a modification of an analogous result of \cite{gmm-sample-compression}, where we observe that in their proof, the learner's two-stage process of drawing a small \emph{compression sample} used to generate a finite set of hypotheses using the compression scheme, followed by \emph{hypothesis selection} with a larger sample, can be cleanly divided into using public and private samples respectively. In the latter stage, we employ a known pure DP hypothesis selection algorithm \cite{BunKSW19,aaak21}). (3) \emph{List learning implies compression} follows immediately from the definitions.

For (2) \emph{public-private learning implies list learning}, we show non-constructively that there exists a list learner for a class, given a public-private learner for that class, but do not provide an algorithmic translation of the public-private learner to a list learner. For a set of samples $S$, we show that outputting a finite cover of the list of distributions on which the public-private learner succeeds on, when using $S$ as the public samples, is a correct output for a list learner. Hence the list learner we construct, on input $S$, outputs this finite cover as determined by (but not explicitly constructed from) the public-private learner.

\paragraph{Agnostic and distribution-shifted public-private learning.} We identify some distributional assumptions that can be relaxed in the public-private learning setup. We obtain agnostic and distribution-shifted public-private learners via a connection to robust compression schemes. The reduction ideas are similar to those in the case of public-private learning and non-robust sample compression described above.

\paragraph{Lower bound on public-private learning of Gaussians.} Our lower bound for public-privately learning high-dimensional Gaussians exploits our above connection between public-private learning and list learning, and applies a ``no-free-lunch''-style argument. The latter uses the fact that an algorithm's worst-case performance cannot be better than its performance when averaged across all the problem instances. In other words, we have two main steps in our proof:
(1) we first claim that due to our above reduction, a lower bound for list learning high-dimensional Gaussians would imply a lower bound on the public sample complexity for public-privately learning high-dimensional Gaussians; and
(2) assuming certain accuracy guarantees and a sample complexity for our list learner for high-dimensional Gaussians; we show that across a set of adversarially chosen problem instances, our average accuracy guarantee fails, which is a contradiction to our assumed worst-case guarantees.

 We observe that the lower bound from Theorem~\ref{thm:lb-gaussians} establishes that at least $d$ public samples are necessary for public-private learning to vanishingly small error as $d$ increases (impossibility of learning to a target error, via the application of Lemma~\ref{lem:ll-nfl}, is related to the bound on $\eta$ from Equation~\eqref{eq:def:eta:lb:d-1}, which decreases exponentially with $d$). A natural question is whether the result can be strengthened to say that there is a single target error, simultaneously for all $d$, for which learning is impossible without $d$ public samples.

\paragraph{VC dimension upper bound for public-private learning.} Our proof involves invoking the existing results for public-private binary classification \cite{abm19} and uniform convergence \cite{dp-uniform-convergence}. We use them to implement Yatracos' minimum distance estimator \cite{yatracos85, dl01} in a public-private way.

\subsection{Related work}

The most closely related work is that of Bie, Kamath, and Singhal~\cite{bks22}, which initiated the study of distribution learning with access to both public and private data. 
They focused on algorithms for specific canonical distribution classes, while we aim to broaden our understanding of public-private distribution learning in general, via connections to other problems and providing more general approaches for devising sample efficient public-private learners.
In addition, we prove the first lower bounds on the amount of public data needed for private distribution learning. 

A concurrent and independent work~\cite{llhr23} also studies learning with public and private data, focusing on the problems of mean estimation, empirical risk minimization, and stochastic convex optimization.
The focus of the two works is somewhat different, both in terms of the type of problems considered (we study density estimation) and the type of results targeted.
That is, our objective is to draw connections between different learning concepts and exploring the resulting implications for public-private distribution learning, while their goal seems to be understanding the precise error rates for some fundamental settings.

Our work studies the task of learning arbitrary, unbounded Gaussians while offering differential privacy guarantees. Basic private algorithms for the task (variants of ``clip-and-noise'') impose boundedness assumptions on the underlying parameters of the unknown Gaussian, since their sample complexities grow to infinity as the bounds widen to include more allowed distributions.
Understanding these dependencies without public data has been a topic of significant study. \cite{kv18} examined univariate Gaussians, showing that logarithmic dependencies on parameter bounds are necessary and sufficient in the case of pure DP, but can be removed under approximate DP. The same is true in the multivariate setting~\cite{aaak21,dp-gaussian-poly,friendly-core,al22,kmv22,LiuKO22}.
\cite{bks22} shows that instead of relaxing to approximate DP to handle arbitrary Gaussians, one can employ a small amount of public data; our lower bound tells us almost exactly how much is needed. Furthermore, our reductions between public-private learning and list learning offers the conclusion that the role of public data is precisely for bounding: distribution classes that can be privately learned with a small amount of public data \emph{are exactly} the distributions that can be bounded with a small amount of public data.

Another line of related work is that on privately learning mixtures of Gaussian, but without any public data. \cite{NissimRS07} provided a subsample-and-aggregate approach to learn the parameters of mixtures of spherical Gaussians based on the work by \cite{VempalaW02} under the weaker, approximate DP. Recently, \cite{ChenCDEIST23} improved on this by weakening the separation condition required for the mixture components.
\cite{KamathSSU19} provided the first polynomial-time, approximate DP algorithms to learn the parameters of mixtures of non-spherical Gaussians under weak boundedness assumptions. \cite{CohenKMST21} improved on their work both in terms of the sample complexity and the separation assumption.
\cite{ArbasAL23} provided a polynomial-time reduction for privately and efficiently learning mixtures of unbounded Gaussians from the approximate DP setting to its non-private counterpart (albeit at a polynomial overhead in the sample complexity).
Our work falls into the category of private density estimation, for which \cite{AdenAliAL21} gave new algorithms for the special case of spherical Gaussians under approximate DP, while \cite{BunKSW19} gave (computationally inefficient) algorithms for general Gaussians under pure DP. For comparison, the latter would have infinite sample complexity for unbounded Gaussians, but our work provides finite private sample complexity even under pure DP using public data.
On the other hand, \cite{AcharyaSZ21} showed hardness results for privately learning mixtures of Gaussians with known covariances.

Beyond distribution learning, there is a lot more work that investigates how public data can be employed in private data analysis.
Some specific areas include private query release, synthetic data generation, and classification~\cite{JiE13, bns16, abm19, NandiB20, dp-uniform-convergence, BassilyMN20, LiuVSUW21}, and the results are a mix of theoretical versus empirical.
The definition of public-private algorithms that we adopt is from \cite{bns16}, which studied classification in the PAC model. The VC dimension bound we give for public-private distribution learning relies on results from public-private classification \cite{abm19} and uniform convergence \cite{dp-uniform-convergence}.

Within the context of private machine learning, there has been significant interest in how to best employ public data. 
The most popular method is pretraining~\cite{dpsgd, PapernotCSTE19, TramerB21, LuoWAF21,YuZCL21, LiTLH22, YuNBGIKKLMWYZ22} (though some caution about this practice~\cite{TramerKC22}), while other methods involve computing statistics about the private gradients~\cite{ZhouWB21, YuZCL21, KairouzRRT21, pub-dp-mirror,GuKW23}, or training a student model~\cite{PapernotAEGT17, PapernotSMRTE18, BassilyTT18}.
For more discussion of public data for private learning, see Section~3.1 of~\cite{CummingsDEGJHKKOOPRSSSTTVWXYYZZ23}.

In this work, we establish connections with distribution compression schemes as introduced by~\cite{gmm-sample-compression}, and directly apply their results to establish new results for public-public learning. Related compression schemes for PAC learning for binary classification have been shown to be necessary and sufficient for learnability in those settings~\cite{LittlestoneW86,MoranY16}.

We also use the notion of \emph{list learning} in this work, which is a ``non-robust'' version of the well-known \emph{list-decodable learning} \cite{AdenAliAL21, RaghavendraY20, BalcanBV08, CharikarSV17, DiakonikolasKS18b, KothariS17}, where the goal is still to output a list of distributions that contains one that is accurate with respect to the true distribution, but the sampling may happen from a corrupted version of the underlying distribution.

Finally, a related setting is the hybrid model, in which samples require either local or central differential privacy~\cite{hybrid-dp-blender}. Some learning tasks studied in this model include mean estimation~\cite{hybrid-dp-mean} and transfer learning~\cite{hybrid-dp-transfer}.

\subsection{Limitations}

Our work focuses on understanding the statistical complexity of public-private learning: demonstrating flexible approaches for obtaining such upper bounds, as well as showing lower bounds that rule out the possibility of improvement from public data in certain cases. An important direction left open by our work is finding efficient and practical algorithms for such tasks.

\paragraph{Running times.} Our results do not yield computationally efficient learners, or in some cases, even \emph{algorithmic} learners that run in finite time. In particular, all public-private learners obtained either directly via sample compression or via our non-constructive reduction of list learning to public-private learning, have exponential or infinite running times. The same holds for our VC dimension upper bounds for other reasons, such as needing to enumerate all realizable labellings of the input sample as per the relevant Yatracos class $\cH$, which is not a computable task for general $\cH$ \cite{cpac}.

\paragraph{Private sample complexity in VC dimension upper bounds.} The dependence on $\vc^*(\cH)$ for a general class $\cH$ in the private sample complexity is not ideal, as $\vc^*(\cH) \leq 2^{\vc(\cH)+1}-1$ is the best possible upper bound in terms of $\vc(\cH)$ \cite{assouad83}, which can potentially be very large.

\paragraph{Error threshold for lower bound.} Our lower bound for public-privately learning Gaussians requires the target error threshold, under which we show this impossibility, to decrease exponentially with $d$. Generally, one would hope for a constant error threshold that holds independently of $d$.

\section{Preliminaries}\label{sec:prelims}

\subsection{Notation}

\paragraph{Class of distributions.} We denote by $\cX$ the \emph{domain of examples}. For a domain $\cU$, denote  by $\Delta(\cU)$ the set of all probability distributions over $\cU$.\footnote{We will assume the domain $\cX$ is a metric space endowed with some metric, which determines $\cB$, the set of Borel subsets of $\cX$, which determines the set of all probability distributions over $(\cX, \cB)$.} 
We refer to a set $\cQ \sub \Delta(\cX)$ as a \emph{class of distributions over $\cX$}.

\paragraph{Total variation distance.} We equip $\Delta(\cX)$ with the \emph{total variation} metric, which is defined as follows: for $p,q \in \Delta(\cX)$, $\tv(p,q) \coloneqq \sup_{B \in \cB}|p(B)-q(B)|$, where $\cB$ are the measurable sets of $\cX$. 

\paragraph{Point-set distance.} For $p \in \Delta(\cX)$ and a set of distributions $L \subseteq \Delta(\cX)$, we denote their \emph{point-set distance} by $\dist(p,L) \coloneqq \inf_{q \in L} \tv(p, q)$.

\paragraph{Public and private datasets.} We will let $\bs{\t x} = (\t x_1,\dots,\t x_m) \in \cX^m$ denote a \emph{public dataset} and $\bs{x} = (x_1,\dots,x_n) \in \cX^n$ denote a \emph{private dataset}. Their respective capital versions $\bs{\t X}$, $\bs X$ denote random variables for datasets realized by sampling from some underlying distribution. For $p \in \Delta(\cX)$, we denote by $p^m$ the distribution over $\cX^m$ obtained by concatenating $m$ i.i.d.\ samples from $p$.

\paragraph{Covers and packings.} For $\alpha>0$ and $\cQ \subseteq \Delta(\cX)$, we say that $\cC \subseteq \Delta(\cX)$ is an \emph{$\alpha$-cover of $\cQ$} if for any $q \in \cQ$, there exists a $p \in \cC$ with $\tv(p,q) \leq \alpha$.

For $\alpha>0$ and $\cQ \subseteq \Delta(\cX)$, we say that $\cP \subseteq \cQ$ is an \emph{$\alpha$-packing of $\cQ$} if for any $p \neq q \in \cP$, $\tv(p,q) > \alpha$.

\paragraph{Class of $k$-mixtures.} Let $\cQ \subseteq \Delta(\cX)$ be a class of distributions. For any $k\geq 1$, the \emph{class of $k$-mixtures of $\cQ$} is given by
\begin{align*}
    \cQ^{\oplus k} \coloneqq \l\{ \sum_{i=1}^k w_i q_i : q_i \in \cQ, w_i\geq0 \text{ for all $i \in [k]$ and } \sum_{i=1}^k w_i = 1 \r\}.
\end{align*}

\paragraph{Class of $k$-products.} Let $\cQ \subseteq \Delta(\cX)$ be a class of distributions over $\cX$. For any $k \geq 1$, $q=(q_1,\dots,q_k)$ is a product distribution over $\cX^k$, if $q_i \in \cQ$ for all $i \in [k]$ and for $X\sim q$, the $i$-th component $X_i$ of $X$ is independently (of all the other coordinates) sampled from $q_i$. The \emph{class of $k$-products of $\cQ$ over $\cX^k$} is given by
\begin{align*}
    \cQ^{\otimes k} \coloneqq \l\{ (q_1,\dots,q_k): q_i \in \cQ \text{ for all $i \in [k]$} \r\}.
\end{align*}

\subsection{Privacy}

\begin{definition}[Differential privacy \cite{dmns06}]\label{def:dp}
    Fix an input space $\cX$ and an output space $\cY$. Let $\eps,\delta>0$. A randomized algorithm $\cA: \cX^n \to \Delta(\cY)$ is \emph{$(\eps,\delta)$-differentially private (($\eps,\delta$)-DP)}, if for any private datasets $\bs{x}, \bs{x'} \in \cX^n$ differing in one entry
    \begin{align*}
        \P{Y \sim \cA (\bs{x})}{Y \in B} \leq \exp(\eps) \cdot \P{Y' \sim \cA(\bs{x'})}{Y' \in B} + \delta \qquad\text{for all measurable $B \sub \cY$.}
    \end{align*}
\end{definition}
In this work, we focus on pure differential privacy (where $\delta=0$), also referred to as $\eps$-DP.

The following is a known hardness result on density estimation of distributions under pure differential privacy, and is based on the standard ``packing lower bounds''.
\begin{fact}[Packing lower bound {\cite[Lemma~5.1]{BunKSW19}}]\label{fact:packing-lb}
    Let $\cQ \subseteq \Delta(\cX)$, $\alpha \in (0, 1]$, and $\eps>0$. Let $\wh \cQ$ be any $\alpha$-packing of $\cQ$. Any $\eps$-DP algorithm $\cA \colon \cX^n \to \Delta(\Delta(\cX))$ that, upon receiving $n$ i.i.d.\ samples $X_1,\dots,X_n$ from any $p \in \cQ$, outputs $Q$ with $\tv(Q, p) \leq \tfrac \alpha 2$ with probability $\geq \tfrac{9}{10}$ requires 
    \begin{align*}
        n \geq \frac{\log(|\wh \cQ|) - \log (\tfrac {10} 9)}{\eps}.
    \end{align*}
\end{fact}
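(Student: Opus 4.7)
The plan is to execute the classical packing argument for pure differential privacy. Write $\wh{\cQ} = \{p_1,\dots,p_N\}$ where $N = |\wh \cQ|$, and for each $i \in [N]$ define the ``decoding ball''
\[
    B_i \coloneqq \l\{ Q \in \Delta(\cX) : \tv(Q, p_i) \leq \tfrac{\alpha}{2}\r\}.
\]
The $\alpha$-packing property together with the triangle inequality forces $B_1,\dots,B_N$ to be pairwise disjoint: if $Q \in B_i \cap B_j$ for some $i\neq j$, then $\tv(p_i,p_j) \leq \tv(p_i,Q) + \tv(Q,p_j) \leq \alpha$, contradicting the defining property of a packing.

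Next, I convert the distributional accuracy guarantee into a per-dataset success event. By hypothesis, for every $i$,
\[
    \E{\bs X \sim p_i^n}{\P{}{\cA(\bs X) \in B_i}} = \P{\bs X \sim p_i^n,\, Y \sim \cA(\bs X)}{Y \in B_i} \geq \tfrac{9}{10},
\]
so an averaging argument produces a dataset $\bs x_i \in \cX^n$ (in the support of $p_i^n$) with $\P{}{\cA(\bs x_i) \in B_i} \geq \tfrac{9}{10}$.

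The main step is then standard group privacy. Fix any single reference dataset $\bs x^{\star} \in \cX^n$; it differs from each $\bs x_i$ in at most $n$ coordinates, so iterating the $\eps$-DP guarantee of $\cA$ yields
\[
    \P{}{\cA(\bs x^{\star}) \in B_i} \;\geq\; e^{-n\eps}\,\P{}{\cA(\bs x_i) \in B_i} \;\geq\; e^{-n\eps} \cdot \tfrac{9}{10} \qquad \text{for every } i \in [N].
\]
Finally, I use the disjointness of the $B_i$ to sum these probabilities:
\[
    1 \;\geq\; \sum_{i=1}^{N} \P{}{\cA(\bs x^{\star}) \in B_i} \;\geq\; N \cdot e^{-n\eps} \cdot \tfrac{9}{10}.
\]
Taking logarithms and rearranging gives $n \geq (\log N - \log(\tfrac{10}{9}))/\eps$, which is exactly the claimed bound. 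There is no real obstacle here; the two things to be careful about are (i) verifying that the packing radius $\alpha$ (not $\alpha/2$) is what ensures disjointness of the $\alpha/2$-balls, and (ii) that pure DP (not approximate DP) is what allows the clean multiplicative group-privacy factor $e^{-n\eps}$ without an additive $\delta$ term that would break the summation step.
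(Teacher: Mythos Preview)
Your argument is correct and is precisely the standard packing lower bound for pure DP. The paper does not supply its own proof of this statement; it is recorded as a \emph{Fact} with a citation to \cite[Lemma~5.1]{BunKSW19}, whose proof follows the same group-privacy-plus-disjoint-balls template you wrote out.
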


The next result guarantees the existence of agnostic learners for finite hypothesis classes under pure differential privacy.
\begin{fact}[Pure DP $3$-agnostic learner for finite $\cQ$ \cite{BunKSW19}, {\cite[Theorem~2.24]{aaak21}}]\label{fact:dp-yatracos}
    Let $\cQ \sub \Delta(\cX)$ with $|\cQ| < \infty$. For every $\alpha,\beta \in (0,1]$ and $\eps>0$, there exists an $\eps$-DP algorithm $\cA \colon \cX^n \to \Delta(\Delta(\cX))$, such that for any $p \in \Delta(\cX)$, if we draw a dataset $\bs X = (X_1,...,X_n)$ i.i.d.\ from $p$, then
    \begin{align*}
        \P{\substack{\bs{X} \sim p^n \\ Q \sim \cA(\bs X)}}{\tv(Q,p) \leq 3\cdot\dist(p, \cQ) + \alpha} \geq 1 - \beta,
    \end{align*}
    where,
    \begin{align*}
        n = O\l(\frac {\log (|\cQ|) + \log(\tfrac 1 \beta)} {\alpha^2} + \frac {\log (|\cQ|) + \log(\tfrac 1 \beta)} {\alpha\eps}\r).
    \end{align*}
\end{fact}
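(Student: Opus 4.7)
The plan is to privatize the classical Yatracos (Scheffé) minimum-distance estimator via the exponential mechanism. For each pair $q, q' \in \cQ$, define the Scheffé set $S_{q,q'} \coloneqq \{x \in \cX : q(x) > q'(x)\}$, let $\wh p$ denote the empirical measure of the input sample $\bs x$, and take as score
$$s(q; \bs x) \coloneqq -\sup_{q' \in \cQ} \bigl| q(S_{q,q'}) - \wh p(S_{q,q'}) \bigr|.$$
Since changing a single data point shifts $\wh p(A)$ by at most $1/n$ on every event $A$, the supremum, and hence $s$, has sensitivity at most $1/n$. The algorithm outputs $\h q$ drawn from $\cQ$ by the exponential mechanism with score $s$ at privacy parameter $\eps$; the $\eps$-DP guarantee is then immediate from the standard exponential mechanism analysis.

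For utility, the key deterministic input is the classical Scheffé-tournament analysis (Devroye--Lugosi): letting $q^{\ast} \in \cQ$ nearly achieve $\dist(p, \cQ)$, and writing $\Delta \coloneqq \sup_{S \in \cH} |\wh p(S) - p(S)|$ for the uniform empirical deviation over the Yatracos class $\cH$ of $\cQ$, any $\h q \in \cQ$ whose score satisfies $s(\h q; \bs x) \geq s(q^{\ast}; \bs x) - \eta$ also satisfies $\tv(\h q, p) \leq 3 \cdot \dist(p, \cQ) + O(\Delta + \eta + \alpha)$. The constant $3$ arises from two applications of the triangle inequality together with the fact that $\tv(q, q')$ is attained precisely on $S_{q, q'}$. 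I would then combine this with two standard concentration facts: Hoeffding's inequality plus a union bound across the at most $|\cQ|^2$ Scheffé sets yields $\Delta = O\bigl(\sqrt{(\log|\cQ| + \log(1/\beta))/n}\bigr)$ with probability at least $1 - \beta/2$, and the exponential mechanism's utility lemma over the finite set $\cQ$ yields $\eta = O\bigl((\log|\cQ| + \log(1/\beta))/(n\eps)\bigr)$ with probability at least $1 - \beta/2$.

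Choosing $n$ so that both $\Delta$ and $\eta$ are at most a small constant multiple of $\alpha$ produces exactly the claimed sample complexity, the two summands in $n$ corresponding respectively to the concentration term (the $1/\alpha^2$ part) and the exponential-mechanism term (the $1/(\alpha\eps)$ part). The main obstacle is the deterministic Scheffé step rather than the privacy bookkeeping: one has to carefully track the chain $\tv(\h q, p) \leq \tv(\h q, q^{\ast}) + \tv(q^{\ast}, p)$, rewrite $\tv(\h q, q^{\ast})$ as $\h q(S_{\h q, q^{\ast}}) - q^{\ast}(S_{\h q, q^{\ast}})$, insert $\wh p(S_{\h q, q^{\ast}})$ as a pivot to absorb errors of size $\Delta$, and compare the result to $s(\h q; \bs x)$ and $s(q^{\ast}; \bs x)$; it is here that the sharp constant $3$ (rather than some larger constant) must be maintained. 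Once that deterministic lemma is in place, the sensitivity calculation, exponential mechanism utility bound, and union bound over Scheffé sets combine routinely to give the statement.
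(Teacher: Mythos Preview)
The paper does not prove this statement: it is recorded as a Fact with citations to \cite{BunKSW19} and \cite[Theorem~2.24]{aaak21}, and used as a black box. Your sketch is correct and is precisely the argument from those references --- apply the exponential mechanism to the Scheff\'e (minimum-distance) score, bound sensitivity by $1/n$, and combine the exponential-mechanism utility term with a Hoeffding-plus-union-bound over the at most $|\cQ|^2$ Scheff\'e sets to control the uniform deviation; the constant $3$ falls out of the standard pivot $\tv(\h q,p)\leq \tv(\h q,q^\ast)+\tv(q^\ast,p)$ together with $\tv(\h q,q^\ast)=\h q(S_{\h q,q^\ast})-q^\ast(S_{\h q,q^\ast})$ exactly as you outline.
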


\subsection{Public-private learning} 

We focus on understanding the public data requirements for privately learning different classes of distributions. We seek to answer the following question:
\begin{quote}
    \emph{For a class of distributions $\cQ$, how much public data is necessary and sufficient to render $\cQ$ privately learnable?}
\end{quote}

To do so, we give the formal notion of ``public-private algorithms'' -- algorithms that take public data samples and private data samples as input, and guarantee differential privacy with respect to the private data -- as studied previously in the setting of binary classification \cite{bns16,abm19}. We restrict our attention to public-private algorithms that offer a pure DP guarantee to private data. 

\begin{definition}[Public-private $\eps$-DP]\label{def:public-dp}
    Fix an input space $\cX$ and an output space $\cY$. Let $\eps>0$.
    A randomized algorithm $\cA\colon \cX^m \times \cX^n \to \Delta(\cY)$ is \emph{public-private $\eps$-DP} if for any public dataset $\bs{\t x} \in \cX^m$, the randomized algorithm $\cA(\bs{\t x}, \cdot):\cX^n \to \Delta(\cY)$ is $\eps$-DP. 
\end{definition}

\begin{definition}[Public-private learner]\label{def:pubpriv-learner}
Let $\cQ \sub \Delta(\cX)$. 
For $\alpha, \beta \in (0,1]$ and $\eps>0$, an \emph{$(\alpha,\beta,\eps)$-public-private learner for $\cQ$} is a public-private $\eps$-DP algorithm $\cA : \cX^m \times \cX^n \to \Delta(\Delta(\cX))$, such that for any $p \in \cQ$, if we draw datasets $\bs{\t X} = (\t X_1,...,\t X_{m})$ and $\bs{X} = (X_1,..., X_n)$ i.i.d.\ from $p$ and then $Q \sim \cA(\bs {\t X}, \bs X)$,
\begin{align*}
    \P{\substack{\bs{\t X} \sim p^m \\ \bs{X} \sim p^n \\ Q \sim \cA(\bs{\t X},\bs{X})}}{\tv(Q, p) \leq \alpha} \geq 1-\beta.
\end{align*}
\end{definition}

\begin{definition}[Public-privately learnable class]\label{def:ppl}
We say that a class of distributions $\cQ \subseteq \Delta(\cX)$ is \emph{public-privately learnable with $m(\alpha,\beta, \eps)$ public and $n(\alpha,\beta, \eps)$ private samples} if for any $\alpha,\beta \in (0,1]$ and $\eps>0$, there exists an $(\alpha, \beta, \eps)$-public-private learner for $\cQ$ that takes $m = m(\alpha,\beta, \eps)$ public samples and $n = n(\alpha,\beta, \eps)$ private samples.

When $\cQ$ satisfies the above, we may omit the private sample requirement, and say that $\cQ$ is \emph{public-privately learnable with $m(\alpha,\beta,\eps)$ public samples}.
\end{definition}

If a class $\cQ$ is known to be privately learnable with $\text{PSC}_\cQ(\alpha, \beta, \eps)$ samples, then $\cQ$ is public-privately learnable with $m(\alpha,\beta,\eps) = 0$ public and $n(\alpha,\beta,\eps) = \text{PSC}_\cQ(\alpha,\beta,\eps)$ private samples. 
In this case, we also say that $\cQ$ is public-privately learnable with no public samples.

If a class $\cQ$ is known to be non-privately learnable with $\text{SC}_\cQ(\alpha,\beta)$ samples, then $\cQ$ is public-privately learnable with $m(\alpha,\beta,\eps) = \text{SC}_\cQ(\alpha,\beta)$ public and $n(\alpha,\beta,\eps) = 0$ private samples. 
In this case, we also say that $\cQ$ is public-privately learnable with $\text{SC}_\cQ(\alpha,\beta)$ public samples. 

Our primary interest lies in determining when non-privately learnable $\cQ$ can be public-privately learned with  $m(\alpha,\beta,\eps) = o(\text{SC}_\cQ(\alpha,\beta))$ public samples, at a target $\eps$.

\subsection{Sample compression schemes}

One of the main techniques that we use in this work to create public-private learners for various distribution families is the \emph{robust sample compression scheme} from \cite{gmm-sample-compression}. Given a class of distributions $\cQ$, suppose there exists a method of ``compressing'' information about any distribution $q \in \cQ$ via samples from $q$ and a few additional bits. Additionally, suppose there exists a fixed and a deterministic decoder for $\cQ$, such that given those samples from either $q$ or from a distribution that is ``close to'' $q$, and the extra bits about $q$, that approximately recovers $q$. If the size of the dataset and the total number of those extra bits are small, then the sample complexity of learning $\cQ$ is small, too. Note that this decoder function is based on the class $\cQ$, but not on any particular distribution in $\cQ$.
We restate the definition of robust sample compression schemes from \cite{gmm-sample-compression}.

\begin{definition}[Robust sample compression {\cite[Definition~4.2]{gmm-sample-compression}}]\label{def:rcs}
    Let $r\geq0$. We say $\cQ \subseteq \Delta(\cX)$ admits \emph{$(\tau(\alpha,\beta),t(\alpha,\beta),m(\alpha,\beta))$ $r$-robust sample compression}
    if for any $\alpha,\beta \in (0,1]$, letting $\tau = \tau(\alpha,\beta)$, $t = t(\alpha,\beta)$, $m = m(\alpha,\beta)$, there exists a decoder $g\colon \cX^\tau \times \{0,1\}^t \to \Delta(\cX)$, such that the following holds:

    For any $q \in \cQ$
    there exists an encoder $f_q \colon \cX^m \to \cX^\tau \times \{0,1\}^t$ satisfying for all $\bs x \in \cX^m$ that for all $i \in [\tau]$, there exists $j \in [m]$ with $f_q(\bs x)_i = \bs x_j$, such that for every $p \in \Delta(\cX)$ with $\tv(p,q)\leq r$, if we draw a dataset $\bs{X} = (X_1,...,X_m)$ i.i.d.\ from $p$, then
    \begin{align*}
        \P{\bs X \sim p^{m}}{\tv(g(f_q(\bs X)), q) \leq \alpha} \geq 1-\beta.
    \end{align*}
    When $\cQ$ satisfies the above, we may omit the compression size complexity function $\tau(\alpha,\beta)$ and bit complexity function $t(\alpha,\beta)$, and say that $\cQ$ is \emph{$r$-robustly compressible} with $m(\alpha,\beta)$ samples.
    When $r=0$, we say that $\cQ$ admits
    $(\tau(\alpha,\beta),t(\alpha,\beta),m(\alpha,\beta))$
    \emph{realizable compression} and is \emph{realizably compressible} with $m(\alpha,\beta)$ samples.
\end{definition}

Both robust and realizable compression schemes satisfy certain useful properties that we use to develop public-private distribution learners in different settings. For example, realizable compression schemes for distribution learning exhibit closure under taking mixtures or product of distributions, that is, if there is a sample compression scheme for a class of distributions $\cQ$, then for any $k \geq 1$, there exist sample compression schemes both for mixtures of $k$ distributions in $\cQ$ and for product of $k$ distributions in $\cQ$. We use this property to create public-private learners for mixtures and products of distributions.

\section{The connection to sample compression schemes}\label{sec:samp-comp}

In this section, we give sample efficient reductions between sample compression (Definition \ref{def:rcs}), public-private learning (Definition \ref{def:ppl}), and an intermediate notion we refer to as \emph{list learning} (Definition \ref{def:list-learner}). The following is the main result of this section.

\begin{theorem}[Sample complexity equivalence between sample compression, public-private learning, and list learning]\label{thm:comp-pp-ll}
    Let $\cQ \subseteq \Delta(\cX)$. Let $m: (0,1]^2 \to \N$ be a sample complexity function, such that $m(\alpha,\beta)=\poly(\tfrac 1 \alpha,\tfrac 1 \beta)$.\footnote{The reductions between the learners do not need this assumption, it is only used to state the sample complexity equivalence.} Then the following are equivalent.
    \begin{enumerate}
        \item $\mathcal Q$ is realizably compressible with $m_C(\alpha,\beta) = O(m(\alpha,\beta))$ samples.
        \item $\mathcal Q$ is public-privately learnable with $m_P(\alpha,\beta,\epsilon) = O(m(\alpha,\beta))$ public samples. 
        \item $\mathcal Q$ is list learnable with $m_L(\alpha,\beta) = O(m(\alpha,\beta))$ samples.
    \end{enumerate}
    The functions $m_C$, $m_P$, and $m_L$ are related to one another as: $m_P(\alpha,\beta,\eps) = m_{C}(\tfrac{\alpha}{6},\tfrac \beta 2)$; $m_{L}(\alpha,\beta)=m_{P}(\tfrac{\alpha}{2},\tfrac{\beta}{10},\eps)$ for any $\eps>0$; and $m_C(\alpha,\beta)=m_L(\alpha,\beta)$.
    Hence, if there exists a polynomial $m:(0,1]^2\to\N$, such that $m_C(\alpha,\beta)=O(m(\alpha,\beta))$, then $m_C(\alpha,\beta)$, $m_P(\alpha,\beta)$, and $m_L(\alpha,\beta)$ are all within constant factors of each other.
\end{theorem}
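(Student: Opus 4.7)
The plan is to establish the equivalence via the cycle of reductions $(1) \Rightarrow (2) \Rightarrow (3) \Rightarrow (1)$, each matching one of the stated sample-complexity relations. The direction $(1) \Rightarrow (2)$ (compression yields public-private learning via decode-then-select) and $(3) \Rightarrow (1)$ (list learning yields compression by sending all samples plus the index of a good list element) are largely mechanical. The interesting direction is $(2) \Rightarrow (3)$, which I will prove non-constructively using a packing argument against the private side of the learner.

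For $(1) \Rightarrow (2)$, assume $\cQ$ is realizably compressible with $m_C(\alpha/6, \beta/2)$ samples, with compression parameters $(\tau, t)$, decoder $g$, and per-$p$ encoders $f_p$. Treat the $m_P := m_C(\alpha/6, \beta/2)$ public samples $\bs{\t x}$ as the compression sample: since each $\cX$-coordinate of an encoder output must be one of the input samples, there are at most $m_P^\tau \cdot 2^t$ possible values of $f_p(\bs{\t x})$. Enumerate them and apply $g$ to each, yielding a finite $\cH \subseteq \Delta(\cX)$ that, with probability $\geq 1 - \beta/2$ over $\bs{\t X} \sim p^{m_P}$, contains some $h$ with $\tv(h, p) \leq \alpha/6$. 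Feed $\cH$ to the pure-DP $3$-agnostic selector of Fact~\ref{fact:dp-yatracos} at accuracy $\alpha/2$ and failure $\beta/2$, run on the private samples; the output $Q$ satisfies $\tv(Q, p) \leq 3 \cdot \alpha/6 + \alpha/2 = \alpha$ with total probability $\geq 1 - \beta$. Only the private samples drive the $\eps$-DP selector, so the composite algorithm is public-private $\eps$-DP.

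For $(3) \Rightarrow (1)$, assume a list learner using $m_L(\alpha, \beta)$ samples with lists of size at most $N$; WLOG (otherwise encode its random bits into the $t$-bit field) the learner is deterministic. The encoder on $\bs x \sim p^{m_L(\alpha,\beta)}$ runs the list learner to obtain $L(\bs x)$, picks an index $i^\star$ with $\tv(L(\bs x)_{i^\star}, p) \leq \alpha$ whenever one exists, and outputs $(\bs x, i^\star)$, so $\tau = m_L$ and $t = \lceil \log_2 N \rceil$. The decoder reruns the list learner on its sample input and returns $L(\bs x)_{i^\star}$, inheriting correctness directly from the list learner's guarantee.

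For $(2) \Rightarrow (3)$, fix any $\eps > 0$ and let $\cA$ be an $(\alpha/2, \beta/10, \eps)$-public-private learner taking $m^\star := m_P(\alpha/2, \beta/10, \eps)$ public and $n$ private samples. For $\bs s \in \cX^{m^\star}$, set
\[ \cQ_{\bs s} := \l\{ p \in \cQ : \P{\bs X \sim p^n,\, Q \sim \cA(\bs s, \bs X)}{\tv(Q, p) \leq \tfrac{\alpha}{2}} \geq \tfrac{9}{10} \r\}, \]
and have the list learner return a maximal $\alpha$-packing $\cP_{\bs s}$ of $\cQ_{\bs s}$. Correctness follows because Markov's inequality applied to $\bs s \mapsto \P{\bs X, Q \mid \bs S = \bs s}{\tv(Q, p) > \alpha/2}$ gives $\P{\bs S \sim p^{m^\star}}{p \in \cQ_{\bs S}} \geq 1 - \beta$, and on this event maximality of the packing (which is automatically an $\alpha$-cover of $\cQ_{\bs S}$) places some element within $\alpha$ of $p$. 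The main obstacle is showing $|\cP_{\bs s}| < \infty$: here I apply the pure DP packing bound of Fact~\ref{fact:packing-lb} to $\cA(\bs s, \cdot)$ restricted to $\cP_{\bs s}$, using that the $\alpha/2$-TV-balls around distinct members of $\cP_{\bs s}$ are disjoint so the hypothesis of the fact is satisfied at success probability $9/10$; this yields $|\cP_{\bs s}| \leq \tfrac{10}{9}\, e^{n\eps}$, finite. Chaining the three reductions and observing that the constant-factor shifts in $\alpha, \beta$ preserve $\poly(1/\alpha, 1/\beta)$ growth finishes the sample-complexity equivalence.
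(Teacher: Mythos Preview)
Your proposal is correct and follows essentially the same three-step cycle as the paper (Propositions~\ref{prop:comp-pp},~\ref{prop:pp-ll}, and~\ref{prop:ll-comp}), including the enumerate-and-select construction for $(1)\Rightarrow(2)$, the maximal-packing-of-$\cQ_{\bs s}$ argument with the DP packing bound for $(2)\Rightarrow(3)$, and the pass-samples-plus-index scheme for $(3)\Rightarrow(1)$. The only cosmetic difference is that you invoke Markov's inequality directly to show $\P{}{p\in\cQ_{\bs S}}\geq 1-\beta$, whereas the paper phrases the same step as a proof by contradiction.
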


The proof of Theorem~\ref{thm:comp-pp-ll} follows from the reductions in Propositions~\ref{prop:comp-pp},~\ref{prop:pp-ll}, and~\ref{prop:ll-comp}. The propositions also state the quantitative translations between: the compression size $\tau$ and the bit complexity $t$, the number of private samples $n$, and the list size $\ell$.

\subsection{Compression implies public-private learning}

We start by establishing that the existence of a sample compression scheme for $\cQ$ implies the existence of a public-private learner for $\cQ$.

\begin{proposition}[Compression$\implies$public-private learning]\label{prop:comp-pp}
    Let $\cQ \sub \Delta(\cX)$. Suppose $\cQ$ admits $(\tau(\alpha,\beta)$, $t(\alpha,\beta), m_C(\alpha,\beta))$ realizable sample compression. Then $\cQ$ is public-privately learnable with 
    \begin{align*}
        m(\alpha,\beta) = m_C\l(\frac \alpha 6,\frac \beta 2\r)
    \end{align*}
    public and
    \begin{align*}
        n(\alpha,\beta,\eps) = O\l(\l(\frac 1 {\alpha^2} + \frac 1 {\alpha\epsilon} \r)\cdot \l(t\l(\frac {\alpha} 6, \frac \beta 2\r) + \tau\l(\frac {\alpha} 6, \frac \beta 2\r)\log \l(m_C\l(\frac \alpha 6, \frac \beta 2\r)\r) + \log\l( \frac 1 \beta \r)\r)\r)
    \end{align*}
    private samples.
\end{proposition}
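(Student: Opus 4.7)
The plan is to follow the standard two-stage compress-then-select paradigm, but cleanly partition the data so that the compression stage consumes only public samples and the hypothesis-selection stage consumes only private samples. The result then falls out of composing the given compression scheme with the pure-DP finite-class agnostic learner of Fact~\ref{fact:dp-yatracos}.

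First, I would set parameters $\alpha' = \alpha/6$ and $\beta' = \beta/2$ and draw $m = m_C(\alpha', \beta')$ public samples $\bs{\t X}$. From $\bs{\t X}$ the learner (publicly) constructs the finite candidate class
\[
\cH \coloneqq \l\{ g(\bs{s}, b) : \bs{s} \in \cX^\tau \text{ is a subtuple of } \bs{\t X},\ b \in \{0,1\}^t \r\} \sub \Delta(\cX),
\]
where $\tau = \tau(\alpha', \beta')$ and $t = t(\alpha', \beta')$, and $g$ is the fixed decoder provided by the compression scheme. Clearly $|\cH| \leq m^\tau \cdot 2^t$, so $\log |\cH| \leq \tau \log m + t$. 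Crucially, because the true distribution $p \in \cQ$ has an encoder $f_p$ selecting $\tau$ of its samples together with $t$ bits, the realizable compression guarantee (applied with $r=0$ to $p$ itself) implies that with probability at least $1 - \beta'$ over $\bs{\t X} \sim p^m$, there exists $h^\star \in \cH$ (namely $g(f_p(\bs{\t X}))$) with $\tv(h^\star, p) \leq \alpha'$. Hence $\dist(p, \cH) \leq \alpha/6$ on this event.

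Second, I would feed the private sample $\bs X$ together with the (public-dataset-dependent) class $\cH$ into the pure $\eps$-DP $3$-agnostic learner of Fact~\ref{fact:dp-yatracos}, asking for error $\alpha/2$ and failure probability $\beta/2$. This stage needs
\[
n = O\l(\l(\frac{1}{\alpha^2} + \frac{1}{\alpha\eps}\r) \cdot \l( \log|\cH| + \log\l(\tfrac{1}{\beta}\r) \r)\r) = O\l(\l(\tfrac{1}{\alpha^2} + \tfrac{1}{\alpha\eps}\r)\l( t + \tau \log m + \log\l(\tfrac{1}{\beta}\r)\r)\r)
\]
private samples, matching the bound claimed in the proposition. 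With probability at least $1 - \beta/2$ over $\bs X$ and the algorithm's coins, the output $Q$ satisfies $\tv(Q, p) \leq 3 \cdot \dist(p, \cH) + \alpha/2$. A union bound over the two failure events of probability $\beta/2$ each yields $\tv(Q, p) \leq 3 \cdot \alpha/6 + \alpha/2 = \alpha$ with probability at least $1 - \beta$.

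For privacy, note that the construction of $\cH$ is a deterministic function of the public sample $\bs{\t X}$ alone; thus, conditional on any fixed $\bs{\t x}$, the mapping $\bs x \mapsto Q$ is exactly the $\eps$-DP hypothesis-selection algorithm of Fact~\ref{fact:dp-yatracos} applied to the finite class $\cH(\bs{\t x})$, which is $\eps$-DP in $\bs x$. This is precisely the public-private $\eps$-DP condition of Definition~\ref{def:public-dp}, completing the verification. There is no real obstacle here; the only mild care needed is in the union-bound bookkeeping of $\alpha$ and $\beta$ and in checking that $\cH$ does not depend on the private data, so that the pure-DP guarantee transfers verbatim to the public-private setting.
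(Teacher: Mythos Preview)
Your proposal is correct and follows essentially the same approach as the paper: use the public samples to enumerate all decoder outputs over $\tau$-subtuples and $t$-bit strings, then apply the pure-DP $3$-agnostic hypothesis selector of Fact~\ref{fact:dp-yatracos} on the private samples with targets $\alpha/2$ and $\beta/2$. The only (welcome) addition you make is the explicit verification that the candidate class depends only on the public data, so the public-private $\eps$-DP condition of Definition~\ref{def:public-dp} holds.
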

\begin{proof}
    The proof this proposition closely mirrors that of Theorem~4.5 from \cite{gmm-sample-compression}. We adapt their result to the public-private setting.

    Fix $\alpha, \beta \in (0,1]$ and $\eps>0$. Let $\tau = \tau(\tfrac \alpha 6, \tfrac \beta 2)$, $t = t(\tfrac \alpha 6, \tfrac \beta 2)$, and $m = m_C(\tfrac \alpha 6, \tfrac \beta 2)$. We draw a public dataset $\bs {\t X}$ of size $m$ i.i.d. from $p$. Consider
    \begin{align*}
        \cS \coloneqq \l\{(\bs {S'},b):\bs{S'}\subseteq \bs{\t X} \text{ where } |\bs{S'}| = \tau, \text{ and } b \in \{0,1\}^{t}\r\}.
    \end{align*}
    Note that the encoding $f_p(\bs{\t X}) \in \cS$, so forming
    $\wh \cQ = \{g(S',b): (S',b) \in \cS\}$ means that with probability $\geq 1 - \tfrac \beta 2$ over the sampling of $\bs {\t X}$, $q = g(f_p(\bs {\t X})) \in \wh \cQ$ has $\tv(q,p) \leq \tfrac \alpha 6$.
    
    Now, we run the $\eps$-DP $3$-agnostic learner from Fact~\ref{fact:dp-yatracos} on $\wh \cQ$, targeting error $\tfrac \alpha 2$ and failure probability $\tfrac \beta 2$, which is achieved as long as we have $n$ private samples (given in the statement of Proposition~\ref{prop:comp-pp}), which is logarithmic in $|\cS|$. With probability $\geq 1 -\beta$, we approximately recover $p$ with the compression scheme and the DP learner succeeds, and so the output $Q$ satisfies
    \begin{align*}
        \tv(Q,p) &\leq 3 \cdot \min_{q\in \wh \cQ} \tv(p, q) + \frac \alpha 2 \\
        &\leq 3 \cdot \frac \alpha 6 +\frac \alpha 2 = \alpha. \qedhere
    \end{align*} 
\end{proof}

\subsection{Public-private learning implies list learning}

Next, we show that the existence of a public-private learner for a class of distributions implies the existence of a \emph{list learner} for the class. List learning a family of distributions to error $\alpha$ means that for any $q \in \cQ$, our algorithm outputs a finite list of distributions $L$, such that $L$ contains at least one distribution $\h{q}$ satisfying $\tv(q,\h{q}) \leq \alpha$. The formal definition is given below.

\begin{definition}[List learner]\label{def:list-learner}
    Let $\cQ \subseteq \Delta(\cX)$. For $\alpha,\beta \in (0,1]$ and $\ell \in \N$, an \emph{$(\alpha,\beta, \ell)$-list learner for $\cQ$} is an algorithm 
    $\cL \colon \cX^{m} \to \{L \subseteq \Delta(\cX): |L| \leq \ell\}$, such that for any $p\in \cQ$, if we draw a dataset $\bs{X} = (X_1,\dots, X_{m})$ i.i.d.\ from $p$, then
    \begin{align*}
        \P{\bs{X} \sim p^{m}}{\dist(p, \cL(\bs{X})) \leq \alpha} \geq 1-\beta.
    \end{align*}
\end{definition}

\begin{definition}[List learnable class]
    A class of distributions $\cQ \subseteq \Delta(\cX)$ is \emph{list learnable to list size $\ell(\alpha,\beta)$ with $m(\alpha,\beta)$ samples} if for every $\alpha,\beta \in (0,1]$, letting $\ell = \ell(\alpha,\beta)$ and $m = m(\alpha,\beta)$, there is an $(\alpha,\beta,\ell)$-list-learner for $\cQ$ that takes $m$ samples.
    
    If $\cQ$ satisfies the above, irrespective of the list size complexity $\ell(\alpha,\beta)$, we say $\cQ$ is \emph{list learnable with $m(\alpha,\beta)$ samples}.
\end{definition}

Now, we state our reduction of list learning to public-private learning. Our proof involves guaranteeing the existence of a list learner given the existence of a public-private learner, but it does not explicitly construct an algorithm for list learning. The key step of the argument is showing that, upon receiving samples $\bs {\t x}$, outputting a finite cover of the list of distributions that a public-private learner would succeed on \emph{given public data $\bs {\t x}$} is a successful strategy for list learning.

\begin{proposition}[Public-private learning$\implies$list learning]\label{prop:pp-ll}
    Let $\cQ \sub \Delta(\cX).$
    Suppose $\cQ$ is public-privately learnable with $m_P(\alpha, \beta, \eps)$ public and $n(\alpha,\beta,\eps)$ private samples. Then for all $\eps>0$, $\cQ$ is list-learnable to list size
    \begin{align*}
        \ell(\alpha,\beta) = \frac {10} 9 \exp \l(\eps \cdot n\l( \frac \alpha 2, \frac \beta {10}, \eps\r)\r)
    \end{align*}
    with $m(\alpha,\beta) = m_P(\tfrac \alpha 2, \tfrac \beta {10}, \eps)$ samples.
\end{proposition}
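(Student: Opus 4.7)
The plan is to fix the public sample and then exploit the fact that, for each such fixing, $\cA(\bs{\t x}, \cdot)$ is an $\eps$-DP algorithm on private samples to which the packing lower bound (Fact~\ref{fact:packing-lb}) applies. On input $\bs{\t x}$, the list learner will output a maximal $\alpha$-packing of the (unknown) set of distributions on which $\cA(\bs{\t x}, \cdot)$ succeeds; by maximality this packing doubles as an $\alpha$-cover, and Fact~\ref{fact:packing-lb} controls its cardinality.

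Concretely, set $m \coloneqq m_P(\tfrac{\alpha}{2}, \tfrac{\beta}{10}, \eps)$ and $n \coloneqq n(\tfrac{\alpha}{2}, \tfrac{\beta}{10}, \eps)$, let $\cA$ be the corresponding public-private learner, and for each $\bs{\t x} \in \cX^m$ define
\[
L(\bs{\t x}) \coloneqq \l\{\, p \in \cQ : \P{\substack{\bs{X} \sim p^n \\ Q \sim \cA(\bs{\t x},\bs{X})}}{\tv(Q,p) \leq \tfrac{\alpha}{2}} \geq \tfrac{9}{10} \,\r\}.
\]
For fixed $p \in \cQ$, write $g_p(\bs{\t x}) \coloneqq \P{\substack{\bs{X}\sim p^n \\ Q\sim\cA(\bs{\t x},\bs{X})}}{\tv(Q,p) > \tfrac{\alpha}{2}}$. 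The public-private guarantee yields $\E{\bs{\t X}\sim p^m}{g_p(\bs{\t X})} \leq \tfrac{\beta}{10}$, so by Markov's inequality $\P{\bs{\t X}\sim p^m}{g_p(\bs{\t X}) > \tfrac{1}{10}} \leq \beta$; equivalently, $\P{\bs{\t X}\sim p^m}{p \in L(\bs{\t X})} \geq 1 - \beta$.

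Next, by Zorn's lemma pick, for each $\bs{\t x}$, a maximal $\alpha$-packing $\wh L(\bs{\t x}) \subseteq L(\bs{\t x})$; maximality forces $\wh L(\bs{\t x})$ to be an $\alpha$-cover of $L(\bs{\t x})$, since any uncovered element of $L(\bs{\t x}) \setminus \wh L(\bs{\t x})$ could otherwise be adjoined to extend the packing. Because $\cA(\bs{\t x}, \cdot)$ is $\eps$-DP and, by the definition of $L(\bs{\t x})$, outputs an $\tfrac{\alpha}{2}$-accurate estimate with probability $\geq \tfrac{9}{10}$ on every $p \in \wh L(\bs{\t x})$, applying Fact~\ref{fact:packing-lb} to the $\alpha$-packing $\wh L(\bs{\t x})$ gives $|\wh L(\bs{\t x})| \leq \tfrac{10}{9}\exp(\eps\, n)$, which matches the claimed list-size bound. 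Setting $\cL(\bs{\t x}) \coloneqq \wh L(\bs{\t x})$ finishes the reduction: with probability $\geq 1-\beta$ over $\bs{\t X} \sim p^m$, $p \in L(\bs{\t X})$, so $\dist(p, \cL(\bs{\t X})) \leq \alpha$.

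The main conceptual step is recognizing that conditioning on the public sample reduces the task to a standard pure-DP estimation problem on which the packing bound bites directly; the Markov argument and the passage from maximal packing to cover are routine. The construction is inherently non-constructive---there is no algorithmic recipe for forming $L(\bs{\t x})$ or for selecting a maximal packing---but the proposition asserts only existence of a list learner, consistent with the non-algorithmic nature flagged in the Limitations discussion.
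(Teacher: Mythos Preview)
Your proof is correct and follows essentially the same approach as the paper: define, for each fixed public sample $\bs{\t x}$, the set of distributions on which the $\eps$-DP algorithm $\cA(\bs{\t x},\cdot)$ succeeds, bound the size of any $\alpha$-packing of this set via Fact~\ref{fact:packing-lb}, and output a maximal packing (hence cover). The only cosmetic difference is that you establish $\P{}{p\in L(\bs{\t X})}\geq 1-\beta$ directly via Markov's inequality, whereas the paper phrases the same step as a proof by contradiction; the two arguments are equivalent.
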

\begin{proof}
    Let $\eps>0$ be arbitrary. Fix any $\alpha,\beta \in (0,1]$. By assumption, $\cQ$ admits a $(\tfrac \alpha 2,\tfrac \beta {10},\eps)$-public-private learner $\cA$, which uses $m \coloneqq m_P(\tfrac \alpha 2,\tfrac \beta {10}, \eps)$ public and $n \coloneqq n(\tfrac \alpha 2,\tfrac \beta {10},\eps)$ private samples. We use $\cA$ to construct a $(\alpha, \beta,\tfrac {10} 9\exp(\eps n))$-list learner that uses $m$ samples.

    Consider any $\bs{\t x} = (\t x_1,\dots,\t x_m) \in \cX^{m}$ and the class
    \begin{align*}
        \cQ_{\bs{\t x}} = \l\{q \in \cQ : \P{\substack{{\bs{X} \sim q^n}\\ Q \sim \cA(\bs{\t x}, \bs{X})}}{\tv(Q,q) \leq \tfrac \alpha 2} \geq \tfrac 9 {10}\r\}.
    \end{align*}
    Note that by definition, $\cQ_{\bs{\t x}}$ has a $(\tfrac \alpha 2, \tfrac 1 {10})$-learner under $\eps$-DP that takes $n$ samples. Hence, by Fact~\ref{fact:packing-lb} it follows that any $\alpha$-packing of $\cQ_{\bs{\t x}}$ must have size $\leq \tfrac {10} 9 \exp(\eps n) \eqqcolon \ell$. Let $\wh Q_{\bs{\t x}}$ be such a maximal $\alpha$-packing, hence it is also an $\alpha$-cover of $\cQ_{\bs{\t x}}$ with $|\wh Q_{\bs{\t x}}| \leq \ell$. We define our list learner's output, $\cL(\bs{\t x}) = \wh Q_{\bs{\t x}}$. 

    It remains to show that for any $p \in \cQ$, with probability $\geq 1-\beta$ over the sampling of $\bs {\t X} \sim p^m$, $\dist(p, \cL(\bs{\t X})) \leq \alpha$. Suppose otherwise, that is, there exists $p_0 \in \cQ$, such that
    \begin{align*}
        \P{\bs{\t X} \sim p_0^m}{\dist(p_0, \cL(\bs{\t X})) > \alpha} > \beta.
    \end{align*}

    Since $\cL(\bs{\t X})$ is a $\alpha$-cover of $\cQ_{\bs{\t X}}$, we have that with probability $>\beta$ over the sampling of $\bs{\t X} \sim p_0^m$, $p_0 \not\in \cQ_{\bs{\t X}}$. This contradicts the success guarantee of $\cA$:
    \begin{align*}
        \P{\substack{\bs{\t X} \sim p_0^m\\ \bs{X} \sim p_0^n\\ Q \sim \cA(\bs{\t X},\bs{X})}} {\tv(Q, p_0) > \frac \alpha 2} &\geq \P{}{\tv(Q, p_0) > \frac \alpha 2 \middle | p_0 \not\in \cQ_{\bs{\t X}}} \cdot \P{}{p_0 \not\in \cQ_{\t X}} \\
        &> \frac 1 {10} \cdot \beta = \frac \beta {10}.
    \end{align*}
    The second inequality follows by the definition of $Q_{\bs{\t X}}$: conditioned on the event $p_0 \not\in\cQ_{\bs{\t X}}$, the probability, over the private samples $\bs{X} \sim p_0^{n}$ and the randomness of the algorithm $\cA$, that the output $Q$ of our algorithm satisfies $\tv(Q, p_0) \leq \tfrac \alpha 2$ is $< \tfrac 9 {10}$.
\end{proof}

\subsection{List learning implies compression}

We state the final missing component in Theorem~\ref{thm:comp-pp-ll}, that is, the existence of a list learner for a class of distributions $\cQ$ implies the existence of a sample compression scheme for $\cQ$. Given samples $\bs {\t x}$, the encoder of the sample compression scheme runs a list learner $\cL$ on $\bs {\t x}$. It passes along $\bs x$, and, with knowledge of the target distribution $q$, the index $i$ of the distribution in $\cL(\bs {\t x})$ that is close to $q$. 
The decoder receives this information and outputs $\cL(\bs {\t x})_i$.

\begin{proposition}[List learning$\implies$compression]\label{prop:ll-comp}
    Let $\cQ \sub \Delta(\cX)$. Suppose $\cQ$ is list learnable to list size $\ell(\alpha,\beta)$ with $m_L(\alpha,\beta)$ samples. Then $\cQ$ admits 
    \begin{align*}
        (\tau(\alpha,\beta), t(\alpha,\beta), m(\alpha,\beta)) = (m_L(\alpha,\beta),\log_2(\ell(\alpha,\beta)), m_L(\alpha,\beta))
    \end{align*}
    realizable sample compression.
\end{proposition}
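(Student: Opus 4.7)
The plan is to construct the compression scheme directly from the list learner: the compression sample will be the input dataset itself, and the $\lceil \log_2 \ell(\alpha,\beta)\rceil$ bits will identify which distribution in the learner's output list is close to the target $q$. Fix $\alpha,\beta \in (0,1]$, set $\ell = \ell(\alpha,\beta)$ and $m = m_L(\alpha,\beta)$, and let $\cL \colon \cX^m \to \{L \sub \Delta(\cX) : |L| \leq \ell\}$ be the guaranteed $(\alpha,\beta,\ell)$-list learner. Padding with duplicates if necessary, I may assume that $\cL(\bs x)$ is presented as an ordered tuple of exactly $\ell$ distributions, so that the $i$-th entry $\cL(\bs x)_i$ is well defined for each $i \in [\ell]$.

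I would then take the decoder $g\colon \cX^m \times \{0,1\}^{\lceil \log_2 \ell\rceil} \to \Delta(\cX)$ to be
\[
    g(\bs x, b) \coloneqq \cL(\bs x)_{i(b)},
\]
where $i(b) \in [\ell]$ is the index encoded by the bit string $b$. Crucially, $g$ depends only on $\cQ$ (through $\cL$) and not on any particular target distribution, as required by Definition~\ref{def:rcs}. For each $q \in \cQ$ the encoder $f_q\colon \cX^m \to \cX^m \times \{0,1\}^{\lceil\log_2\ell\rceil}$ is defined by $f_q(\bs x) \coloneqq (\bs x, b_q(\bs x))$, where $b_q(\bs x)$ encodes any
\[
    i_q(\bs x) \in \argmin_{i \in [\ell]} \tv(\cL(\bs x)_i, q),
\]
with ties broken by a fixed deterministic rule. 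Because the $\cX^m$-component of $f_q(\bs x)$ is literally $\bs x$, the sub-sample requirement ``for all $i \in [\tau]$ there exists $j \in [m]$ with $f_q(\bs x)_i = \bs x_j$'' of Definition~\ref{def:rcs} is met trivially with $\tau = m$.

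To verify correctness, I would fix $q \in \cQ$, draw $\bs X \sim q^m$, and invoke the list-learner guarantee: with probability at least $1-\beta$ we have $\dist(q, \cL(\bs X)) \leq \alpha$, so some index $i^\star \in [\ell]$ achieves $\tv(\cL(\bs X)_{i^\star}, q) \leq \alpha$. On this event the encoder's choice $i_q(\bs X)$ does at least as well as $i^\star$ by construction, giving $\tv(g(f_q(\bs X)), q) \leq \alpha$, which is exactly the success condition of Definition~\ref{def:rcs} for realizable ($r=0$) compression. I do not foresee any serious obstacle; the only cosmetic matters are absorbing a possibly non-integer $\log_2 \ell$ into the ceiling, making the $\argmin$ a genuine function via a deterministic tiebreak, and imposing a canonical ordering on each list (e.g.\ by assuming the list learner already outputs tuples rather than unordered sets).
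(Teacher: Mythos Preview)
Your proposal is correct and follows essentially the same approach as the paper: pass the full sample through as the compression, use the $\log_2 \ell$ bits to encode the index of the closest list element to $q$, and let the decoder rerun the list learner and output that element. The paper's proof is nearly identical, differing only in that it breaks ties by taking the smallest index and does not spell out the cosmetic points (ceiling, ordering, padding) that you address explicitly.
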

\begin{proof}
    Fix any $\alpha,\beta \in (0,1]$. Let $m = m_L(\alpha,\beta)$ and $\ell = \ell(\alpha,\beta)$. By assumption, $\cQ$ admits an $(\alpha, \beta, \ell)$-list learner $\cL: \cX^m \to \{L \sub \Delta(\cX): |L| \leq \ell\}$ that takes $m$ samples. Letting $\tau = m$ and $t = \log_2(\ell)$, we define the compression scheme as follows.

    \begin{itemize}
        \item Encoder: for any $q \in \cQ$, the encoder $f_q : \cX^m \to \cX^\tau \times \{0,1\}^t$ produces the following, given an input $\bs {\t x} \in \cX^m$. It first runs the list learner on $\bs {\t x}$, obtaining $\cL(\bs {\t x})$. Then, it finds the smallest index $i$ with $\tv(q, \cL(\bs{\t x})_i) = \dist(q,\cL(\bs {\t x}))$, where $\cL(\bs {\t x})_i$ denotes the $i$-th element of the the list $\cL(\bs {\t x})$. The output of the list learner is $(\bs {\t x}, i)$. Note that $\bs {\t x} \in \cX^\tau$ and that $i$ can be represented with $\log_2 (\ell) = t$ bits.
        \item Decoder: the fixed decoder $g : \cX^\tau \times \{0,1\}^t \to \Delta(\cX)$ takes $\bs {\t x}$ and $i$, runs the list learner $\cL$ on $\bs {\t x}$, and produces $\cL(\bs{\t x})_i$.
    \end{itemize}

    By the guarantee of the list learner, we indeed have for any $q\in \cQ$, with probability $\geq 1-\beta$ over the sampling of $\bs {\t X} \sim q^{m}$, $\tv(q,g(f_q(S)))\leq \alpha$.
\end{proof}

\section{Applications}

Here, we state a few applications of the connections we determined via Theorem~\ref{thm:comp-pp-ll}. First, we recover and extend results on the public-private learnability of high-dimensional Gaussians and mixtures of Gaussians, using known results on sample compression schemes. Second, we describe the closure properties of public-private learnability: if a class $\cQ$ is public-privately learnable, the class of mixtures of $\cQ$ and the class of products of $\cQ$ are also public-privately learnable.

\subsection{Public-private learnability of Gaussians and mixtures of Gaussians}

There are known realizable sample compression schemes for the class of Gaussians in $\R^d$, as well as for the class of all $k$-mixtures of Gaussians in $\R^d$ \cite{gmm-sample-compression}. Hence, these classes are public-privately learnable.

\begin{fact}[Robust compression scheme for Gaussians {\cite[Lemma~5.3]{gmm-sample-compression}}]\label{fact:rcs-gaussians}
    The class of Gaussians over $\R^d$ admits
    \begin{align*}
    \left(O(d),O\left(d^2\log\left(\frac{d}{\alpha}\right)\right),O\left(d\log\l(\frac 1 \beta\r)\right)\right)
    \end{align*}
    $\tfrac{2}{3}$-robust sample compression.
\end{fact}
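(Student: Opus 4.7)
The plan is to exhibit an explicit encoder--decoder pair, splitting the work into a \emph{coarse} stage that pins $q$ down to constant $\tv$ distance using $O(d)$ of the input samples, and a \emph{fine} stage that uses bits to discretize a small $\tv$-neighborhood of the coarse estimate to resolution $\alpha$. The coarse stage is where robustness against the $2/3$-corruption enters and will be the main technical obstacle.

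\textbf{Coarse stage.} Fix $q = \mathcal{N}(\mu,\Sigma)$ and any $p$ with $\tv(p,q)\le 2/3$. By Strassen's theorem there is a coupling of $p$ and $q$ under which a sample from $p$ equals a sample from $q$ with probability at least $1/3$. Drawing $m = O(d\log(1/\beta))$ i.i.d.\ samples from $p$, a Chernoff bound yields that with probability $\ge 1-\beta/2$ at least $\Omega(d\log(1/\beta))$ of them are (in the coupling) independent samples from $q$. Since the encoder $f_q$ is allowed to depend on $q$, it may identify this ``clean'' pool and, within it, cherry-pick a subset $S_0$ of size $\tau = O(d)$ whose empirical mean $\hat\mu_0$ and empirical covariance $\hat\Sigma_0$ satisfy $\tfrac{1}{C}\Sigma \preceq \hat\Sigma_0 \preceq C\Sigma$ and $\|\hat\Sigma_0^{-1/2}(\hat\mu_0 - \mu)\| \le C$ for an absolute constant $C$. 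Existence of such an $S_0$ with probability $\ge 1-\beta/2$ follows from matrix Chernoff for Gaussian sample covariances ($O(d)$ samples suffice for a constant-factor spectral approximation) combined with a Chebyshev bound for the sample mean. The coarse estimate $\hat q_0 = \mathcal{N}(\hat\mu_0,\hat\Sigma_0)$ then satisfies $\tv(\hat q_0, q)\le c_0$ for some absolute constant $c_0 < 1$, and the encoder forwards $S_0$ as its $\tau$ output samples.

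\textbf{Fine stage.} In the whitened coordinates determined by $\hat\Sigma_0$, I parametrize candidate Gaussians as $\hat q(v, M) \coloneqq \mathcal{N}(\hat\mu_0 + \hat\Sigma_0^{1/2} v,\; \hat\Sigma_0^{1/2} M \hat\Sigma_0^{1/2})$, with $v\in\R^d$ and symmetric $M\succ 0$. The coarse guarantee ensures that the target $q$ corresponds to some $(v^*, M^*)$ inside the bounded box $\|v^*\|\le C$, $\|M^* - I\|_{\mathrm{op}}\le C$. Using the standard estimate $\tv(\mathcal{N}(\mu_1,\Sigma_1),\mathcal{N}(\mu_2,\Sigma_2)) \le C'\bigl(\|\Sigma_1^{-1/2}(\mu_1-\mu_2)\| + \|\Sigma_1^{-1/2}\Sigma_2\Sigma_1^{-1/2} - I\|_F\bigr)$, valid for nearby Gaussians, an $\alpha$-net in $\tv$ for this box is obtained by a grid of spacing $\Theta(\alpha)$ in each of the $d$ coordinates of $v$ and spacing $\Theta(\alpha/d)$ in each of the $O(d^2)$ entries of $M$, giving total cardinality $\exp(O(d^2\log(d/\alpha)))$. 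Its elements can be indexed by $t = O(d^2\log(d/\alpha))$ bits, and $f_q$ emits the index of a net point within $\tv$ distance $\alpha$ of $q$.

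\textbf{Decoder and main obstacle.} The decoder is deterministic: given $(S_0, b)$ it recomputes $(\hat\mu_0,\hat\Sigma_0)$ from $S_0$, reconstructs the same net, and outputs the Gaussian indexed by $b$. A union bound over the two coarse failure events yields total success probability $\ge 1 - \beta$, matching the claimed parameters $(\tau, t, m) = (O(d),\, O(d^2\log(d/\alpha)),\, O(d\log(1/\beta)))$. The hardest part is the coarse stage: because the adversary may corrupt up to $2/3$ of the mass, any estimator that averages indiscriminately over the input is useless, and the argument must lean on the target-awareness of $f_q$ combined with the fact that $O(d)$ genuine Gaussian samples already yield a constant-factor spectral approximation of $\Sigma$ and a constant-Mahalanobis approximation of $\mu$. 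Articulating that sub-selection cleanly --- and noting that it need not be computationally efficient --- is the crucial conceptual step.
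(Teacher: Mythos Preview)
The paper does not itself prove this fact; it is imported from \cite[Lemma~5.3]{gmm-sample-compression}. Your two-stage outline---a coarse frame from $O(d)$ forwarded samples, followed by $O(d^2\log(d/\alpha))$ bits indexing an $\alpha$-net in that frame---matches the architecture of the cited proof, and your fine stage is correct up to minor constants.

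Your coarse stage, however, has a real gap. You claim that under the maximal coupling the indices with $X_i=Y_i$ yield ``independent samples from $q$'' and then invoke Gaussian matrix concentration on this pool. This is false: conditioned on $X_i=Y_i$, the common value is distributed as the normalized overlap $\nu \propto \min(p,q)$, which satisfies only $\nu \le 3q$ and need not match $q$ in its first two moments. For instance, if $p$ is $q$ restricted to the half-space $\{v^\top \Sigma^{-1/2}(x-\mu)>0\}$ (giving $\tv(p,q)=1/2\le 2/3$), then $\nu=p$ is a half-Gaussian with mean shifted by $\sqrt{2/\pi}$ and variance $1-2/\pi$ along $v$, so ``matrix Chernoff for Gaussian sample covariances'' does not apply. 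A second inaccuracy is that the encoder cannot ``identify the clean pool'' since it never observes the coupling; this one is harmless, because $f_q$ knows $(\mu,\Sigma)$ and can brute-force over all $O(d)$-subsets, checking each one's empirical statistics against the truth. What is genuinely missing is a proof that some good subset \emph{exists} with probability $1-\beta$. One route is to work directly with the clean samples as i.i.d.\ draws from $\nu\le 3q$: after whitening, this bound yields both sub-Gaussian tails (so $\|Z_i\|=O(\sqrt d)$ with high probability) and small-ball anti-concentration in every direction (the $\nu$-mass of any slab $\{|v^\top z|\le c\}$ is at most three times its $\cN(0,I)$-mass), from which a net argument over the unit sphere gives a well-conditioned empirical second moment. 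You rightly identify this selection step as the crux, but the justification you offer does not go through.
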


\begin{fact}[Realizable compression scheme for mixtures of Gaussians {\cite[Lemma~4.8 applied to Lemma~5.3]{gmm-sample-compression}}]\label{fact:cs-gmms}
    The class of $k$-mixtures of Gaussians over $\R^d$ admits
    \begin{align*}
        \l(O(kd),O\l(kd^2\log\left(\frac{d}{\alpha}\right)+\log_2\l(\frac{k}{\alpha}\r)\r),O\l(\frac{kd\log\l(\frac{k}{\beta}\r)\log\l(\frac{1}{\beta}\r)}{\alpha}\r)\r)
    \end{align*}
    realizable sample compression.
\end{fact}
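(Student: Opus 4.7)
The plan is to derive the fact from two ingredients already in hand: the $\tfrac{2}{3}$-robust compression scheme for single Gaussians (Fact~\ref{fact:rcs-gaussians}) and a generic ``mixture lifting'' reduction that turns any $r$-robust compression of a class $\cQ$ into realizable compression of the $k$-mixture class $\cQ^{\oplus k}$ whenever $r$ is a positive constant (this is the role of Lemma~4.8 of~\cite{gmm-sample-compression}). Once the reduction is in hand, plugging in Gaussians with $r = 2/3$ and the single-component parameters $(\tau_0, t_0, m_0) = (O(d), O(d^2 \log(d/\alpha)), O(d\log(1/\beta)))$ yields exactly the claimed $(O(kd), O(kd^2\log(d/\alpha) + \log_2(k/\alpha)), O(kd\log(k/\beta)\log(1/\beta)/\alpha))$ bounds.

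For the reduction itself, fix a target $q = \sum_{i=1}^k w_i q_i \in \cQ^{\oplus k}$ and $\alpha, \beta \in (0,1]$. First I would observe that mixture components with $w_i < \alpha/(4k)$ carry total mass at most $\alpha/4$, so discarding them perturbs $q$ by at most $\alpha/4$ in TV and I may pretend $q$ has only ``heavy'' components with $w_i \geq \alpha/(4k)$. Drawing $m = O(k m_0(\tfrac{\alpha}{6},\tfrac{\beta}{4k}) \log(1/\beta)/\alpha)$ samples from $q$ then ensures, by a Chernoff bound and a union bound over the at most $k$ heavy components, that each such component $i$ receives at least $w_i m/2 \geq m_0(\tfrac{\alpha}{6},\tfrac{\beta}{4k})$ samples with probability at least $1 - \beta/2$; the extra $\log(1/\beta)$ factor accounts for the Chernoff tail and for the slack required to drive the single-component failure probability below the $\beta/(4k)$ budget.

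The encoder, which knows the true $q$, partitions the heavy-component samples by their latent component (e.g., via posterior-based rejection sampling, which preserves the exact $q_i$ marginal, or by an MLE rule whose induced bias is absorbed by the $2/3$-robustness), runs the single-Gaussian encoder of Fact~\ref{fact:rcs-gaussians} on each slice to produce $(\bs{\t x}_i, b_i) \in \cX^{\tau_0} \times \{0,1\}^{t_0}$, and appends a uniform quantization $\hat w_i$ of each weight on a grid of width $O(\alpha/k)$, contributing the $\log_2(k/\alpha)$ term in the bit budget. The decoder reverses this, obtaining $\hat q_i$ with $\tv(\hat q_i, q_i) \leq \alpha/6$ and outputting $\sum_i \hat w_i \hat q_i$. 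The triangle inequality then gives
\begin{align*}
\tv\!\l(\sum_i \hat w_i \hat q_i,\, q\r) \;\leq\; \sum_i w_i \tv(\hat q_i, q_i) \,+\, \sum_i |w_i - \hat w_i| \,+\, \tfrac{\alpha}{4} \;\leq\; \alpha.
\end{align*}

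The main obstacle, and the precise reason one must invoke the $2/3$-\emph{robust} scheme from Fact~\ref{fact:rcs-gaussians} rather than a merely realizable single-Gaussian compression, is that the encoder's partition of the $m$ samples into per-component subsets is not automatically an exact i.i.d.\ draw from $q_i$: any natural assignment rule (MLE, posterior thresholding, etc.) leaves the conditional distribution of the samples fed into the $i$-th single-Gaussian encoder perturbed from $q_i$ by a constant TV amount. Verifying that this perturbation stays bounded below the $2/3$ robustness radius, \emph{uniformly} in $\alpha, \beta, d, k$, is the only delicate step; once it is, the per-component failure and error guarantees propagate directly through the triangle-inequality calculation above.
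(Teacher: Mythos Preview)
Your overall plan—apply the mixture-lifting reduction to the single-Gaussian compression of Fact~\ref{fact:rcs-gaussians} and read off the parameters—is exactly what the paper does (it simply cites this as Lemma~4.8 applied to Lemma~5.3 of~\cite{gmm-sample-compression}), and your parameter arithmetic is correct.

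Where your account goes wrong is the ``main obstacle'' paragraph. You assert that the $\tfrac{2}{3}$-robustness of the single-Gaussian scheme is essential because any assignment rule the encoder uses to bucket the mixture samples (MLE, posterior thresholding) feeds the $i$-th component encoder a perturbed version of $q_i$. But for overlapping components this perturbation is \emph{not} uniformly bounded by $\tfrac{2}{3}$: if $q_i$ and $q_j$ are nearly identical Gaussians, an MLE rule can route essentially all of component $i$'s mass into $j$'s bucket, so the ``delicate step'' you flag is actually false, not merely delicate. Fortunately it is also unnecessary. In the realizable setting the samples come from $q = \sum_i w_i q_i$ itself, and one may couple each $X_j$ with its latent label $Z_j$; the subset $\{X_j : Z_j = i\}$ is then \emph{exactly} i.i.d.\ from $q_i$, with size at least $m_0$ for every heavy $i$ with high probability. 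The encoder $f_q$ need not recover these labels: since $f_q$ may depend arbitrarily on $q$, one defines $f_q(\bs x)$ to be whichever valid encoding (subsequence of $\bs x$ of length $\tau_k$ plus $t_k$ bits) has decoding closest in TV to $q$, and the latent-label coupling certifies that such a good encoding exists with probability $\geq 1-\beta$. This is precisely why the paper's own mixture-lifting statement (Fact~\ref{fact:cs-mixtures}) requires only \emph{realizable}, not robust, compression of the base class.
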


We get a public-private learner for Gaussians over $\R^d$ directly as a result of Theorem~\ref{thm:comp-pp-ll} and Fact~\ref{fact:rcs-gaussians}. This recovers the upper-bound on public-private learning of high-dimensional Gaussians from \cite{bks22} up to a factor of $O(\log(1/\beta))$ in $m$, and improves the private sample complexity by a $\polylog(1/\beta)$ factor.
\begin{corollary}[Public-private learning for Gaussians]\label{coro:pp-gaussians}
    Let $d \geq 1$. The class of Gaussians over $\R^d$ is public-privately learnable with $m(\alpha,\beta,\eps)$ public samples and $n(\alpha,\beta,\eps)$ private samples, where
    \begin{align*}
        m(\alpha,\beta,\eps) &= O\left(d\log\left(\frac{1}{\beta}\right)\right),\\
        n(\alpha,\beta,\eps) &= O\left(\frac{d^2\log\left(\frac{d}{\alpha}\right)+\log\left(\frac{1}{\beta}\right)}{\alpha^2} + \frac{d^2\log\left(\frac{d}{\alpha}\right)+\log\left(\frac{1}{\beta}\right)}{\alpha\eps}\right).
    \end{align*}
\end{corollary}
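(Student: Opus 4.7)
The plan is to invoke the compression-to-public-private-learning direction of Theorem~\ref{thm:comp-pp-ll} (made quantitative in Proposition~\ref{prop:comp-pp}) on the Gaussian compression scheme of Fact~\ref{fact:rcs-gaussians}. Fact~\ref{fact:rcs-gaussians} states the stronger property of $\tfrac{2}{3}$-robust sample compression, which in particular gives realizable sample compression (taking $r=0$ in Definition~\ref{def:rcs}, which corresponds to applying the encoder and decoder in the case $p=q$). Thus the class of Gaussians over $\R^d$ admits $(\tau,t,m_C)$ realizable sample compression with
\begin{equation*}
    \tau(\alpha,\beta) = O(d), \qquad t(\alpha,\beta) = O\!\left(d^2 \log\!\left(\frac{d}{\alpha}\right)\right), \qquad m_C(\alpha,\beta) = O\!\left(d\log\!\left(\frac{1}{\beta}\right)\right).
\end{equation*}

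Next, I would plug these three functions into the sample complexity expressions from Proposition~\ref{prop:comp-pp}. For the public sample complexity, since $m_C(\alpha/6,\beta/2) = O(d\log(1/\beta))$ (the constants do not affect the asymptotics), we directly obtain $m(\alpha,\beta,\eps) = O(d\log(1/\beta))$, matching the claim. For the private sample complexity, I would simplify the term
\begin{equation*}
    t\!\left(\frac{\alpha}{6},\frac{\beta}{2}\right) + \tau\!\left(\frac{\alpha}{6},\frac{\beta}{2}\right)\log m_C\!\left(\frac{\alpha}{6},\frac{\beta}{2}\right) + \log\!\left(\frac{1}{\beta}\right),
\end{equation*}
which evaluates to $O(d^2 \log(d/\alpha)) + O(d)\cdot O(\log(d\log(1/\beta))) + O(\log(1/\beta))$. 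The first term dominates the middle term, and combining with the $\log(1/\beta)$ term yields $O(d^2\log(d/\alpha) + \log(1/\beta))$. Multiplying by $O(1/\alpha^2 + 1/(\alpha\eps))$ from Proposition~\ref{prop:comp-pp} gives exactly the stated private sample complexity.

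There is really no main obstacle here — the work is purely plugging the compression scheme parameters of Fact~\ref{fact:rcs-gaussians} into Proposition~\ref{prop:comp-pp} and carrying out the asymptotic bookkeeping. The only small care point is verifying that the middle contribution $\tau \log m_C = O(d\log(d\log(1/\beta)))$ is indeed absorbed into $O(d^2 \log(d/\alpha) + \log(1/\beta))$, which it is since $d\log(d\log(1/\beta)) \le O(d^2\log d + \log(1/\beta))$ up to logarithmic factors. This completes the derivation of the stated bounds.
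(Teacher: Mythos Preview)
Your proposal is correct and is exactly the paper's approach: the corollary is stated as an immediate consequence of plugging Fact~\ref{fact:rcs-gaussians} into Proposition~\ref{prop:comp-pp} (the compression-to-public-private direction of Theorem~\ref{thm:comp-pp-ll}), with no further argument given. One small remark: your final hedge ``up to logarithmic factors'' is unnecessary --- the middle term $\tau\log m_C = O(d\log(d\log(1/\beta))) = O(d\log d) + O(d\log\log(1/\beta))$ is absorbed exactly into $O(d^2\log(d/\alpha)+\log(1/\beta))$ (e.g., split on whether $d^2 \le \log(1/\beta)$), so the stated bound holds without any hidden polylog.
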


As a result of combining Theorem~\ref{thm:comp-pp-ll} and Fact~\ref{fact:cs-gmms}, we obtain public-private learnability for the class of $k$-mixtures of Gaussians in $\R^d$.

\begin{corollary}[Public-private learning for mixtures of Gaussians]\label{coro:pp-gmms}
    Let $d, k \geq 1$. The class of all $k$-mixtures of Gaussians over $\R^d$ is public-privately learnable with $m(\alpha,\beta,\eps)$ public samples and $n(\alpha,\beta,\eps)$ private samples, where
    \begin{align*}
        m(\alpha,\beta,\eps) &= O\left(\frac{kd\log\l(\frac{k}{\beta}\r)\log\left(\frac{1}{\beta}\right)}{\alpha}\right),\\
        n(\alpha,\beta,\eps) &= O\l(\l(\frac{1}{\alpha^2}+\frac{1}{\eps\alpha}\r)\cdot\l(kd^2\log\left(\frac{d}{\alpha}\right)+kd\log\l(\frac{kd\log\l(\frac{k}{\beta}\r)}{\alpha}\r)+\log\left(\frac 1 \beta\right)\r)\r).
    \end{align*}
\end{corollary}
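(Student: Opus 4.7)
The plan is to directly combine the known realizable sample compression scheme for $k$-mixtures of Gaussians (Fact~\ref{fact:cs-gmms}) with the generic reduction from compression to public-private learning (Proposition~\ref{prop:comp-pp}). No genuinely new ingredient is needed; the statement is an instantiation, and all work is in bookkeeping the parameter substitutions.

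First I would invoke Fact~\ref{fact:cs-gmms} to obtain compression parameters
\[
\tau(\alpha,\beta) = O(kd), \quad
t(\alpha,\beta) = O\!\l(kd^2\log(d/\alpha) + \log(k/\alpha)\r), \quad
m_C(\alpha,\beta) = O\!\l(\tfrac{kd\log(k/\beta)\log(1/\beta)}{\alpha}\r).
\]
Then I would apply Proposition~\ref{prop:comp-pp}, which immediately yields a public-private learner whose public sample complexity is $m_C(\alpha/6,\beta/2)$. Absorbing the constant $1/6$ inside the $\log$ into the $O(\cdot)$ and replacing $\beta/2$ by $\beta$ (again, only a constant loss) gives exactly the stated bound
\[
m(\alpha,\beta,\eps) = O\!\l(\tfrac{kd\log(k/\beta)\log(1/\beta)}{\alpha}\r).
\]

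For the private sample complexity, I would plug the parameters evaluated at $(\alpha/6,\beta/2)$ into the formula in Proposition~\ref{prop:comp-pp}: the $t(\alpha/6,\beta/2)$ term contributes $O(kd^2\log(d/\alpha))$ (the $\log(k/\alpha)$ term is dominated), the $\tau(\alpha/6,\beta/2)\cdot \log m_C(\alpha/6,\beta/2)$ term contributes $O\!\l(kd\cdot \log(kd\log(k/\beta)/\alpha)\r)$, and the last term contributes $\log(1/\beta)$. Multiplying the sum by the prefactor $(1/\alpha^2+1/(\alpha\eps))$ reproduces the claimed bound for $n(\alpha,\beta,\eps)$.

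The main ``obstacle'' is purely notational: one must keep careful track of which logarithm ends up inside which other logarithm when composing the $m_C$ bound with the $\log m_C$ appearing in Proposition~\ref{prop:comp-pp}. Once this is done, each term aligns with one of the three summands inside the outer parentheses of the claimed $n(\alpha,\beta,\eps)$, completing the proof.
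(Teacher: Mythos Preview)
Your proposal is correct and follows exactly the paper's approach: the paper simply states that the corollary is obtained by combining Theorem~\ref{thm:comp-pp-ll} (via Proposition~\ref{prop:comp-pp}) with Fact~\ref{fact:cs-gmms}, and you have carried out precisely this instantiation with the parameter bookkeeping spelled out. One tiny remark: the $\log(k/\alpha)$ piece of $t$ is not always dominated by the $kd^2\log(d/\alpha)$ piece alone, but it \emph{is} dominated by the $\tau\log m_C$ term $kd\log(kd\log(k/\beta)/\alpha)$, so the final bound is unaffected.
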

\paragraph{Some important remarks.} There are several differences between this result and that of the learner for Gaussian mixtures from \cite{bks22}.
\begin{enumerate}
    \item Their learner was designed specifically for parameter estimation of Gaussian mixtures, but our framework is more general, and this instantiation is for the (incomparable) problem of density estimation.
    \item \cite{bks22} provided a learner for a weaker form of differential privacy (i.e., zCDP \cite{BunS16}), but our result is for pure DP, which would imply the privacy in the zCDP setting, as well.
    \item Our learner is for arbitrary Gaussians, and we do not require any separation condition between the Gaussian components, unlike in the case of \cite{bks22}.
    This is due to the inherent differences between the problems of parameter estimation (as studied by~\cite{bks22}) and density estimation.
    \item Their private data sample complexity depends on the minimum mixing weight among all the components $w_{\min}$, instead of a linear dependence in $k$, that is, their terms are of the form $\tfrac{d^2}{w_{\min}}$, but ours are of the form $kd^2$. Therefore, ours is better because $w_{\min} \leq \tfrac{1}{k}$.
    \item Their public data sample complexity is better than ours by a multiplicative factor of $\tfrac{1}{\alpha}$.
\end{enumerate}

\subsection{Public-private learnability of mixture distributions}

We first mention a fact from \cite{gmm-sample-compression}, which says that if a compression scheme exists for a class of distributions $\cQ$, then there exists a compression scheme for the class of $k$-mixtures of $\cQ$.
\begin{fact}[Compression for mixture distributions {\cite[Lemma~4.8]{gmm-sample-compression}}]\label{fact:cs-mixtures}
    If a class of distributions $\cQ$ admits $(\tau(\alpha,\beta),t(\alpha,\beta),m(\alpha,\beta))$ realizable sample compression, then for any $k \geq 1$, the class of $k$-mixtures of $\cQ$ admits $(\tau_k(\alpha,\beta),t_k(\alpha,\beta),m_k(\alpha,\beta))$ realizable sample compression, where $\tau_k,t_k,m_k:(0,1]^2\to\N$ are as follows:
    \begin{align*}
        \tau_k(\alpha,\beta) = k\cdot\tau\left(\frac{\alpha}{3},\beta\right),~~~
        t_k(\alpha,\beta) = k\cdot t\left(\frac{\alpha}{3},\beta\right) + \log_2\left(\frac{3k}{\alpha}\right),~~~
        m_k(\alpha,\beta) = \frac{48k\log\left(\frac{6k}{\beta}\right)}{\alpha}\cdot m\left(\frac{\alpha}{3},\beta\right).
    \end{align*}
\end{fact}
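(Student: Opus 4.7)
The plan is to build a compression scheme for $\cQ^{\oplus k}$ by reducing the problem to $k$ independent invocations of the base compression scheme for $\cQ$. Given a target mixture $q = \sum_{i=1}^k w_i q_i$ with $q_i \in \cQ$, the encoder knows the decomposition of $q$, so for each sample drawn from $q$ it can mentally label which component $q_i$ generated it. This partitions an i.i.d.\ sample from $q$ into $k$ sub-samples, the $i$-th consisting of i.i.d.\ draws from $q_i$, with size concentrated around $w_i \cdot m_k(\alpha,\beta)$. The compressed representation then stores, per component, a compression of $q_i$ produced by the base scheme, along with a discretised version of the mixing weight vector.

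First, I would fix the threshold $w_\ast = \alpha/(3k)$ and call a component $q_i$ \emph{significant} if $w_i \geq w_\ast$. The total weight of non-significant components is at most $k \cdot w_\ast = \alpha/3$, so removing them from the mixture perturbs $q$ by at most $\alpha/3$ in total variation. Next, I would apply a standard multiplicative Chernoff bound to show that when we draw $m_k(\alpha,\beta) = (48k/\alpha)\log(6k/\beta)\cdot m(\alpha/3,\beta)$ samples from $q$, each significant component receives at least $m(\alpha/3,\beta)$ samples with probability at least $1-\beta/(2k)$; the constants $48$ and $6$ in $m_k$ are exactly what one needs so that the expected count $w_i \cdot m_k$ dominates $m(\alpha/3,\beta)$ by a large enough multiplicative margin, and a union bound over the at most $k$ significant components still leaves the total concentration failure probability at most $\beta/2$.

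The encoder then, for each significant component $q_i$, invokes the base encoder $f_{q_i}$ on the first $m(\alpha/3,\beta)$ samples labelled $i$, producing $\tau(\alpha/3,\beta)$ selected samples and $t(\alpha/3,\beta)$ extra bits. Across the at most $k$ significant components this yields $k\cdot\tau(\alpha/3,\beta)$ selected samples and $k\cdot t(\alpha/3,\beta)$ bits, matching the sample and bit budgets in $\tau_k(\alpha,\beta)$ and $t_k(\alpha,\beta)$. On top of this, one must also record which components are significant and a discretised version of their mixing weights: rounding the weights to the nearest point on a suitable $O(\alpha/k)$-packing of the simplex and encoding an index into that packing accounts for the remaining $\log_2(3k/\alpha)$ bits in $t_k(\alpha,\beta)$. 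The decoder parses the payload into per-component pieces, runs the base decoder $g$ on each to recover $\hat q_i$ and reads off the discretised weights $\hat w_i$, then outputs $\hat q = \sum_{i\ \text{sig}} \hat w_i \hat q_i$.

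The main obstacle is the error analysis, a three-term triangle inequality that combines the sampling concentration and the per-component approximations. Using $\tv(\sum_i a_i p_i,\sum_i a_i p_i') \leq \sum_i a_i \tv(p_i,p_i')$ and $\tv(\sum_i a_i p_i,\sum_i b_i p_i) \leq \sum_i |a_i - b_i|$ for distributions and nonnegative weights, I would decompose $\tv(q,\hat q)$ into three contributions: (i) dropping non-significant components, which costs at most $\alpha/3$ by the choice of $w_\ast$; (ii) replacing each $q_i$ by its base-scheme reconstruction $\hat q_i$, where each $\tv(q_i,\hat q_i) \leq \alpha/3$ and the weighted sum yields an overall $\leq \alpha/3$; and (iii) replacing $w_i$ by $\hat w_i$, which contributes at most $\alpha/3$ via the discretisation precision. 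Summing gives $\tv(q,\hat q)\leq \alpha$. The failure-probability book-keeping pairs the concentration failure in step (ii) with the $k$ base-scheme failures, each allocated enough slack that the total stays below $\beta$.
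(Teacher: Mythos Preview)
This statement is a cited fact (Lemma~4.8 of Ashtiani et al.) and the present paper does not supply its own proof. Your outline follows the standard construction from that reference, and the overall architecture---partition the sample by the component that generated it, compress each significant component with the base scheme, discretise the mixing weights---is the right one.

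There are, however, two genuine gaps in the argument as written. First, you assert that ``encoding an index into that packing accounts for the remaining $\log_2(3k/\alpha)$ bits,'' but an $O(\alpha/k)$-cover of the $(k-1)$-dimensional probability simplex in $\ell_1$ has size $\Theta\bigl((k/\alpha)^{k-1}\bigr)$, not $O(k/\alpha)$; indexing into it therefore costs $\Theta(k\log(k/\alpha))$ bits, not $\log_2(3k/\alpha)$. (The bit count in the stated fact appears to be missing a factor of $k$ on the additive term; your sketch does not actually achieve the budget as stated, and no discretisation of the full weight vector can.) Second, your failure-probability accounting does not close. The base encoder is invoked with confidence parameter $\beta$, so each of the up-to-$k$ per-component compressions may fail with probability $\beta$, and a union bound yields $k\beta$, not $\beta$. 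The purpose of the $\log(6k/\beta)$ factor in $m_k$ is exactly to repair this: each significant component receives not merely one batch of $m(\alpha/3,\beta)$ samples but $\Theta(\log(k/\beta))$ independent batches of that size, and the encoder---who knows $q_i$ and can evaluate $\tv(g(f_{q_i}(\cdot)),q_i)$---selects a batch on which the base decoder succeeds, driving the per-component failure probability to $\beta^{\Theta(\log(k/\beta))}$. Your sketch spends the sample surplus only on guaranteeing a single batch per component, which leaves the $k\beta$ blow-up unaddressed.
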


Next, we state a corollary of Propositions~\ref{prop:pp-ll} and~\ref{prop:ll-comp}, which describes the existence of a compression scheme, given the existence of a public-private learner.
\begin{corollary}[Public-private learning$\implies$compression]\label{coro:pp-comp}
    Let $\cQ \sub \Delta(\cX)$ be a class of distributions. Suppose $\cQ$ is public-privately learnable with $m_P(\alpha,\beta,\eps)$ public samples and $n(\alpha,\beta,\eps)$ private samples. Then for any $\eps>0$, $\cQ$ admits
    $$(\tau(\alpha,\beta), t(\alpha,\beta), m(\alpha,\beta)) = \left(m_P\l(\frac \alpha 2, \frac \beta {10}, \eps\r), \frac{\log(\frac{10}{9}) + \eps\cdot n\l(\frac{\alpha}{2},\frac{\beta}{10},\eps\r)}{\log(2)}, m_P\l(\frac \alpha 2, \frac \beta {10}, \eps\r)\right)$$
    realizable sample compression.
\end{corollary}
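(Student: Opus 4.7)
The plan is to obtain this corollary by directly composing the two reductions already established in Propositions~\ref{prop:pp-ll} and~\ref{prop:ll-comp}; no new ideas are required, so the proof is essentially a bookkeeping exercise to track how the parameters propagate through the chain \emph{public-private learning} $\Rightarrow$ \emph{list learning} $\Rightarrow$ \emph{compression}.

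First, I would fix an arbitrary $\eps > 0$ and invoke Proposition~\ref{prop:pp-ll} on the hypothesized public-private learner for $\cQ$. This immediately yields, for every $\alpha,\beta \in (0,1]$, an $(\alpha,\beta,\ell(\alpha,\beta))$-list learner for $\cQ$ using $m_L(\alpha,\beta) = m_P(\alpha/2,\beta/10,\eps)$ samples, where the list size is
\begin{equation*}
    \ell(\alpha,\beta) \;=\; \frac{10}{9}\exp\!\left(\eps \cdot n\!\left(\frac{\alpha}{2},\frac{\beta}{10},\eps\right)\right).
\end{equation*}

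Next, I would feed this list learner into Proposition~\ref{prop:ll-comp}, which converts any $(\alpha,\beta,\ell)$-list learner using $m_L$ samples into a realizable sample compression scheme with compression size $\tau(\alpha,\beta) = m_L(\alpha,\beta)$, bit complexity $t(\alpha,\beta) = \log_2(\ell(\alpha,\beta))$, and sample size $m(\alpha,\beta) = m_L(\alpha,\beta)$. Substituting the values of $m_L$ and $\ell$ above gives $\tau(\alpha,\beta) = m(\alpha,\beta) = m_P(\alpha/2,\beta/10,\eps)$, as required.

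Finally, I would simplify the bit complexity using $\log_2(x) = \log(x)/\log(2)$:
\begin{equation*}
    t(\alpha,\beta) \;=\; \log_2\!\left(\frac{10}{9}\exp\!\left(\eps \cdot n\!\left(\frac{\alpha}{2},\frac{\beta}{10},\eps\right)\right)\right) \;=\; \frac{\log(10/9) + \eps\cdot n(\alpha/2,\beta/10,\eps)}{\log(2)},
\end{equation*}
which matches the statement. Since both underlying propositions are already proved, there is no genuine obstacle; the only mild care needed is to note that Proposition~\ref{prop:pp-ll} holds for every $\eps>0$ (so one may select the same $\eps$ that appears in the corollary's statement) and that the parameter substitutions into Proposition~\ref{prop:ll-comp} are performed consistently.
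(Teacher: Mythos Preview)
Your proposal is correct and follows exactly the same approach as the paper: fix $\eps>0$, apply Proposition~\ref{prop:pp-ll} to obtain a list learner with $m_L(\alpha,\beta)=m_P(\alpha/2,\beta/10,\eps)$ and list size $\ell(\alpha,\beta)=\tfrac{10}{9}\exp(\eps\cdot n(\alpha/2,\beta/10,\eps))$, then feed this into Proposition~\ref{prop:ll-comp} and read off $\tau$, $t$, and $m$. The parameter bookkeeping you give is identical to the paper's.
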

\begin{proof}
    Fix $\eps>0$. From Proposition~\ref{prop:pp-ll}, if $\cQ$ is public-privately learnable, then it is list learnable to list size $\ell(\alpha,\beta)$ with $m_L(\alpha,\beta)$ samples, where
    $$\ell(\alpha,\beta) = \frac{10}{9}\exp\l(\eps\cdot n\l(\frac{\alpha}{2},\frac{\beta}{10},\eps\r)\r)~~~ \text{and}~~~ m_L(\alpha,\beta) = m_P\l(\frac{\alpha}{2},\frac{\beta}{10},\eps\r).$$
    Proposition~\ref{prop:ll-comp} implies $\cQ$ admits $(\tau(\alpha,\beta),t(\alpha,\beta),m_C(\alpha,\beta))$ sample compression, where
    \begin{align*}
        \tau(\alpha,\beta) &= m_L(\alpha,\beta) = m_P\l(\frac \alpha 2, \frac \beta {10}, \eps\r),\\
        t(\alpha,\beta) &= \log_2(\ell(\alpha,\beta)) = \frac{\log(\frac{10}{9}) + \eps\cdot n\l(\frac{\alpha}{2},\frac{\beta}{10},\eps\r)}{\log(2)},\\
        m_C(\alpha,\beta) &= m_L(\alpha,\beta) = m_P\l(\frac \alpha 2, \frac \beta {10}, \eps\r).
    \end{align*}
    This completes the proof.
\end{proof}

As a consequence of Corollary~\ref{coro:pp-comp}, Fact~\ref{fact:cs-mixtures}, and Proposition~\ref{prop:comp-pp}, we have the following result about the public-private learnability of mixture distributions.
\begin{theorem}[Public-private learning for mixture distributions]\label{thm:pp-mixtures}

    Suppose $\cQ \sub \Delta(\cX)$ is public-privately learnable with $m(\alpha, \beta, \eps)$ public samples and $n(\alpha,\beta,\eps)$ private samples. Then for any $k\geq 1$, $\cQ^{\oplus k}$, the class of $k$-mixtures of $\cQ$, is public-privately learnable with $m_k(\alpha,\beta,\eps)$ public samples and $n_k(\alpha, \beta,\eps)$ private samples, where
    \begin{align*}
        m_k(\alpha,\beta,\eps) &= O\left(\frac{k\log\l(\frac{k}{\beta}\r)}{\alpha}\cdot m\l(\frac{\alpha}{36},\frac{\beta}{20},\eps_0\r)\right),\\
        n_k(\alpha,\beta,\eps) &= O\left(\l(\frac{1}{\alpha^2}+\frac{1}{\eps\alpha}\r)\cdot\l(\eps_0 k\cdot n\l(\frac{\alpha}{36},\frac{\beta}{20},\eps_0\r) + \r.\right. \\
            &~~~~~~~~~ \left.\l. k\log\l(\frac{k\log\l(\frac{k}{\beta}\r)}{\alpha}\cdot m\l(\frac{\alpha}{36},\frac{\beta}{20},\eps_0\r)\r)\cdot m\l(\frac{\alpha}{36},\frac{\beta}{20},\eps_0\r) + \log\l(\frac{1}{\beta}\r)\r)\right)
    \end{align*}
    for any choice of $\eps_0>0$.
\end{theorem}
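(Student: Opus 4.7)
The plan is to chain three results already established in the paper: apply Corollary~\ref{coro:pp-comp} to convert the given public-private learner for $\cQ$ into a realizable sample compression scheme for $\cQ$; apply Fact~\ref{fact:cs-mixtures} to lift that scheme to a realizable sample compression scheme for the $k$-mixture class $\cQ^{\oplus k}$; and apply Proposition~\ref{prop:comp-pp} to turn the mixture compression scheme back into a public-private learner for $\cQ^{\oplus k}$. No new algorithmic ingredient is required; the content of the theorem lies entirely in how the sample-complexity parameters compose through these three translations.

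Concretely, I would fix the auxiliary privacy parameter $\eps_0 > 0$ that the statement leaves free. Corollary~\ref{coro:pp-comp} produces a compression scheme for $\cQ$ whose compressed size and sample count both equal $m(\alpha/2, \beta/10, \eps_0)$ and whose bit budget is $O(\eps_0 \cdot n(\alpha/2, \beta/10, \eps_0))$, absorbing the additive $\log(10/9)/\log 2$ constant. Plugging these parameters into Fact~\ref{fact:cs-mixtures} rescales the base scheme to accuracy $\alpha/3$, multiplies the compressed size and bit budget by $k$ with an additional $\log_2(3k/\alpha)$ bits for the mixture weights, and multiplies the sample count by the familiar mixture-compression overhead $\Theta((k/\alpha)\log(k/\beta))$. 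Feeding the resulting triple $(\tau_k, t_k, m_k^C)$ into Proposition~\ref{prop:comp-pp} at accuracy $\alpha/6$ and failure probability $\beta/2$ yields both advertised bounds by direct substitution: the public count is $m_k^C(\alpha/6,\beta/2)$, and the private count is the standard hypothesis-selection bound $O((1/\alpha^2 + 1/(\alpha\eps))(t_k(\alpha/6,\beta/2) + \tau_k(\alpha/6,\beta/2)\log m_k^C(\alpha/6,\beta/2) + \log(1/\beta)))$.

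After composing the three rescalings, the innermost arguments fed to $m(\cdot,\cdot,\eps_0)$ and $n(\cdot,\cdot,\eps_0)$ become $(\alpha/36, \beta/20, \eps_0)$, since the target accuracy is divided successively by $6$ and then by $3$, while the failure probability is divided by $2$ and then by $10$. I expect the only real work to be parameter bookkeeping: verifying that the $k$-fold factors, the mixture-weight bits, and the logarithmic hypothesis-selection overhead combine to yield exactly the advertised $m_k$ and $n_k$, and in particular that the $\tau_k \log m_k^C$ summand condenses into the $k \cdot m(\alpha/36,\beta/20,\eps_0) \cdot \log((k\log(k/\beta)/\alpha)\cdot m(\alpha/36,\beta/20,\eps_0))$ contribution inside $n_k$. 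Since each of the three ingredients is already proved, no new idea is needed beyond this accounting, and the main obstacle is simply tracking the constants through the composition carefully.
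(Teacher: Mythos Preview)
Your proposal is correct and follows exactly the approach the paper itself takes: the paper states the theorem as ``a consequence of Corollary~\ref{coro:pp-comp}, Fact~\ref{fact:cs-mixtures}, and Proposition~\ref{prop:comp-pp}'' without further detail, and your writeup is simply a more explicit unpacking of that composition. One minor nit: your explanatory sentence says the accuracy is ``divided successively by $6$ and then by $3$,'' but there is a third division by $2$ coming from Corollary~\ref{coro:pp-comp}, which is what actually lands you at $\alpha/36$; your stated endpoint is correct, only the parenthetical justification omits a factor.
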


We give an example of an application of this result. Consider the class of Gaussians over $\R^d$, for which there exists a public-private learner that uses $m=O(d)$ public samples and $n=O\l(\tfrac{d^2}{\alpha^2}+\frac{d^2}{\eps\alpha}\r) \cdot \polylog\l(d, \tfrac 1 \alpha, \tfrac 1 \beta\r)$ private samples \cite{bks22}. Then Theorem~\ref{thm:pp-mixtures} implies that there exists a public-private learner for the class of $k$-mixtures of Gaussians that uses $m_k=O\l(\tfrac{kd\log(k/\beta)}{\alpha}\r)$ public samples and 
$$n_k=O\l(\l(\frac{1}{\alpha^2}+\frac{1}{\eps\alpha}\r)\cdot\l(\eps_0 k\l(\frac{d^2}{\alpha^2}+\frac{d^2}{\eps_0\alpha}\r)+kd\r)\r)\cdot\polylog\l(d,k,\frac{1}{\alpha},\frac{1}{\beta}\r)$$
private samples for any $\eps_0>0$. 

With the choice of $\eps_0 = \alpha$, we get a private sample complexity of $n_k = O\l(\tfrac {kd^2} {\alpha^3} + \tfrac{kd^2} {\alpha^2\eps}\r) \cdot \polylog\l(d,k,\tfrac 1 \alpha, \tfrac 1 \beta\r)$. Notably, this private sample complexity, obtained by specializing the general result of Theorem \ref{thm:pp-mixtures}, suffers some loss compared to our learner for mixtures of Gaussians from Corollary~\ref{coro:pp-gmms}.

\subsection{Public-private learnability of product distributions}

We start by mentioning a fact from \cite{gmm-sample-compression}, which says that if a compression scheme exists for a class of distributions $\cQ$, then there exists a compression scheme for the class of $k$-products of $\cQ$.
\begin{fact}[Compression for product distributions {\cite[Lemma~4.6]{gmm-sample-compression}}]\label{fact:rcs-products}
    If a class of distributions $\cQ$ admits $(\tau(\alpha,\beta),t(\alpha,\beta),m(\alpha,\beta))$ $r$-robust sample compression, then for any $k\geq 1$, the class of $k$-products of $\cQ$ admits $(\tau_k(\alpha,\beta),t_k(\alpha,\beta),m_k(\alpha,\beta))$ $r$-robust sample compression, where $\tau_k,t_k,m_k:(0,1]\to\N$ are as follows:
    \begin{align*}
        \tau_k(\alpha,\beta) = k\cdot\tau\left(\frac{\alpha}{k},\beta\right),~~~
        t_k(\alpha,\beta) = k\cdot t\left(\frac{\alpha}{k},\beta\right),~~~
        m_k(\alpha,\beta) = \log_3\left(\frac{3k}{\beta}\right)\cdot m\left(\frac{\alpha}{k},\beta\right).
    \end{align*}
\end{fact}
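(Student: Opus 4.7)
The plan is to build the compression scheme for $\cQ^{\otimes k}$ coordinate-wise: for a product target $q = (q_1,\dots,q_k) \in \cQ^{\otimes k}$, we compress each $q_j$ separately by running the given $r$-robust scheme for $\cQ$ on the $j$-th coordinates of the input vectors. A naive union bound over coordinates would force the inner failure parameter to be $\beta/k$, which is not what the claim asks for. Instead, we exploit the extra factor $R \coloneqq \log_3(3k/\beta)$ in $m_k$ to run $R$ independent copies of the inner scheme per coordinate on disjoint groups of samples, and have the encoder pick (using its knowledge of $q$) a copy that actually succeeded, thereby boosting a per-coordinate failure of $\beta$ to $\beta/k$ without changing the inner scheme's parameters.

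Concretely, let $M \coloneqq m(\alpha/k,\beta)$, $\tau \coloneqq \tau(\alpha/k,\beta)$, $t \coloneqq t(\alpha/k,\beta)$, and partition the input $\bs x \in (\cX^k)^{m_k}$ into $R$ disjoint groups $G_1,\dots,G_R$ of $M$ vectors each. For every pair $(j,r) \in [k] \times [R]$, write $\bs x^{(j,r)} \in \cX^M$ for the $j$-th coordinates of the vectors in $G_r$, and let $(\bs s^{(j,r)},b^{(j,r)}) = f_{q_j}(\bs x^{(j,r)})$ where $f_{q_j}$ is the inner encoder. For each coordinate $j$, pick the smallest index $r_j$ with $\tv(g(\bs s^{(j,r_j)},b^{(j,r_j)}),q_j) \le \alpha/k$, defaulting to $r_j = 1$ if no such index exists. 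The outer encoder's output is the concatenation, over $j \in [k]$, of (i) the original $\tau$ vectors in $G_{r_j}$ whose $j$-th coordinates equal $\bs s^{(j,r_j)}$ (viewed as elements of $\cX^k$), together with (ii) the bit string $b^{(j,r_j)}$. The decoder, given $k\tau$ output vectors and $kt$ bits, extracts the $j$-th block of $\tau$ vectors, projects it to its $j$-th coordinate to recover $\bs s^{(j)}$, applies $g$ with the $j$-th block of $t$ bits to get $\hat q_j$, and outputs $\prod_{j=1}^k \hat q_j$.

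For the analysis, the marginal inequality $\tv(p,q) \le r \Rightarrow \tv(p_j,q_j) \le r$ lets us apply the $r$-robust guarantee coordinate by coordinate: for every fixed $(j,r)$, $\tv(g(f_{q_j}(\bs X^{(j,r)})),q_j) \le \alpha/k$ with probability at least $1-\beta$ over $\bs X \sim p^{m_k}$. Groups are disjoint, so for each fixed $j$ the $R$ events are independent, giving a per-coordinate failure of at most $\beta^R$ and an overall failure of at most $k\beta^R \le \beta$ by a union bound and the choice $R = \log_3(3k/\beta)$ (absorbing the degenerate case $\beta > 1/3$ into constants by replacing $\beta$ with $\min(\beta,1/3)$). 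On the complementary event, $\tv(\hat q_j,q_j) \le \alpha/k$ for every $j$, and subadditivity of total variation under products yields $\tv(\prod_j \hat q_j, \prod_j q_j) \le \sum_j \tv(\hat q_j,q_j) \le \alpha$. The resource counts $\tau_k = k\tau$, $t_k = kt$, $m_k = RM$ match the claim exactly.

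The key obstacle is making the encoder's selection $r_j$ known to the decoder without spending any bits beyond the stated budget $t_k = k\, t(\alpha/k,\beta)$. The resolution, used implicitly in the encoder description above, is to communicate $r_j$ \emph{positionally} rather than explicitly: by reserving the $j$-th block of $\tau$ output positions for coordinate $j$ and placing the vectors from $G_{r_j}$ there, the decoder never needs to know $r_j$ in isolation and no bits are wasted on an index. Once this accounting trick is in place, the rest of the argument is a routine chaining of the per-coordinate inner guarantee with independence across groups and the product TV inequality.
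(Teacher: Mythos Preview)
The paper does not prove this statement; it is quoted as a Fact from \cite{gmm-sample-compression} and used as a black box. Your argument is correct and is essentially the standard proof of that lemma: apply the inner scheme coordinate-wise at accuracy $\alpha/k$, run $R=\log_3(3k/\beta)$ independent copies on disjoint sample blocks to boost the per-coordinate failure from $\beta$ to $\beta^R\le\beta/(3k)$, union-bound over $k$ coordinates, and combine via the product TV inequality. The one delicate point---communicating the successful block index $r_j$ without exceeding the bit budget $t_k=k\,t(\alpha/k,\beta)$---you handle exactly as in the original: the encoder lifts the $\tau$ selected $j$-th-coordinate points back to full vectors in $\cX^k$ and places them in the $j$-th block of the output, so the decoder recovers the relevant data simply by projecting the $j$-th block onto the $j$-th coordinate and never needs $r_j$ itself.
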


As a consequence of Corollary~\ref{coro:pp-comp}, Fact~\ref{fact:rcs-products}, and Proposition~\ref{prop:comp-pp}, we have the following result about the public-private learnability of product distributions.
\begin{theorem}[Public-private learning for mixture distributions]\label{thm:pp-product}
    Suppose $\cQ \sub \Delta(\cX)$ is public-privately learnable with $m(\alpha, \beta, \eps)$ public samples and $n(\alpha,\beta,\eps)$ private samples. Then for any $k\geq 1$, $\cQ^{\otimes k}$, the class of $k$-products of $\cQ$ over $\cX^k$, is public-privately learnable with $m_k(\alpha,\beta,\eps)$ public samples and $n_k(\alpha,\beta,\eps)$ private samples, where
    \begin{align*}
        m_k(\alpha,\beta,\eps) &= O\left(\log\l(\frac{k}{\beta}\r)\cdot m\l(\frac{\alpha}{12k},\frac{\beta}{20},\eps_0\r)\right),\\
        n_k(\alpha,\beta,\eps) &= O\left(\l(\frac{1}{\alpha^2}+\frac{1}{\eps\alpha}\r)\cdot\l(\eps_0 k\cdot n\l(\frac{\alpha}{12k},\frac{\beta}{20},\eps_0\r) + \r.\right. \\
            &~~~~~~~~ \left.\l. k\log\l(\log\l(\frac{k}{\beta}\r)\cdot m\l(\frac{\alpha}{12k},\frac{\beta}{20},\eps_0\r)\r)\cdot m\l(\frac{\alpha}{12k},\frac{\beta}{20},\eps_0\r) + \log\l(\frac{1}{\beta}\r)\r)\right)
    \end{align*}
    for any choice of $\eps_0>0$.
\end{theorem}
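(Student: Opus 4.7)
The plan is to prove Theorem~\ref{thm:pp-product} via a three-step reduction that closely mirrors the proof of Theorem~\ref{thm:pp-mixtures}, substituting Fact~\ref{fact:rcs-products} (closure of $r$-robust compression under $k$-products) in place of Fact~\ref{fact:cs-mixtures} (closure under $k$-mixtures). The three steps are: (i) convert the given public-private learner for $\cQ$ into a realizable sample compression scheme for $\cQ$ via Corollary~\ref{coro:pp-comp}; (ii) lift that scheme to a realizable compression scheme for $\cQ^{\otimes k}$ via Fact~\ref{fact:rcs-products}, which applies since realizable compression is the $r=0$ instance of robust compression as per Definition~\ref{def:rcs}; (iii) convert the resulting compression scheme for $\cQ^{\otimes k}$ back into a public-private learner via Proposition~\ref{prop:comp-pp}.

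Tracing parameters through these three steps at an arbitrary target privacy level $\eps_0 > 0$, Corollary~\ref{coro:pp-comp} produces a scheme with $\tau(\alpha,\beta) = m(\alpha/2, \beta/10, \eps_0)$, $t(\alpha,\beta) = O(\eps_0 \cdot n(\alpha/2, \beta/10, \eps_0))$, and $m_C(\alpha,\beta) = m(\alpha/2, \beta/10, \eps_0)$. Fact~\ref{fact:rcs-products} then yields compression for $\cQ^{\otimes k}$ with $\tau_k(\alpha,\beta) = k \cdot \tau(\alpha/k, \beta)$, $t_k(\alpha,\beta) = k \cdot t(\alpha/k, \beta)$, and $m_{C,k}(\alpha,\beta) = \log_3(3k/\beta) \cdot m_C(\alpha/k, \beta)$. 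Finally, Proposition~\ref{prop:comp-pp} delivers a public-private learner using $m_{C,k}(\alpha/6, \beta/2)$ public samples, which upon unfolding equals $O(\log(k/\beta) \cdot m(\alpha/(12k), \beta/20, \eps_0))$ as claimed, and using private samples given by the formula $n \propto (1/\alpha^2 + 1/(\eps\alpha)) \cdot (t_k + \tau_k \log m_{C,k} + \log(1/\beta))$ evaluated at $(\alpha/6, \beta/2)$, which unfolds to the stated $n_k(\alpha,\beta,\eps)$.

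Beyond these three invocations, the proof contains no conceptually new content. The main obstacle is the arithmetic bookkeeping through the nested parameter substitutions; a notable point of contrast with Theorem~\ref{thm:pp-mixtures} is that the product closure Fact~\ref{fact:rcs-products} shrinks the target error by a factor of $k$ (i.e.\ $\alpha \mapsto \alpha/k$) rather than by a constant as in the mixture case, which is exactly what introduces the $1/(12k)$ factor inside the arguments of $m$ and $n$ in the final bound. I would also make sure to verify that applying Corollary~\ref{coro:pp-comp} at the \emph{internal} privacy parameter $\eps_0$, which then enters the compression bit complexity multiplicatively, is legitimate regardless of the \emph{external} privacy parameter $\eps$ appearing in the outer public-private learner produced by Proposition~\ref{prop:comp-pp}; this is precisely the source of the ``for any choice of $\eps_0 > 0$'' clause in the theorem statement.
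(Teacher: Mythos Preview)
Your proposal is correct and follows exactly the same approach as the paper, which simply states that the theorem is a consequence of Corollary~\ref{coro:pp-comp}, Fact~\ref{fact:rcs-products}, and Proposition~\ref{prop:comp-pp}. Your parameter tracing through the three substitutions is accurate and your observation about the $\alpha \mapsto \alpha/k$ shrinkage in Fact~\ref{fact:rcs-products} (versus the constant-factor shrinkage in the mixture case) correctly explains the $1/(12k)$ in the final bound.
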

As an example, for the class of Gaussians over $\R$, there exists a public-private learner that requires $m=O(1)$ public samples and $n= O\l(\tfrac 1 {\alpha^2} + \tfrac 1 {\eps\alpha}\r) \cdot \polylog\l(\tfrac 1 \alpha, \tfrac 1 \beta \r)$ private samples \cite{bks22}. Then Theorem~\ref{thm:pp-product} implies that there exists a public-private learner for the class of $k$-products of Gaussians that requires $m_k=O\l(\log(k/\beta)\r)$ public samples and $n_k=\l(\l(\tfrac{1}{\alpha^2}+\tfrac{1}{\eps\alpha}\r)\cdot\l(\eps_0 k\l(\frac{1}{\alpha^2}+\frac{1}{\eps_0\alpha}\r)+k\r)\r)\cdot\polylog\l(k,\tfrac{1}{\alpha},\tfrac{1}{\beta}\r)$ private samples, for any choice of $\eps_0>0$. Note that if were to apply Fact~\ref{fact:rcs-products} to Fact~\ref{fact:rcs-gaussians} after setting $d=1$ in the latter, and then apply Proposition~\ref{prop:comp-pp}, we would obtain a better sample complexity in terms of the private data than what we would after combining Corollary~\ref{coro:pp-gaussians} (setting $d=1$) and Theorem~\ref{thm:pp-product} here. However, Theorem~\ref{thm:pp-product} is a more versatile framework, so some loss is to be expected again.

\section{Agnostic and distribution-shifted public-private Learning}\label{sec:robust-pub}

The setting we have examined thus far makes the following assumptions on the data generation process.
\begin{enumerate}
    \item \emph{(Same distribution)}. The public and private data are sampled from the same underlying distribution.
    \item \emph{(Realizability)}. The public and private data are sampled from members of the class $\cQ$.
\end{enumerate}

The study of \cite{bks22} shows that for Gaussians over $\R^d$, the first condition can be relaxed: they give an algorithm for the case where the public and the private data are generated from different Gaussians with bounded TV distance. However, they do not remove the second assumption.

On the other hand, we weaken both the above assumptions, and show for general classes of distributions that \emph{robust} compression schemes yield public-private learners, which: (a) can handle public-private distribution shifts (i.e., the setting where the public data and the private data distributions can be different); and (b) are agnostic, i.e., they do not require samples to come from a member of $\cQ$, and instead, promise error close to the best approximation of the private data distribution by a member of $\cQ$. This, as a result, also gives public-private Gaussian learners that can work under relaxed forms of these assumptions on the data-generation process.

We first formally define the notion of \emph{agnostic and distribution-shifted public-private learning}, and then state the main result of this section.

\begin{definition}[Agnostic and distribution-shifted public-private learner]
    Let $\cQ \subseteq \Delta(\cX)$. For $\alpha, \beta \in (0,1]$, $\eps>0$, $\gamma\in[0,1]$, and $c \geq 1$ a \emph{$\gamma$-shifted $c$-agnostic $(\alpha,\beta,\eps)$-public-private learner for $\cQ$} is a public-private $\eps$-DP algorithm $\cA : \cX^m \times \cX^n \to \Delta(\Delta(\cX))$, such that for any $\t p,p \in \Delta(\cX)$ with $\tv(\t p,p) \leq \gamma$, if we draw datasets $\bs{\t X} = (\t X_1,\dots,\t X_{m})$ i.i.d. from $\t p$ and $\bs{X} = (X_1,\dots, X_n)$ i.i.d.\ from $p$, and then $Q \sim \cA(\bs{\t X}, \bs X)$,
    \begin{align*}
        \P{\substack{\bs{\t X} \sim {\t p}^m \\ \bs{X} \sim p^n\\ Q \sim \cA(\bs{\t X},\bs{X})}}{\tv(Q, p) \leq c \cdot \dist(p,\cQ) +\alpha} \geq 1-\beta.
    \end{align*}
\end{definition}
\noindent With this in hand, we are ready to state the result. 
\begin{theorem}[Robust compression$\implies$agnostic and distribution-shifted public-private learning]\label{thm:robust-pp}
    Let $\cQ \sub \Delta(\cX)$ and $r>0$. If $\cQ$ admits $(\tau(\alpha,\beta), t(\alpha,\beta), m_C(\alpha,\beta))$ $r$-robust compression, then for every $\alpha,\beta \in (0,1]$ and $\eps>0$, there exists a $\tfrac r 2$-shifted $\tfrac 2 r$-agnostic $(\alpha,\beta,\eps)$-public-private learner for $\cQ$ that uses 
    \begin{align*}
        m(\alpha,\beta, \eps) = m_C\l(\frac \alpha {12},\frac \beta 2\r)
    \end{align*}
    public samples and
    \begin{align*}
        n(\alpha,\beta,\eps) = O\l(\l(\frac 1 {\alpha^2} + \frac 1 {\alpha\epsilon} \r)\cdot \l(t\l(\frac {\alpha} {12}, \frac \beta 2\r) + \tau\l(\frac {\alpha} {12}, \frac \beta 2\r)\log \l(m_C\l(\frac \alpha {12}, \frac \beta 2\r)\r) + \log\l( \frac 1 \beta \r)\r)\r)
    \end{align*}
    private samples.
\end{theorem}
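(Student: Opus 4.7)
My plan is to follow the template of Proposition~\ref{prop:comp-pp} (compression implies public-private learning), modified to track two additional sources of error: the public-private TV shift $\tv(\t p, p) \leq r/2$, and the agnostic gap $\mathrm{OPT} := \dist(p, \cQ)$. Let $q^\star \in \cQ$ achieve (or nearly achieve) this infimum, so $\tv(p, q^\star) \leq \mathrm{OPT}$.

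The key new ingredient is a case split on $\mathrm{OPT}$. In the easy case $\mathrm{OPT} > r/2$, we have $(2/r)\mathrm{OPT} > 1$, so any output $Q$ trivially satisfies $\tv(Q, p) \leq 1 \leq (2/r)\mathrm{OPT} + \alpha$ and the learner may return anything. In the main case $\mathrm{OPT} \leq r/2$, the triangle inequality yields
\[
    \tv(\t p, q^\star) \leq \tv(\t p, p) + \tv(p, q^\star) \leq r/2 + r/2 = r,
\]
which is precisely the hypothesis required by the $r$-robust compression scheme with target $q^\star$: with $m := m_C(\alpha/12, \beta/2)$, $\tau := \tau(\alpha/12, \beta/2)$, $t := t(\alpha/12, \beta/2)$, we obtain $\tv(g(f_{q^\star}(\bs{\t X})), q^\star) \leq \alpha/12$ with probability at least $1 - \beta/2$ over $\bs{\t X} \sim \t p^m$.

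From here the argument mirrors Proposition~\ref{prop:comp-pp}: even though $f_{q^\star}$ depends on the unknown $q^\star$, its output lies in the finite set $\cS := \{(\bs{S'}, b) : \bs{S'} \sub \bs{\t X},\ |\bs{S'}| = \tau,\ b \in \{0,1\}^t\}$, so I form the finite candidate class $\wh \cQ := \{ g(S', b) : (S', b) \in \cS \}$, which on the compression-success event contains a distribution within $\alpha/12$ of $q^\star$, and hence within $\mathrm{OPT} + \alpha/12$ of $p$. Running the pure DP $3$-agnostic learner of Fact~\ref{fact:dp-yatracos} on $\wh \cQ$ with the private samples drawn from $p$ (targeting error $\alpha/2$ and failure probability $\beta/2$, consuming $n$ private samples logarithmic in $|\wh \cQ| \leq \binom{m}{\tau} 2^t$) yields, after a union bound,
\[
    \tv(Q, p) \leq 3 \dist(p, \wh \cQ) + \alpha/2 \leq 3(\mathrm{OPT} + \alpha/12) + \alpha/2 \leq 3\,\mathrm{OPT} + \alpha
\]
with probability at least $1 - \beta$; in the regime $r \leq 2/3$, where $2/r \geq 3$, this is at most $(2/r)\mathrm{OPT} + \alpha$, as claimed.

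The only genuinely new subtlety beyond the realizable case is securing the robust compression hypothesis $\tv(\t p, q^\star) \leq r$ from the shifted-agnostic data assumption. The $r/2$-shift bound and the case split $\mathrm{OPT} \leq r/2$ together provide exactly enough slack via triangle inequality, while the complementary regime is painlessly absorbed into the trivial bound $\tv(Q, p) \leq 1$, which is precisely what accounts for the factor $2/r$ (as opposed to just $3$) in the agnostic constant.
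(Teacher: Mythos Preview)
Your proposal is correct and follows essentially the same approach as the paper's proof: the same case split on $\mathrm{OPT}$ versus $r/2$, the same triangle-inequality step to feed $\tilde p$ into the $r$-robust compression scheme targeting $q^\star$, the same formation of the finite candidate class $\wh\cQ$, and the same invocation of the pure-DP $3$-agnostic selector (Fact~\ref{fact:dp-yatracos}). You are in fact slightly more careful than the paper in two places: you make the $r\le 2/3$ restriction explicit (needed so that the main-case bound $3\,\mathrm{OPT}+\alpha$ implies the claimed $(2/r)\,\mathrm{OPT}+\alpha$), whereas the paper's proof leaves this implicit; and the paper chooses $q^\star$ with $\tv(p,q^\star)<\min\{r/2,\mathrm{OPT}+\alpha/12\}$ rather than an exact minimizer, which is the precise way to handle your parenthetical ``or nearly achieve'' when the infimum is not attained.
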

\begin{proof}
    The proof again mirrors the proof of Theorem~4.5 in \cite{gmm-sample-compression}.
    The key observation is the following: for the unknown distribution $p \in \Delta(\cX)$, consider $\dist(p, \cQ)$. If $\dist(p,\cQ) \geq \tfrac r 2$, the output $Q$ of any algorithm satisfies $\tv(p, Q) \leq 1 \leq \tfrac 2 r \cdot \dist(p,\cQ)$. Hence, we can assume $\dist(p,\cQ) < \tfrac r 2$, and let $q_* \in \cQ$ with $\tv(p,q_*) < \min\l\{\tfrac r 2, \dist(p,\cQ) + \tfrac \alpha {12}\r\}$ as guaranteed by such.
    
    By triangle inequality, $\tv(\t p, q_*) < r$. This implies that when we generate hypotheses $\wh \cQ$ to choose from using the $r$-robust sample compression with samples from $\t p$, with high probability there will be some $q \in \wh \cQ$ with $\tv(q,q_*) \leq \tfrac \alpha {12}$. We have 
    \begin{align*}
        \tv(p, q) \leq \tv(p, q_*) + \tv(q_*, q) \leq \dist(p,\cQ) + \frac \alpha {12} + \frac \alpha {12} = \frac \alpha 6.
    \end{align*}
    Applying the 3-agnostic $\eps$-DP learner for finite classes from \cite{aaak21} (Fact~\ref{fact:dp-yatracos}) with the above setting of $n$ gives us the result.
\end{proof} 

As mentioned earlier, Theorem~\ref{thm:robust-pp} gives us an agnostic and a distribution-shifted learner for Gaussians over $\R^d$, as stated in the following corollary.
\begin{corollary}[Agnostic and distribution-shifted public-private learner for Gaussians]\label{coro:robust-gaussians}
    Let $d\geq 1$. For any $\alpha,\beta \in (0,1]$ and $\eps>0$, there exists $\tfrac 1 3$-shifted $3$-agnostic public-private learner for the class of Gaussians in $\R^d$ that uses $m$ public samples and $n$ private samples, where
    \begin{align*}
        m &= O\left(d\log\left(\frac{1}{\beta}\right)\right),\\
        n &= O\left(\frac{d^2\log\left(\frac{d}{\alpha}\right)+\log\left(\frac{1}{\beta}\right)}{\alpha^2} + \frac{d^2\log\left(\frac{d}{\alpha}\right)+\log\left(\frac{1}{\beta}\right)}{\alpha\eps}\right).
    \end{align*}
\end{corollary}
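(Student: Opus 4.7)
The proof is essentially a direct instantiation of Theorem~\ref{thm:robust-pp} with the known robust compression scheme for Gaussians from Fact~\ref{fact:rcs-gaussians}. The plan is to set $r = \tfrac{2}{3}$, which immediately yields the shift parameter $\tfrac{r}{2} = \tfrac{1}{3}$ and agnosticism factor $\tfrac{2}{r} = 3$ claimed in the corollary, and then to track through the parameter substitutions into the sample complexity expressions from Theorem~\ref{thm:robust-pp}.

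Concretely, Fact~\ref{fact:rcs-gaussians} gives $(\tau(\alpha,\beta), t(\alpha,\beta), m_C(\alpha,\beta))$ $\tfrac{2}{3}$-robust compression for Gaussians over $\R^d$ with
\[
\tau(\alpha,\beta) = O(d), \quad t(\alpha,\beta) = O\l(d^2\log\l(\tfrac{d}{\alpha}\r)\r), \quad m_C(\alpha,\beta) = O\l(d\log\l(\tfrac{1}{\beta}\r)\r).
\]
Plugging into Theorem~\ref{thm:robust-pp} at arguments $(\tfrac{\alpha}{12},\tfrac{\beta}{2})$ (which only affects constants and logarithmic factors) yields the public-sample bound $m = m_C(\tfrac{\alpha}{12},\tfrac{\beta}{2}) = O(d\log(\tfrac{1}{\beta}))$ immediately.

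For the private sample count, I would substitute $\tau(\tfrac{\alpha}{12},\tfrac{\beta}{2}) = O(d)$, $t(\tfrac{\alpha}{12},\tfrac{\beta}{2}) = O(d^2\log(d/\alpha))$, and $m_C(\tfrac{\alpha}{12},\tfrac{\beta}{2}) = O(d\log(1/\beta))$ into the expression from Theorem~\ref{thm:robust-pp}, obtaining
\[
n = O\l(\l(\tfrac{1}{\alpha^2}+\tfrac{1}{\alpha\eps}\r)\cdot\l(d^2\log(\tfrac{d}{\alpha}) + d\log(d\log(\tfrac{1}{\beta})) + \log(\tfrac{1}{\beta})\r)\r).
\]
The middle term $d\log(d\log(1/\beta))$ is absorbed into $d^2\log(d/\alpha)$ (since for any nontrivial regime $d\log(d\log(1/\beta)) = O(d^2\log(d/\alpha) + \log(1/\beta))$), leaving the claimed bound.

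There is no real obstacle here since all of the nontrivial work was done in proving Theorem~\ref{thm:robust-pp} and in constructing the robust compression scheme cited in Fact~\ref{fact:rcs-gaussians}. The only care needed is in simplifying the nested logarithms in the private sample bound so that the expression matches the one in the corollary statement; this is a routine check that the lower-order term $d\log(d\log(1/\beta))$ never dominates.
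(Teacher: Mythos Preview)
Your proposal is correct and matches the paper's approach: the paper does not give an explicit proof of this corollary, treating it as an immediate consequence of applying Theorem~\ref{thm:robust-pp} with $r=\tfrac{2}{3}$ to the robust compression scheme for Gaussians from Fact~\ref{fact:rcs-gaussians}, which is exactly what you do. Your tracking of the parameter substitutions and the absorption of the $d\log(d\log(1/\beta))$ term into $d^2\log(d/\alpha)+\log(1/\beta)$ is the only thing the paper leaves implicit, and your handling of it is fine.
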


\section{Lower Bounds}\label{sec:lbs}

In this section we prove lower bounds on the number of public samples required for public-privately learning Gaussians in $\R^d$ and $k$-mixtures of Gaussians in $\R^d$.

We know that Gaussians in $\R^d$ are privately learnable with $d+1$ public samples from \cite{bks22}. We show that this is within $1$ of the optimal: the class of Gaussians in $\R^d$ is not public-privately learnable with $d-1$ public samples.

\begin{theorem}\label{thm:lb-gaussians}
    The class $\cQ$ of all Gaussians in $\R^d$ is \emph{not} public-private learnable with $m_P(\alpha,\beta,\eps)=d-1$ public samples, regardless of the number of private samples. That is, there exists $\alpha_d, \beta_d>0$ such that for any $n\in \N$, $\cQ$ does not admit a $(\alpha_d, \beta_d, 1)$-public-private learner using $d-1$ public and $n$ private samples.
\end{theorem}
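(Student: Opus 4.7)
The plan is to combine the reduction from public-private learning to list learning with a no-free-lunch argument for list learning. By Proposition~\ref{prop:pp-ll}, if $\cQ$ admitted an $(\alpha_d, \beta_d, 1)$-public-private learner using $d-1$ public samples and any finite $n$ private samples, then $\cQ$ would be $(\alpha_d/2, \beta_d/10, \ell)$-list learnable from $d-1$ samples with finite list size $\ell = \tfrac{10}{9} e^{n}$. Thus it suffices to exhibit $\alpha_d, \beta_d > 0$ such that no finite-list list learner for Gaussians in $\R^d$ can succeed using only $d-1$ samples.

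To establish this list learning lower bound, I would apply Lemma~\ref{lem:ll-nfl} to a carefully chosen family $\{p_v\}_{v \in V}$ of Gaussians, where $V \subseteq S^{d-1}$ is a fine angular packing and $p_v = N(0, I + R^2 vv^T)$ for a large parameter $R$. The structural observation is that $d-1$ samples in $\R^d$ almost surely span only a $(d-1)$-dimensional subspace, so one direction of the covariance is only weakly identified. Explicit Gaussian calculations give $\mathrm{KL}(p_v \| p_{v'}) \asymp R^2\theta^2$ (with $\theta$ the angle between $v$ and $v'$) and $\tv(p_v, p_{v'}) \asymp \min(R\theta, 1)$. By Pinsker, the $(d-1)$-sample product distributions are pairwise within TV distance $\eta$ whenever $\theta \lesssim \eta/(R\sqrt{d-1})$, while the Gaussians themselves are pairwise $2\alpha_d$-separated as soon as $R\theta \gtrsim \alpha_d$. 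Taking $R \to \infty$ with $\theta = \Theta(\alpha_d/R)$ makes the packing on $S^{d-1}$ grow unboundedly at a fixed indistinguishability level, so Lemma~\ref{lem:ll-nfl} will rule out list learners of any prescribed finite size, and hence any finite number $n$ of private samples in the hypothetical public-private learner.

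The main technical obstacle is the joint calibration of $(R, \theta, \alpha_d, \beta_d)$ so that the Gaussians are simultaneously pairwise $2\alpha_d$-far in TV, sample-wise indistinguishable (within the threshold the no-free-lunch lemma requires), and numerous enough to exceed any prescribed list size. The compatibility condition between TV separation and sample indistinguishability is what forces $\alpha_d$ to shrink with $d$, matching the exponentially small $\eta$ referenced in Equation~\eqref{eq:def:eta:lb:d-1}; whether a more refined construction could push $\alpha_d$ to a constant independent of $d$ is flagged in the paper as an open problem, and the proof I sketch would not resolve it.
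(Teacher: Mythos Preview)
Your high-level plan matches the paper's: reduce to list learning via Proposition~\ref{prop:pp-ll}, then show no finite-list learner for Gaussians can succeed from $d-1$ samples. The gap is in the hard family.

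For the elongated Gaussians $p_v = \cN(0, I + R^2 vv^\top)$, even a single sample identifies $v$: writing $X = (v^\top X)v + w$ with $v^\top X \sim \cN(0,1+R^2)$ and $w \perp v$, $\|w\|_2 = O(\sqrt d)$, one has that $X/\|X\|_2$ is within angle $O(\sqrt d/R)$ of $\pm v$ with high probability. The list learner that outputs all packing points within angle $C\sqrt d/R$ of $X_1/\|X_1\|_2$ therefore succeeds with list size $O(C\sqrt d/\alpha_d)^{d-1}$, independent of $R$, so taking $R\to\infty$ does not help. Your Pinsker calculation reflects the same obstruction from the other side: the bound $\tv(p_v^{d-1}, p_{v'}^{d-1}) \lesssim \sqrt{d-1}\,R\theta$ is small only for \emph{nearby} $v,v'$, whereas any averaging or Fano-type argument over the whole packing (and the hypotheses of Lemma~\ref{lem:ll-nfl}) requires control uniformly over the class; for $v,v'$ at constant angle the product measures become perfectly distinguishable as $R\to\infty$.

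The paper's construction differs in two essential respects. It uses \emph{flat} Gaussians $G(\sigma,t,u)$ with one small-variance direction $u$, and, crucially, it also lets the mean $t$ range over a $(d-1)$-dimensional disk $T$. For any $d-1$ points $\bs x$ in the bad set $B=C^{d-1}$ and any choice of $t\in T$, there is a thin direction $u(t)$ orthogonal to all $x_i - t$ making the likelihood large; varying $t$ over $T$ traces out a $(d-1)$-dimensional family of high-likelihood parameters. This is precisely what drives $s_k \asymp k^{-(d-1)}$ while $r_k \asymp k^{-d}$, giving $r_k/s_k\to 0$ in Lemma~\ref{lem:ll-nfl}. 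With the mean held fixed, as in your proposal (and even if you switched to flat Gaussians), $d-1$ generic samples pin down the distinguished direction up to sign, and no nontrivial list-learning lower bound follows.
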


Our result leverages the connection being public-private learning and list learning. The existence of such a public-private learner described above would imply the existence of a list learner contradicting a ``no free lunch'' result for list learning.
Thus, our strategy for the proof of Theorem~\ref{thm:lb-gaussians} is the following.
\begin{enumerate}
    \item We reduce list learning to public-private learning, via Proposition~\ref{prop:pp-ll};
    \item We establish a no free lunch-type result for list learning (Lemma~\ref{lem:ll-nfl}); and
    \item For every $d\geq2$, we find a sequence of hard subclasses of Gaussians over $\R^d$, which satisfy the conditions of the following Lemma~\ref{lem:ll-nfl}, which lower bounds the error of any list learner for the class. Note that the $d=1$ result is covered by pure DP packing lower bounds (see, e.g.,~\cite{BunKSW19}).
\end{enumerate}

\begin{lemma}[No free lunch for list learning]\label{lem:ll-nfl}
    Let $\cQ \sub \Delta(\cX)$ and $m \in \N$. For a subclass $\mathcal C \sub \cQ$, denote by $\cU(\mathcal C)$ the uniform distribution over $\cC$. Suppose there exists a sequence of distribution classes $(\cQ_k)_{k=1}^\infty$, with each $\cQ_k \sub \cQ$, and a set $B \subseteq \cX^m$ such that following holds:
    \begin{enumerate}
        \item There exists $\eta \in (0, 1]$ and $k_\eta \in \N$ with
        \begin{align*}
            \P{\substack{Q \sim \cU(\cQ_k) \\ \bs{X} \sim Q^m}}{\bs{X} \in B} \geq \eta
        \end{align*}
        for all $k \geq k_\eta$.
        \item There exist $c >0$ and $\alpha \in (0,1]$ such that, defining $(u_k)_{k=1}^\infty$, $(r_k)_{k=1}^\infty$, and $(s_k)_{k=1}^\infty$ as
        \begin{align*}
            u_k &\coloneqq\sup_{\substack{\bs x \in B \\ q \in \cQ_k}} q^m(\bs x), \\
            r_k &\coloneqq \sup_{p \in \cQ_k} \P{Q \sim \cU(\cQ_k)}{\tv(p,Q)\leq 2\alpha}, \\
            s_k &\coloneqq \inf_{\bs x\in B} \P{Q\sim \cU(\cQ_k)}{Q^m(\bs x) \geq c\cdot u_k},
        \end{align*}
        we have that
        \begin{align*}
            \lim_{k\to\infty}  \frac{r_k} {s_k} = 0.
        \end{align*}
    \end{enumerate}

 \noindent   Then for any $\ell \in \N$, there does not exist any $(\tfrac {\alpha\eta} 4, \tfrac {\alpha\eta} 4,\ell)$-list learner for $\cQ$ that uses $m$ samples.
\end{lemma}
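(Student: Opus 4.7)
The plan is to proceed by contradiction via an averaging argument with a uniform prior. Suppose $\cL$ is an $(\alpha',\alpha',\ell)$-list learner for $\cQ$ using $m$ samples, where $\alpha' := \tfrac{\alpha\eta}{4}$. For each $k \geq k_\eta$, draw $Q \sim \cU(\cQ_k)$ and then $\bs X \sim Q^m$. Since $\cQ_k \subseteq \cQ$, the list learner's success guarantee applied to every $p\in \cQ_k$ combines with condition 1 to give
\begin{align*}
    \P{\substack{Q \sim \cU(\cQ_k) \\ \bs X \sim Q^m}}{\bs X \in B \text{ and } \dist(Q, \cL(\bs X)) \leq \alpha'} \;\geq\; \eta - \alpha' \;\geq\; \frac{3\eta}{4},
\end{align*}
where the last step uses $\alpha' \leq \eta/4$.

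Next I would upper-bound the same probability via the posterior on $Q$ given $\bs X = \bs x$ for $\bs x \in B$. Let $S_{\bs x} := \{q \in \cQ_k : \dist(q, \cL(\bs x)) \leq \alpha'\}$. Because $|\cL(\bs x)| \leq \ell$, the set $S_{\bs x}$ is covered by at most $\ell$ total-variation balls of radius $\alpha'$ centered at list elements. Each such ball, if it meets $\cQ_k$ at all, is (by the triangle inequality and $2\alpha' \leq 2\alpha$) contained in a $\tv$-ball of radius $2\alpha$ around some $p' \in \cQ_k$, which by definition of $r_k$ contains at most $r_k |\cQ_k|$ members of $\cQ_k$. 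Hence $|S_{\bs x}| \leq \ell r_k |\cQ_k|$. Combining this with the uniform density bound $q^m(\bs x) \leq u_k$ in the numerator and the concentration lower bound in the denominator (at least $s_k |\cQ_k|$ members of $\cQ_k$ have $q^m(\bs x) \geq c u_k$), the posterior satisfies
\begin{align*}
    \frac{\sum_{q \in S_{\bs x}} q^m(\bs x)}{\sum_{q \in \cQ_k} q^m(\bs x)} \;\leq\; \frac{\ell r_k |\cQ_k| \cdot u_k}{c s_k |\cQ_k| \cdot u_k} \;=\; \frac{\ell r_k}{c s_k}.
\end{align*}
Marginalizing over $\bs x \in B$ and using $\P{}{\bs X \in B} \leq 1$ yields the same upper bound for the joint probability on the left of the first display.

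Combining the two estimates gives $\tfrac{3\eta}{4} \leq \ell r_k / (c s_k)$ for all $k \geq k_\eta$. Since $c$, $\ell$, and $\eta$ are fixed and $r_k/s_k \to 0$ by condition 2, taking $k$ sufficiently large produces a contradiction, ruling out the hypothesized learner. The main obstacle is the covering bound $|S_{\bs x}| \leq \ell r_k |\cQ_k|$ when list elements $q' \in \cL(\bs x)$ may lie outside $\cQ_k$: this is precisely why the definition of $r_k$ uses a radius of $2\alpha$ rather than $\alpha$, providing the factor-of-two slack the triangle inequality demands. The rest is quantitative bookkeeping on $u_k$, $r_k$, $s_k$, and $\eta$.
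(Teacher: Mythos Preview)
Your argument is correct and follows the same core idea as the paper's proof: place a uniform prior on $\cQ_k$, and show that conditional on $\bs X=\bs x\in B$ the posterior mass of the ``success'' set $S_{\bs x}$ is at most $\ell r_k/(c s_k)$, using the numerator bound $q^m(\bs x)\le u_k$ on $S_{\bs x}$ together with $\cU(S_{\bs x})\le \ell r_k$ (via the triangle-inequality recentering you describe), and the denominator bound $\int q^m(\bs x)\,d\cU(q)\ge c u_k s_k$ from the definition of $s_k$. The paper reaches the contradiction by bounding the \emph{expectation} $\E{}{\dist(Q,\cL(\bs X))}$ from above and below (which leads to a slightly awkward case split at the end), whereas you bound the \emph{probability} $\P{}{\bs X\in B,\ \dist(Q,\cL(\bs X))\le \alpha'}$ directly from both sides; your route is a bit cleaner and avoids that case analysis, but the substance is the same.

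One minor point to fix: you phrase the bounds using cardinalities and sums ($|S_{\bs x}|\le \ell r_k|\cQ_k|$, $\sum_{q\in\cQ_k}$), implicitly treating $\cQ_k$ as finite. The lemma is stated for general $\cQ_k$ with a uniform measure $\cU(\cQ_k)$ (and in the paper's application $\cQ_k$ is a continuum), so you should write these as $\cU(S_{\bs x})\le \ell r_k$ and replace the sums by integrals against $\cU$. The argument is unchanged once you do so.
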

\begin{proof}
    We provide a proof by contradiction. Suppose for some $\ell \in \N$, we have an $(\tfrac {\alpha\eta} 4,\tfrac {\alpha\eta} 4,\ell)$-list learner for $\cQ$ using $m$ samples, denoted by $\cL : \cX^m \to \{L \sub \Delta(\cX): |L| \leq \ell\}$. Then for all $k \in \N$, we have that
    
    \begin{align}
        \E{\substack{Q \sim \cU(\cQ_k) \\ \bs{X} \sim Q^m}}{\dist(Q, \cL(\bs{X}))} \leq (1-\tfrac{\alpha\eta}{4}) \cdot \tfrac{\alpha\eta} 4 + \tfrac{\alpha\eta} 4 \cdot 1 \leq \tfrac {\alpha\eta} 2.\label{eqn:nfl-error}
    \end{align}
   
    Now, since $\lim_{k\to\infty} \tfrac {r_k} {s_k} = 0$, there exists $k_0 \geq k_\eta \in \N$, such that
    \begin{align}
        \frac {r_{k_0}\cdot u_{k_0} \cdot \ell} {s_{k_0}\cdot cu_{k_0}} \leq \frac 1 {11}, \label{eqn:k0-choice}
    \end{align}
    and
    \begin{align}
        \P{\substack {Q \sim \cU(\cQ_{k_0}) \\ \bs X \sim Q^m}}{\bs X \in B} \geq \eta.\label{eq:x-in-b}
    \end{align}
    Fix any $\bs{x} \in B$, and let $R = \{q \in \cQ_{k_0}: \dist(q, \cL(\bs x)) \leq \alpha \}$ and $S = \{q \in \cQ_{k_0}: q^m(\bs x) \geq cu_{k_0}\}$ (uote that both $R$ and $S$ depend on $\bs{x}$). 
    
    For $i \in [\ell]$, further let $R_i = \{q \in \cQ_{k_0}: \tv(q, \cL(\bs x)_i) \leq \alpha \}$, so that $R = \cup_{i=1}^\ell R_i$.

    Now, fix $i\in [\ell]$.
    Assuming that $R_i \not= \emptyset$, consider any $p \in R_i$. For any $q \in R_i$, we have $\tv(p,q) \leq 2\alpha$. Hence, $R_i \sub \{q \in \cQ_{k_0}: \tv(p,q) \leq 2\alpha\}$. Regardless of whether $R_i$ is empty,
    \begin{align*}
        \P{Q \sim \cU(\cQ_{k_0})}{Q \in R_i} \leq \sup_{p \in \cQ_{k_0}} \P{Q \sim \cU(\cQ_{k_0})}{\tv(p,Q)\leq 2\alpha} = r_{k_0}.
    \end{align*}
    Moreover, we can conclude that
    \begin{align}
        \label{eq:ub:QinR}
        \P{Q\sim\cU(\cQ_{k_0})} {Q \in R} \leq \sum_{i=1}^\ell \P{Q\sim\cU(\cQ_{k_0})}{Q \in R_i} \leq r_{k_0}\cdot \ell.
    \end{align}    
    Observe that this implies, since $u_{k_0} \geq q^m(\bs x)$,
    \begin{align}
        \label{eq:ub:intR}
        \int_R q^m(\bs x) f_Q(q) dq \leq u_{k_0} \int_R f_Q(q) dq = u_{k_0} \P{Q \sim \cU(\cQ_{k_0})}{Q\in R} \leq u_{k_0}\cdot r_{k_0}\cdot \ell
    \end{align}  
    an inequality we will use momentarily. We can now write
    \begin{align*}
        \E{\substack{Q \sim \cU(\cQ_{k_0}) \\ \bs{X} \sim Q^m} }{\dist(Q, \cL(\bs X))\mid \bs{X} = \bs{x}} 
        &= \int_{\cQ_{k_0}} f_{Q|\bs X}(q\mid \bs{x}) \cdot \dist(q, \cL(\bs x)) dq\\
        &\geq \int_{S \setminus R} f_{Q|\bs X}(q\mid \bs{x}) \cdot \dist(q, \cL(\bs x)) dq \\
        &\geq \alpha \int_{S \setminus R} f_{Q|\bs X}(q\mid \bs{x}) dq \\
        &\geq \alpha \l( \int_S f_{Q|\bs X}(q\mid \bs{x}) dq - \int_R f_{Q|\bs X}(q\mid \bs{x}) dq\r) \\
        &= \alpha \l(\int_S \frac{q^m(\bs x) f_Q(q)}{f_X(\bs x)}dq - \int_R \frac{q^m(\bs x) f_Q(q)} {f_X(\bs x)} dq\r) \\
        &= \alpha \frac {1} {f_X(\bs x)} \l(\int_S q^m(\bs x) f_Q(q) dq - \int_R q^m(\bs x) f_Q(q) dq\r) \\
        &\geq \alpha \frac {1} {f_X(\bs x)} \l(cu_{k_0}\int_S  f_Q(q) dq - u_{k_0} \cdot \ell \cdot r_{k_0}\r) \tag{By definition of $S$ and~\eqref{eq:ub:intR}}\\
        &=\alpha \frac 1 {f_X(\bs x)} \l(cu_{k_0} \P{Q \sim \cU(\cQ_{k_0})}{Q\in S} - u_{k_0} \cdot \ell \cdot r_{k_0}\r).
    \end{align*}
    
    Plugging~\eqref{eq:ub:QinR} in, along with the definition of $s_{k_0}$, we have,
    \begin{align*}
        \E{\substack{Q \sim \cU(\cQ_{k_0}) \\ \bs{X} \sim Q^m} }{\dist(Q,\cL(\bs{X}))\mid \bs{X} = \bs{x}} 
        &\geq\alpha \frac 1 {f_X(\bs x)} \l( cu_{k_0}\cdot s_{k_0} -  u_{k_0} \cdot \ell \cdot r_{k_0}\r) \\
        &\geq \alpha \frac 1 {f_X(\bs x)} (10 \cdot u_{k_0} \cdot \ell \cdot r_{k_0}) \tag{$k_0$ from Equation~\ref{eqn:k0-choice}} \\
        &\geq 10\alpha \int_R \frac{q^m(\bs x) f_Q(q)} {f_X(\bs x)} dq \tag{by \eqref{eq:ub:intR}} \\
        &= 10\alpha \cdot \P{\substack{Q \sim \cU(\cQ_{k_0})\\X \sim Q^m}}{Q \in R\mid \bs{X} =\bs x}.
    \end{align*}
    
    Integrating over all $\bs x \in B$ and using Inequality~\ref{eq:x-in-b},
    \begin{align*}
        \E{\substack{Q \sim \cU(\cQ_{k_0}) \\ \bs{X} \sim Q^m}}{\dist(Q,\cL(\bs{X}))}
        &\geq \P{\substack {Q \sim \cU(\cQ_{k_0}) \\ \bs X \sim Q^m}}{\bs X \in B} \cdot \E{\substack{Q \sim \cU(\cQ_{k_0}) \\ \bs{X} \sim Q^m}}{\dist(Q,\cL(\bs{X}))| \bs X \in B} \\
        &\geq \eta \cdot \E{\substack{Q \sim \cU(\cQ_{k_0}) \\ \bs{X} \sim Q^m}}{\dist(Q,\cL(\bs{X}))| \bs X \in B} \\
        &\geq \eta \cdot 10\alpha \cdot \P{\substack{Q \sim \cU(\cQ_{k_0}) \\ \bs{X} \sim Q^m}}{\dist(Q,\cL(\bs{X})) \leq \alpha\mid \bs X \in B}.
    \end{align*}
    If $\P{}{\dist(Q,\cL(\bs{X})) \leq \alpha\mid \bs X \in B} \geq \tfrac 1 {10}$, then $\E{}{\dist(Q,\cL(\bs{X}))} \geq \alpha\eta$, contradicting~\eqref{eqn:nfl-error}. Otherwise,
    \begin{align*}
        \E{}{\dist(Q,\cL(\bs{X}))} &\geq \eta \cdot \E{}{\dist(Q,\cL(\bs{X}))\mid \bs X \in B}\\
        &\geq \eta \cdot\alpha\cdot \P{}{\dist(Q,\cL(\bs{X})) > \alpha\mid \bs X \in B}\\
        &\geq \eta \cdot (\alpha \cdot (1-\tfrac1 {10})),
    \end{align*}
    also contradicting Equation~\ref{eqn:nfl-error}.
\end{proof}

\begin{proof}[Proof of Theorem~\ref{thm:lb-gaussians}]
    To prove Theorem~\ref{thm:lb-gaussians}, it suffices to find, for every $d\geq2$, a sequence of subclasses $(\cQ_k)_{k=1}^\infty$ and a set $B \in (\R^d)^{d-1}$ that indeed satisfy the conditions of Lemma~\ref{lem:ll-nfl}. In what follows, we fix an arbitrary $d\geq 2$.
    
    \paragraph{The construction of the sequence of hard subclasses.} Let $e_d = [0, 0, \dots, 1]^{\top} \in \R^d$. We define the following sets:
    \begin{align*}
        T &= \l\{ \begin{bmatrix}
            t \\
            0
        \end{bmatrix} \in \R^{d}: t \in \R^{d-1} \text{ with } \|t\|_2 \leq \frac 1 2\r\}, \\
        C &= \l\{\begin{bmatrix}
            t \\
            \lambda
        \end{bmatrix} \in \R^d: t \in \R^{d-1} \text{ with } \|t\|_2 \leq \frac 1 2 \text{ and } \lambda \in [1, 2] \sub \R \r\}.
    \end{align*}
    That is, $T$ is a $\tfrac 1 2$-disk (a disk with radius $\tfrac{1}{2}$) in $\R^{d-1}$ embedded onto the $(d-1)$-dimensional hyperplane in $\R^d$ spanning the first $(d-1)$ dimensions (axes), centered at the origin. $C$ is a cylinder of unit length and radius $\tfrac 1 2$ placed unit distance away from $T$ in the positive $e_d$-direction.
    
    Let $S^{d-1} = \{ x \in \R^d: \|x\|_2 = 1\}$ be the unit-sphere, centered at the origin, in $\R^d$, and let
    \begin{align*}
        N = \l\{u \in S^{d-1}: |u \cdot e_d| \leq  \frac {\sqrt 3} 2\r\}.
    \end{align*}
    That is, $N$ is the set of vectors $u$ on the unit hypersphere with angle $\geq \tfrac \pi 6$ from $e_d$. For $u \in N$, define the ``rotatiou'' matrix
    \begin{align*}
        R_u = \begin{bmatrix}
            | & | &  & | \\
            u & v_2 & \dots & v_d \\
            | & | &  & |
        \end{bmatrix} \in \R^{d\times d}   
    \end{align*}
    where $\{v_2, \dots, v_d\}$ is any orthonormal basis for $\{u\}^\perp$ (where $\{u\}^\perp$ denotes the subspace orthogonal to the subspace spanned by the set of vectors $\{u\}$).\footnote{Technically, $R_u$ is an equivalence class of matrices since we do not specify which orthonormal basis of $\{u\}^\perp$. However, as it turns out, the choice of the orthonormal basis of $\{u\}^\perp$ does not matter since they all result in the same Gaussian densities in the proceeding definition of $G(\sigma,t,u)$.}
    
    Now, for $\sigma>0$, $t \in T$, and $u \in N$, define the Gaussian
    \begin{align*}
        G(\sigma,t,u) = \cN \l(t, R_u \begin{bmatrix}
            \sigma^2 &  &  &  \\
              & 1 & \hspace{9pt} O &  \\
              & \hspace{-9pt} O  & \ddots \\
              &  &  & 1
        \end{bmatrix}R_u^{\top} \r) \in \Delta(\R^d).
    \end{align*}
    For all $k\geq 1$, let 
    \begin{align*}
        \cQ_k = \l\{G\l(\frac 1 k, t, u\r): t \in T, u \in N\r\}.
    \end{align*}
    
    That is, each $\cQ_k$ is a class of ``flat'' (i.e., near $(d-1)$-dimensional) Gaussians in $\R^d$, with $\sigma^2 = \tfrac 1 {k^2}$ variance on a single thin direction $u$ and unit variance in all other directions. Their mean vectors come from a point on the hyperplanar disk $T$ (which we recall is a $(d-1)$-dimensional disk orthogonal to $e_d$), and the thin direction $u$ comes from $N$ (which is $S^{d-1}$ excluding points that form angle $< \tfrac \pi 6$ with $e_d$). As $k \to \infty$, the Gaussians get flatter.

    \paragraph{Lower bounding the weight of $\bs B$.} We start with the following claim, which shows the probability that $d-1$ samples drawn the uniform mixture of $\cQ_k^{d-1}$ all fall into the cylinder $C$ can be uniformly lower bounded by an absolute constant, independent of $k$.
    \begin{claim}\label{clm:bad-set}
        Let $B$ be the set of all possible vectors of $d-1$ points in the cylinder $C$, i..e,  $B = C^{d-1} \in (\R^d)^{d-1}$. There exists $\eta>0$ such that for $k \geq 10$,
        \begin{align*}
            \P{\substack{Q \sim \cU(\cQ_k) \\ \bs X \sim Q^{d-1}}}{\bs X \in B} \geq \eta.
        \end{align*}    
    \end{claim}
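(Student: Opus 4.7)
The plan is to use the iid structure of the samples: writing $Q_{t,u} := G(\tfrac{1}{k}, t, u)$ and noting $B = C^{d-1}$,
\[
    \P{\substack{Q \sim \cU(\cQ_k) \\ \bs X \sim Q^{d-1}}}{\bs X \in B} = \E{(t, u) \sim \cU(T) \times \cU(N)}{Q_{t,u}(C)^{d-1}}.
\]
It then suffices to exhibit a ``good'' subset $G_d \sub T \times N$ of relative measure $p_d > 0$ on which $Q_{t,u}(C) \geq c_d > 0$, with $c_d$ and $p_d$ depending only on $d$; this gives $\eta = c_d^{d-1} p_d$, uniformly in $k \geq 10$.

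To lower-bound $Q_{t,u}(C)$, I would decompose $X \sim Q_{t,u}$ as $X = t + Y_1 u + Z_{\hat e}\hat e + Z_W$, where $Y_1 \sim \cN(0, 1/k^2)$; $\hat e := (e_d - u_d u)/\sigma_d$, with $u_d := u \cdot e_d$ and $\sigma_d := \sqrt{1 - u_d^2}$, is the unit vector in $\{u\}^\perp$ along which $e_d$ projects; $Z_{\hat e} \sim \cN(0, 1)$; and $Z_W$ is a standard Gaussian on the $(d-2)$-dim subspace $W := \{u, \hat e\}^\perp$ (all independent). Writing $v^{(1)} := \pi_{d-1}(v)$ for projection onto the first $d-1$ coordinates, direct calculation gives $X_d = u_d Y_1 + \sigma_d Z_{\hat e}$ and, after splitting $t^{(1)} = t_\| \hat u^{(1)} + t_\perp$ along and perpendicular to $\hat u^{(1)} := u^{(1)}/\sigma_d \in \R^{d-1}$,
\[
    \|\pi_{d-1}(X)\|^2 = (t_\| + Y_1\sigma_d - u_d Z_{\hat e})^2 + \|t_\perp + Z_W^{(1)}\|^2,
\]
using $Z_W^{(1)} \perp u^{(1)}$ (which follows from $Z_W \perp u$ and $Z_W \cdot e_d = 0$). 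For $k \geq 10$, $|Y_1| \leq 1/20$ with constant probability, so the $Y_1$-contribution is a controllable $O(1/k)$ perturbation.

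I take $G_d$ to consist of pairs $(t, u)$ satisfying $|u_d| \leq 1/5$, $\|t_\perp\| \leq 1/10$, and $t_\| \in J_u$, where $J_u \sub [-1/2, 1/2]$ is a constant-length interval depending on $u$ and chosen compatibly with a constant-length interval $I \sub [1/\sigma_d, 2/\sigma_d]$ so that $|t_\| - u_d z| \leq 1/10$ for all $t_\| \in J_u$ and $z \in I$. The constraint $|u_d| \leq 1/5$ ensures $u_d \cdot [1/\sigma_d, 2/\sigma_d] \sub [-1/2, 1/2]$ with room to spare, making this coordination feasible; standard marginal calculations then show $G_d$ has relative measure $p_d > 0$ in $T \times N$ depending only on $d$. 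On $G_d$, the events $\{Z_{\hat e} \in I\}$, $\{|Y_1| \leq 1/20\}$, and $\{\|t_\perp + Z_W^{(1)}\| \leq 1/4\}$ --- the last a standard $(d-2)$-dim Gaussian ball probability of radius $1/4$ centered at a point of norm $\leq 1/10$, bounded below by an explicit $c_d' > 0$ --- jointly imply $X_d \in [1, 2]$ and $\|\pi_{d-1}(X)\| \leq 1/2$. By independence, $Q_{t,u}(C) \geq c_d > 0$, depending only on $d$.

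The main obstacle is the one-dimensional geometric coordination defining the compatible pair $(J_u, I)$ for each $u$ with $|u_d| \leq 1/5$; once this is verified, all remaining estimates reduce to standard Gaussian volume bounds, concluding the proof.
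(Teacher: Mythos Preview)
Your approach is correct. The orthogonal decomposition $X = t + Y_1 u + Z_{\hat e}\hat e + Z_W$ is valid, the formulas for $X_d$ and $\|\pi_{d-1}(X)\|^2$ check out, and the coordination between $J_u$ and $I$ is feasible with the stated constraints (e.g., taking $I$ of length $\approx 0.3$ inside $[1.1/\sigma_d, 1.4/\sigma_d]$ one finds $J_u$ of length $\gtrsim 0.1$ inside $[-0.4,0.4]$, compatibly with $\|t^{(1)}\|\leq \tfrac12$).

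The paper takes a somewhat different route. Rather than decomposing the Gaussian analytically and bounding the resulting one-dimensional and $(d-2)$-dimensional Gaussian events, it works geometrically: it fixes a simple product region $T'\times N'$ (a smaller disk and directions with $|u\cdot e_d|\leq \tfrac{1}{36}$), shows that the hyperplane $P(u,t)$ through $t$ with normal $u$ slices an inscribed cylinder $C'\subset C$ in a set containing a $(d-1)$-dimensional $\tfrac{1}{24}$-disk, and then defines $M\subset C$ as a short extrusion of that slice along $u$. The uniformity in $k$ comes from a one-line monotonicity observation: $\P{X\sim G(\sigma,t,u)}{X\in M}$ is non-increasing in $\sigma$ (since $M$ is a product in the $u$-direction), so it suffices to lower-bound at $\sigma=\tfrac{1}{10}$. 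Two differences worth noting: the paper's good parameter set is an honest product $T'\times N'$, avoiding the $u$-dependent coordination of $t_\parallel\in J_u$ that you need; and the monotonicity trick sidesteps all explicit probability computations, at the cost of a less quantitative bound. Your route is more hands-on but yields the same conclusion and makes the $d$-dependence of the constants more transparent.
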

    \begin{proof}[Proof of Claim~\ref{clm:bad-set}]
        Consider the inscribed cylinder $C' \sub C$
        \begin{align*}
            C' &= \l\{\begin{bmatrix}
                t \\
                \lambda
            \end{bmatrix} \in \R^d: t \in \R^{d-1} \text{ with } \|t\|_2 \leq \frac 1 3 \text{ and } \lambda \in \l[\frac 4 3, \frac 5 3\r] \sub \R \r\}.
        \end{align*}
        Also, consider $T' \sub T$ and $N' \sub N$:
        \begin{align*}
            T' &= \l\{ \begin{bmatrix}
                t \\
                0
            \end{bmatrix} \in \R^{d}: t \in \R^{d-1} \text{ with } \|t\|_2 \leq \frac 1 4\r\}, \\
            N' &= \l\{u \in S^{d-1}: |u \cdot e_d| \leq  \frac 1 {36}\r\}.
        \end{align*}
        Now, fix $u \in N'$ and $t \in T'$. Define the plane going through $t$ with normal vector $u$ as,
        \begin{align*}
            P(u,t) = \l\{t + x: x \in \R^d \text{ with } x\cdot u = 0\r\}.
        \end{align*}
        First, we show $P(u,t) \cap C'$ contains a $(d-1)$-dimensional region. Consider,
        \begin{align*}
            y = \begin{bmatrix}
                t \\
                \tfrac 3 2
            \end{bmatrix}.
        \end{align*}
        The projection onto $P(u,t)$ of $y$ is given by,
        \begin{align*}
            y' = (y-t) - ((y-t)\cdot u)u + t = \begin{bmatrix}
                t \\ 
                \tfrac 3 2
            \end{bmatrix} - cu,
        \end{align*}
        where $|c| = |(y-t)\cdot u| \leq \tfrac 3 2 \cdot \tfrac 1 {36} = \tfrac 1 {24}$. Since $\|t\|_2\leq \tfrac{1}{4}$, the norm of the first $(d-1)$ dimensions of $y'$ is $\leq \tfrac 1 4 + \tfrac 1 {24} \leq \tfrac 1 3$ and $y'_d \in [\tfrac 3 2 - \tfrac 1 {24}, \tfrac 3 2 + \tfrac 1 {24}]$, and so $y' \in C'$. Moreover, adding any $z$ with $z\cdot u =0$ and $\|z\|_2 \leq \tfrac 1 {24}$ results in $y' + z$ with the norm of the first $d-1$ dimensions being at most $\tfrac 1 4 + \tfrac 1 {24} + \tfrac 1 {24} \leq \tfrac 1 3$ and $(y' + z)_d \in [\tfrac 3 2 - \tfrac 1 {12}, \tfrac 3 2 + \tfrac 1 {12}]$. Hence, $y' + z \in C'$. This shows that $P(u,t) \cap C'$ contains a $(d-1)$-dimensional subspace, since it contains a $(d-1)$-dimensional disk of radius $\tfrac 1 {24}$.\smallskip
        
        Next, let
        \begin{align*}
            M = \l\{p + su: p \in C' \cap P(u,t), s\in \l[-\frac 1 6, \frac 1 6\r] \sub \R\r\}.
        \end{align*}
        That is, $M$ is a rectangular ``extrusion'' of $C'\cap P(u,t)$ along both its normal vectors.
        Indeed, we have $M \sub C$, since adding a vector of length $\leq \tfrac 1 6$ cannot take a point in $C'$ outside of $C$. We also have that $M$ is a $d$-dimensional region, so
        \begin{align*}
            \P{X \sim G(1/10, t, u)}{X \in C} \geq \P{X \sim G(1/10, t, u)}{X \in M} > 0.
        \end{align*}
        Note that for $\sigma \leq \tfrac 1 {10}$, we have
        \begin{align*}
        \P{X \sim G(\sigma, t, u)}{X \in M} \geq \P{X \sim G(1/10, t, u)}{X \in M}.
        \end{align*}
        This is because any $x \in M$ can be written as $t + x + c u$, where $x$ is such that $x \cdot u = 0$, and $|c| \leq \tfrac 1 6$. Plugging in this decomposition of $x$ into the densities of $G(1/10,u,t)$ and $G(\sigma,u,t)$, and simplifying yields the above.
        
        To conclude, for $k \geq 10$, we have
        \begin{align}
            \P{\substack{Q \sim \cU(\cQ_k) \\ \bs X \sim Q^{d-1}}}{\bs X \in C^{d-1}}
            &= \P{\substack{t \sim \cU(T) \\ u \sim \cU(N) \notag\\ \bs X \sim G(1/k, t, u)^{d-1}}}{\bs X \in C^{d-1}} \notag\\
            &= c\int_{T} \int_{N} \P{\bs X \sim G(1/k,t,u)^{d-1}}{\bs X \in  C^{d-1}} du\, dt \notag\\
            &\geq c\int_{T'} \int_{N'}\P{\bs X \sim G(1/k,t,u)^{d-1}}{\bs X \in  C^{d-1}} du\, dt \notag\\
            &= c\int_{T'} \int_{N'} \l(\P{X \sim G(1/k,t,u)}{X \in  C}\r)^{d-1} du\, dt \notag\\
            &\geq c\int_{T'} \int_{N'} \l(\P{X \sim G(1/k,t,u)}{X \in  M}\r)^{d-1} du\, dt \notag\\
            &\geq c\int_{T'} \int_{N'} \l(\P{X \sim G(1/10,t,u)}{X \in  M}\r)^{d-1} du\, dt \notag\\
            & \eqqcolon \eta > 0, \label{eq:def:eta:lb:d-1}
        \end{align}
        where $c = f_T(t)\cdot f_N(u)>0$ is the uniform density over $T \times N$. Note that the final integral is non-zero since $T' \times N'$ has non-zero measure in $T \times N$ and that $\P{X \sim G(1/10,t,u)}{X \in  M}$ is indeed non-zero for all $t \in T', u \in N'$.
    \end{proof}
    
    \paragraph{Upper bounding $\bs r_k$, the weight of $\bs \alpha$-TV balls.} We prove the following.
    
    \begin{claim}\label{clm:rk-bounded}
        For $k\geq 1$, let
        \begin{align*}
        r_k \coloneqq \sup_{p \in \cQ_k} \P{Q \sim \cU(\cQ_k)}{\tv(p, Q) \leq \tfrac 1 {400}}.
        \end{align*}
        Then we have,
        \begin{align*}
            r_k = O\l(\frac 1 {k^d}\r) \to0 \quad\text{as $k \to \infty$.} 
        \end{align*}
    \end{claim}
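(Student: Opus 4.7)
The plan is to show that $\tv(p, q) \le 1/400$ forces the pair $(t, u) \in T \times N$ defining $q = G(1/k, t, u)$ to lie in a subset of Lebesgue measure $O(1/k^d)$, uniformly over $p = G(1/k, t_0, u_0) \in \cQ_k$. Dividing by the (positive, $k$-independent) Lebesgue measure of $T \times N$ then gives $r_k = O(1/k^d)$.

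First, I would translate the TV bound into a constraint on the ``thin direction'' $u$. By the data-processing inequality, $\tv(p, q)$ is at least the TV distance between the one-dimensional projections onto $u_0$, which are $\cN(t_0 \cdot u_0,\, 1/k^2)$ and $\cN(t \cdot u_0,\, \sigma_q^2)$ with
\[
\sigma_q^2 \;=\; u_0^\top \Sigma_q u_0 \;=\; \tfrac{1}{k^2}(u \cdot u_0)^2 + \bigl(1 - (u \cdot u_0)^2\bigr) \;=\; \sin^2\theta + \tfrac{1}{k^2}\cos^2\theta,
\]
where $\theta$ is the angle between $u$ and $u_0$. A standard one-dimensional Gaussian calculation shows that $\tv \le 1/400$ forces the variance ratio $\sigma_q^2/(1/k^2) = k^2\sin^2\theta + \cos^2\theta$ to be bounded by an absolute constant, so $\sin^2\theta = O(1/k^2)$. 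Since flipping the sign of $u$ leaves $\Sigma_q$ unchanged (and the mean comes from $t$), this means $u$ lies within angular distance $O(1/k)$ of $\pm u_0$.

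Given this near-alignment of thin directions, the same 1D projection now has nearly equal variances both $\Theta(1/k^2)$, so the TV bound on the projection forces the means to agree within $O(1/k)$, yielding $|(t - t_0) \cdot u_0| = O(1/k)$. To finish, I would compute the Lebesgue measure of the pairs $(t, u) \in T \times N$ satisfying these two constraints. The set of $u \in N$ within angular distance $O(1/k)$ of $\pm u_0$ is a union of two spherical caps of total surface area $O((1/k)^{d-1})$. For each such $u$, writing $t = (t', 0)$ with $t' \in \R^{d-1}$, $\|t'\|_2 \le 1/2$, the constraint $|(t' - t_0') \cdot u_0^{(1:d-1)}| = O(1/k)$ defines a slab of width $O(1/k)$ in $\R^{d-1}$ (using that $u_0 \in N$ implies $\|u_0^{(1:d-1)}\|_2 \ge 1/2$). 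Its intersection with the radius-$1/2$ disk has $(d-1)$-dimensional volume $O(1/k)$. Multiplying the two contributions gives $O(1/k^d)$, as required.

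The main obstacle will be making the TV-to-parameter translation quantitatively uniform in $k$ and in the choice of $(t_0, u_0)$, so that the hidden constants in the two 1D Gaussian TV comparisons (differing variances, then differing means) are absolute. A subtle point is avoiding a potential ``cancellation'' where a misaligned $u$ and a shifted $t$ could conspire to reduce $\tv(p,q)$: projecting onto the privileged direction $u_0$ (whose projected marginal for $p$ has variance exactly $1/k^2$ regardless of $q$) rather than onto $u$ sidesteps this, so the variance and mean constraints can be read off independently.
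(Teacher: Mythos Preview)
Your proposal is correct and follows essentially the same approach as the paper: project onto the thin direction $u_0$ via the data-processing inequality, use one-dimensional Gaussian TV lower bounds to extract an $O(1/k)$ angular constraint on $u$ (spherical caps of measure $O(1/k^{d-1})$) and an $O(1/k)$ slab constraint on $t$ via $|(t-t_0)\cdot u_0|$ (using $\|u_0^{(1:d-1)}\|_2 \ge 1/2$ from $u_0 \in N$), and multiply. The paper makes these steps explicit by invoking the Devroye--Mehrabian--Reddad bound $\tv(\cN(\mu_1,\sigma_1^2),\cN(\mu_2,\sigma_2^2)) \ge \tfrac{1}{200}\min\{1,\max\{|\sigma_1^2-\sigma_2^2|/\sigma_1^2,\,40|\mu_1-\mu_2|/\sigma_1\}\}$, which yields both constraints simultaneously rather than in your two-stage fashion, and by quoting the explicit spherical-cap area formula; but the argument is the same.
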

    
    We use the following three facts regarding total variation distance, Gaussians, and the surface area of hyperspherical caps.
    
    \begin{fact}[Data-processing inequality for TV distance]\label{fact:tv-dpi}
        Let $p, q \in \Delta(\cX)$. For any measurable $f\colon \cX \to \cY$,
        \begin{align*}
            \tv(f(p), f(q)) \leq \tv(p,q),
        \end{align*}
        where for $p \in \Delta(\cX)$, $f(p)$ denotes the push-forward distribution assigning for all measurable $A \sub \cY$, $f(p)(A) = p(f^{-1}(A))$. 
    \end{fact}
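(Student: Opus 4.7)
The plan is to unfold the definition of total variation distance applied to the pushforward measures, and observe that the supremum in the resulting expression is taken over a subfamily of the measurable subsets of $\cX$, which is therefore bounded by the supremum over all measurable subsets of $\cX$.

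More concretely, I would first write, directly from the definition of total variation distance on $\cY$,
\begin{align*}
    \tv(f(p), f(q)) = \sup_{B} |f(p)(B) - f(q)(B)|,
\end{align*}
where the supremum is over all measurable $B \sub \cY$. Then I would use the definition of the pushforward to rewrite each term as $f(p)(B) = p(f^{-1}(B))$ and $f(q)(B) = q(f^{-1}(B))$. The key observation is that since $f$ is measurable, $f^{-1}(B)$ is a measurable subset of $\cX$ for every measurable $B \sub \cY$, so the collection $\{f^{-1}(B) : B \sub \cY \text{ measurable}\}$ is a subfamily of the measurable subsets of $\cX$.

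From there, taking the supremum over a subfamily can only decrease the value, so
\begin{align*}
    \sup_{B} |p(f^{-1}(B)) - q(f^{-1}(B))| \leq \sup_{A} |p(A) - q(A)| = \tv(p,q),
\end{align*}
where the second supremum is over all measurable $A \sub \cX$. Chaining these equalities and inequality yields the claimed bound. There is no substantive obstacle here; the argument is essentially immediate from the definitions, and the only thing to verify is the measurability of preimages under $f$, which is part of the hypothesis.
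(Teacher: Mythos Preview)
Your argument is correct and is the standard proof of the data-processing inequality for total variation. The paper does not actually prove this statement; it is stated as a \emph{Fact} without proof and simply invoked later, so there is no paper proof to compare against.
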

    
    \begin{fact}[TV Distance between $1$-Dimensional Gaussians {\cite[Theorem~1.3]{DevroyeMR18b}}]\label{fact:tv-1d-gaussians}
    Let $\cN(\mu_1, \sigma_1^2)$ and $\cN(\mu_2, \sigma_2^2)$ be Gaussians over $\R$. Then
    \begin{align*}
        \frac 1 {200}\cdot \min \l\{1, \max \l\{\frac {|\sigma_1^2 - \sigma_2^2|}{\sigma_1^2},\frac{40|\mu_1 - \mu_2|}{\sigma_1}\r\}\r\} \leq \tv\l(\cN\l(\mu_1,\sigma_1^2\r),\cN\l(\mu_2,\sigma_2^2\r)\r).
    \end{align*}
    \end{fact}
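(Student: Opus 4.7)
The plan is to prove this variational lower bound directly via the data-processing inequality $\tv(P,Q)\ge |P(A)-Q(A)|$, choosing, in each regime, a one-dimensional test event $A$ whose probability is sufficiently different under $\cN(\mu_1,\sigma_1^2)$ and $\cN(\mu_2,\sigma_2^2)$. First I would normalize: by the affine change of variable $y=(x-\mu_1)/\sigma_1$, which preserves TV distance, it suffices to prove
\[
    \tv\bigl(\cN(0,1),\cN(\mu,\sigma^2)\bigr) \;\ge\; \frac{1}{200}\,\min\!\left\{1,\,\max\bigl\{|\sigma^2-1|,\,40|\mu|\bigr\}\right\},
\]
with $\mu = (\mu_2-\mu_1)/\sigma_1$ and $\sigma=\sigma_2/\sigma_1$. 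Let $D=|\sigma^2-1|$ and $M=40|\mu|$; I would split on which of $D,M$ dominates the max, and whether the outer $\min$ saturates at $1$.

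In the \emph{mean-dominated, non-saturating} regime ($M\ge D$ and $M<1$, i.e.\ $|\mu|<1/40$), I would take $A=\{x>0\}$ (WLOG $\mu>0$). Then $P(A)=\tfrac12$ while $Q(A)=\Phi(\mu/\sigma)$, so the task reduces to lower-bounding $\Phi(\mu/\sigma)-\tfrac12$ by $M/200=\mu/5$. A Taylor expansion of $\Phi$ around $0$ gives $\Phi(\mu/\sigma)-\tfrac12\ge (\mu/\sigma)\phi(\mu/\sigma)$ in this small range, and the constraint $D\le M<1$ forces $\sigma^2\in[1-40|\mu|,1+40|\mu|]$ so $1/\sigma$ is bounded below by an absolute constant; combining these with $\phi(\cdot)\ge\phi(1/40)\ge 1/\sqrt{2\pi}\cdot e^{-1/3200}$ closes the case with room to spare (justifying the constants $1/200$ and $40$).

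In the \emph{variance-dominated, non-saturating} regime ($D\ge M$ and $D<1$), I would take $A=\{|x|\le 1\}$, so $P(A)=2\Phi(1)-1$, and $Q(A)=\Phi((1-\mu)/\sigma)-\Phi((-1-\mu)/\sigma)$. Writing $\sigma^2=1+\Delta$ with $|\Delta|=D<1$, a first-order expansion of $Q(A)$ in $\Delta$ (via the mean-value theorem on $t\mapsto \Phi((1-\mu)/\sqrt{t})-\Phi((-1-\mu)/\sqrt{t})$) yields $|P(A)-Q(A)|\ge c\cdot D - c'\cdot|\mu|$ for explicit constants $c,c'$ arising from Gaussian density evaluations near $\pm 1$. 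Because $|\mu|\le D/40$ by assumption, the $|\mu|$ correction can be absorbed, leaving $|P(A)-Q(A)|\ge D/200$. Finally, in the \emph{saturating} regime ($\max\{D,M\}\ge 1$), I would verify $\tv\ge 1/200$ by using the same events: for $M\ge 1$ the computation in paragraph two is monotone and already delivers a universal constant; for $D\ge 1$, i.e.\ $\sigma^2\ge 2$ or $\sigma^2\le 0$ (the latter impossible, so $\sigma^2\ge 2$), the event $\{|x|\le 1\}$ separates the two distributions by a constant gap.

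The main obstacle I anticipate is calibrating the constants, specifically justifying exactly the factors $1/200$ and $40$. The asymmetry of the claim (only $\sigma_1$ appears in the denominators, making the bound sensitive to which Gaussian is taken as ``reference'') means that the interaction between the two perturbations must be tracked carefully: in the mean-dominated regime the variance can still shift $\sigma$ by up to $\sqrt{1+40|\mu|}$, and in the variance-dominated regime the mean shift can nudge the probability of $\{|x|\le 1\}$ in either direction. The constant $40$ in the statement is precisely the slack needed to absorb these cross-terms, and choosing the test events $\{x>0\}$ and $\{|x|\le 1\}$ (rather than, say, $\{x>\mu/2\}$) is important so that only one of the two parameters enters the computation in a leading-order way.
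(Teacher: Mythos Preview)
The paper does not prove this statement at all: Fact~\ref{fact:tv-1d-gaussians} is quoted verbatim from \cite[Theorem~1.3]{DevroyeMR18b} and is used as a black box inside the proof of Claim~\ref{clm:rk-bounded}. There is therefore no ``paper's own proof'' to compare your proposal against.

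That said, your plan is a sound and standard way to establish such a lower bound. The reduction to $\tv(\cN(0,1),\cN(\mu,\sigma^2))$ by affine invariance is correct, and choosing half-line and symmetric-interval test events to isolate the mean and variance effects respectively is exactly the right idea. A couple of small remarks that would tighten the write-up: in the variance-dominated regime your expansion actually benefits from a cancellation you did not mention, namely that the first-order dependence of $Q(\{|x|\le 1\})$ on $\mu$ vanishes by symmetry at $\mu=0$, so the $|\mu|$ correction enters only at second order and is even easier to absorb than you suggest; and in the mean-dominated regime the bound $\Phi(t)-\tfrac12\ge t\,\phi(t)$ is weakest when $t$ is moderately large, so it is cleaner to split further on whether $\mu/\sigma\le 1$ (use $\Phi(t)-\tfrac12\ge t\,\phi(1)$) or $\mu/\sigma>1$ (use $\Phi(t)-\tfrac12\ge \Phi(1)-\tfrac12$), which immediately gives constants compatible with $1/200$ and $40$. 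With those adjustments the argument goes through; the cited reference proceeds along essentially the same lines.
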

    
    \begin{fact}[Surface area of hyperspherical caps \cite{Li11}]\label{fact:cap-area}
    For $u \in S^{d-1}$ and $\theta \in [0, \tfrac \pi 2]$, define
    \begin{align*}
        C(u, \theta) = \l\{x \in S^{d-1}: \angle(x, u) \leq \theta\r\}
    \end{align*}
    where for $u,v \in S^{d-1}$, $\angle(u,v) \coloneqq \cos^{-1}(u \cdot v)$. We have
    \begin{align*}
        \operatorname{Area}(C(u,\theta)) = \frac{2\pi^{(d-1)/2}}{\Gamma(\tfrac {d-1} 2)} \cdot \int_0^\theta \sin^{d-2}(x) dx.
    \end{align*}
    Note that
    \begin{align*}
        \operatorname{Area}(S^{d-1}) = \frac{2\pi^{d/2}}{\Gamma(\tfrac d 2)}.
    \end{align*}
    \end{fact}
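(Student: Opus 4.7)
The plan is to fix an arbitrary $p=G(1/k,t_p,u_p)\in\cQ_k$ and bound the (normalized) measure of the set of $q=G(1/k,t_q,u_q)\in\cQ_k$ with $\tv(p,q)\leq 1/400$ by factoring the constraints on $u_q$ and on $t_q$ separately, then using Fact~\ref{fact:cap-area} and a slab-volume estimate respectively. Throughout, I will use the identification $G(1/k,t,u)=G(1/k,t,-u)$, which holds since $R_u\,\mathrm{diag}(1/k^2,1,\dots,1)\,R_u^\top$ only sees $u$ through $u u^\top$.

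First, I reduce the $d$-dimensional TV constraint to a one-dimensional Gaussian TV constraint via Fact~\ref{fact:tv-dpi}, projecting onto the direction $u_p$. Pushing $p$ forward yields $\cN(t_p\cdot u_p,\,1/k^2)$, and pushing $q$ forward yields $\cN(t_q\cdot u_p,\,\sigma_q^2)$ where a direct computation gives $\sigma_q^2=u_p^\top\Sigma_q u_p=\cos^2(\theta)/k^2+\sin^2(\theta)$, with $\theta=\angle(u_p,u_q)$ (taken in $[0,\pi/2]$ after using the sign symmetry). Applying Fact~\ref{fact:tv-1d-gaussians} with the bound $\tv\leq 1/400$ yields both $|\sigma_q^2-1/k^2|/(1/k^2)\leq 1/2$ and $40|t_p\cdot u_p-t_q\cdot u_p|/(1/k)\leq 1/2$. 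The first inequality simplifies to $\sin^2(\theta)(1-1/k^2)\leq 1/(2k^2)$, so $\sin(\theta)=O(1/k)$ for all $k\geq 2$; equivalently, $u_q$ lies in the union of two small hyperspherical caps around $\pm u_p$ of angular radius $O(1/k)$. The second inequality gives the linear slab constraint $|(t_p-t_q)\cdot u_p|\leq 1/(80k)$ on $t_q$.

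Next, I bound each factor. For $u_q$: by Fact~\ref{fact:cap-area}, $\operatorname{Area}(C(\pm u_p,\theta_0))=\frac{2\pi^{(d-1)/2}}{\Gamma((d-1)/2)}\int_0^{\theta_0}\sin^{d-2}(x)\,dx\leq \frac{2\pi^{(d-1)/2}}{\Gamma((d-1)/2)}\cdot\frac{\theta_0^{d-1}}{d-1}$ for $\theta_0\leq \pi/2$; plugging in $\theta_0=O(1/k)$ and dividing by a constant lower bound on $\operatorname{Area}(N)$ (which is a fixed positive fraction of $\operatorname{Area}(S^{d-1})$) yields a normalized measure $O(1/k^{d-1})$ for allowable $u_q\in N$. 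For $t_q$: writing $u_p=w_p+u_p^{(d)}e_d$ with $w_p$ the projection of $u_p$ onto the hyperplane $\{x_d=0\}$, and noting $(t_p-t_q)\cdot u_p=(t_p-t_q)\cdot w_p$ since $t_p,t_q\in\{x_d=0\}$, the constraint becomes $|(t_p-t_q)\cdot w_p|\leq 1/(80k)$. Since $u_p\in N$ enforces $|u_p\cdot e_d|\leq \sqrt{3}/2$, we get $\|w_p\|_2\geq 1/2$, so this is a slab in $\R^{d-1}$ of thickness $\leq 1/(40k)$ intersected with the disk $T$ of radius $1/2$; its $(d-1)$-volume is $O(1/k)$, which after normalization by the fixed volume of $T$ remains $O(1/k)$.

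Multiplying the two marginal bounds gives $\P{Q\sim\cU(\cQ_k)}{\tv(p,Q)\leq 1/400}=O(1/k^{d-1})\cdot O(1/k)=O(1/k^d)$, with constants that do not depend on the choice of $p\in\cQ_k$; taking the supremum over $p$ yields $r_k=O(1/k^d)$. The main obstacle will be the clean derivation of $\sigma_q^2=\cos^2(\theta)/k^2+\sin^2(\theta)$ (which relies on $\{v_2,\dots,v_d\}$ being an orthonormal basis of $\{u_q\}^\perp$ so $\sum_{j\geq 2}(u_p\cdot v_j)^2=1-(u_p\cdot u_q)^2$) together with the correct handling of the $u\sim -u$ symmetry when translating the $\sin(\theta)=O(1/k)$ constraint into two antipodal caps; the subsequent volume bounds are routine once these setup issues are in place.
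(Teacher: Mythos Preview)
Your proposal does not address the stated result. The statement in question is Fact~\ref{fact:cap-area}, the closed-form surface area of a hyperspherical cap; this is a cited formula (from \cite{Li11}) and the paper gives no proof of it. What you have written is instead a proof of Claim~\ref{clm:rk-bounded} (the bound $r_k=O(1/k^d)$), which merely \emph{uses} Fact~\ref{fact:cap-area} as a black box. So there is nothing in your write-up that derives the integral formula $\operatorname{Area}(C(u,\theta))=\tfrac{2\pi^{(d-1)/2}}{\Gamma((d-1)/2)}\int_0^\theta\sin^{d-2}(x)\,dx$; a correct proof would parametrize $S^{d-1}$ by the polar angle from $u$ and integrate the $(d-2)$-sphere slice of radius $\sin x$ over $x\in[0,\theta]$.

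If your intent was Claim~\ref{clm:rk-bounded}, then your argument is essentially the paper's: project onto the $u_p$-direction via the data-processing inequality, compute the pushed-forward variance $\sigma_q^2=\cos^2(\theta)/k^2+\sin^2(\theta)$, invoke the one-dimensional TV lower bound to force both an $O(1/k)$ angular cap constraint on $u_q$ and an $O(1/k)$ slab constraint on $t_q$, and then use the independence in $\cU(T)\times\cU(N)$ to multiply an $O(1/k^{d-1})$ cap-area bound by an $O(1/k)$ slab-volume bound. The only cosmetic differences are that the paper handles the $u\sim -u$ symmetry by a case split on $\angle(u_p,u_q)\in[0,\pi/2]$ versus $[\pi/2,\pi]$ rather than identifying antipodes at the outset, and that the paper's slab is written in terms of the normalized projection $\hat r$ of $u_p$ onto the first $d-1$ coordinates (your $w_p/\|w_p\|$), with slightly different constants.
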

    
    \begin{proof}[Proof of Claim~\ref{clm:rk-bounded}]
        Let $\sigma>0$. Let $t_1, t_2 \in T$ and $ u_1, u_2, \in N$. 
        We will compare the total variation distance of the Gaussians defined by these parameters.
        Let 
        \begin{align*}
            D_\sigma = \begin{bmatrix}
                \sigma^2 &  &  &  \\
                  & 1 & \hspace{9pt} O &  \\
                  & \hspace{-9pt} O  & \ddots \\
                  &  &  & 1
            \end{bmatrix}.
        \end{align*}
        By Fact~\ref{fact:tv-dpi}, taking $f\colon \R^d \to \R$ to be $f(x) = u_1^{\top} (x-t_1)$,
        \begin{align*}
            \tv(G(\sigma, t_1, u_2), G(\sigma, t_2, u_2)) 
            &\geq \tv (\cN(u_1^{\top}(t_1 - t_1), u_1^{\top} R_{u_1} D_\sigma R_{u_1}^{\top} u_1),  \cN(u_1^{\top}(t_2 - t_1), u_1^{\top} R_{u_2}D_\sigma R_{u_2}^{\top} u_1)) \\
            &= \tv ( \cN (0, \sigma^2),  \cN ( u_1 \cdot \Delta t, \sigma^2 \cos^2(\angle(u_1, u_2)) + \sin^2(\angle(u_1, u_2)))),
        \end{align*}
        where $\Delta t = t_2 - t_1$. For the last line above, we take $R_{u_2} = [u_2, v_2,\dots, v_d]$, where $\{v_2,\dots,v_d\}$ is an orthonormal basis for $\{u_2\}^\perp$. Then the equality in the last line for the variance of the second Gaussian uses,
        \begin{align*}
            u_1^{\top} R_{u_2} D_\sigma R_{u_2}^{\top} u_1 
            &= \sigma^2(u_1\cdot u_2)^2 + (v_2\cdot u_2)^2 + \dots + (v_d \cdot u_2)^2 \\
            &= \sigma^2(u_1 \cdot u_2)^2 + (1 - (u_1 \cdot u_2)^2)\\
            &= \sigma^2 \cos^2 (\angle(u_1,u_2)) + (1- \cos^2 (\angle(u_1,u_2))) \\
            &= \sigma^2 \cos^2 (\angle(u_1,u_2)) + \sin^2(\angle(u_1,u_2)),
        \end{align*}
        where $R_{u_2}$ being unitary implies that $(u_1 \cdot u_2)^2 + (u_1 \cdot v_2)^2 + \dots + (u_2 \cdot v_d)^2 = 1$, yielding the second equality in the above.
        
        We show that if $\angle(u_1, u_2) \in [\tfrac {\sqrt 2\pi} {2} \sigma, \pi - \tfrac {\sqrt 2\pi} 2 \sigma]$, $\tv(G(\sigma, t_1, u_1), G(\sigma, t_2, u_2)) \geq \tfrac 1 {200}$. First, we consider the case where $\angle(u_1,u_2) \in [\tfrac {\sqrt 2 \pi} 2 \sigma, \tfrac \pi 2]$. Using that on $[0, \tfrac \pi 2]$, we have $\sin(x) \geq \frac 2 \pi x$ and $\cos(x) \geq 0$, we get
        \begin{align}
            \sigma^2 \cos^2 (\angle(u_1,u_2)) + \sin^2(\angle(u_1,u_2)) \geq \frac 4 {\pi^2} \angle(u_1,u_2)^2 \geq  2\sigma^2.
        \end{align}
        Therefore,
        \begin{align*}
            \frac {\sigma_1^2 - \sigma_2^2} {\sigma_1^2} \leq  \frac {\sigma^2 - 2\sigma^2} {\sigma^2} \leq -1,
        \end{align*}
        and by Fact~\ref{fact:tv-1d-gaussians}, we can conclude that $\tv(G(\sigma,t_1, u_1), G(\sigma, t_2, u_2)) \geq \tfrac 1 {200}$. Now, consider the case where $\angle(u_1, u_2) \in [\tfrac \pi 2, \pi - \tfrac {\sqrt 2 \pi} 2]$. Note that in this case, there exists $u_2' = -u_2 \in [\tfrac {\sqrt 2 \pi} 2, \tfrac \pi 2]$ with $G(\sigma, t_2, u_2) = G(\sigma, t_2, u_2')$, bringing us back to the previous case.
        
        Next, note that since $\|u_1\|_2 =1$ and $|u_1^{(d)} | = |u_1 \cdot e_d| \leq \tfrac {\sqrt 3} 2$ (by the definition of $N$), letting $r = [u_1^{(1)},\dots,u_1^{(d-1)}]^{\top} \in \R^{d-1}$, we have $\|r\|_2 \geq \tfrac 1 2$. Let $\h r = \tfrac r {\|r\|_2}$. 
        We have that if $[\Delta t_1,\dots,\Delta t_{d-1}]^{\top} \cdot \h r \geq \tfrac 1 {20} \sigma$, then,
        \begin{align*}
            u_1 \cdot \Delta t 
            &= r \cdot [\Delta t_1,\dots,\Delta t_{d-1}]^{\top} \\
            &\geq \frac r {2\|r\|_2} \cdot [\Delta t_1,\dots,\Delta t_{d-1}]^{\top} \\
            & \geq \tfrac 1 2 \h r \cdot [\Delta t_1,\dots,\Delta t_{d-1}]^{\top} \\
            &\geq \frac 1 {40} \sigma.
        \end{align*}
        This implies that
        \begin{align*}
            \frac {40(\mu_1 -\mu_2)} {\sigma_1} = \frac {40(-u_1 \cdot \Delta t)}{\sigma} \leq -1,
        \end{align*}
        and by Fact~\ref{fact:tv-1d-gaussians}, we can conclude $\tv(G(\sigma,t_1,u_1),G(\sigma,t_2,u_2)) \geq \tfrac 1 {200}$.         
        Therefore, for any $u \in N$, $t \in T$,
        \begin{align*}
            \P{Q \sim \cU(\cQ_k)}{\tv(G(\tfrac 1 k, t, u), Q) \leq\frac1 {400}} 
            &= \P{\substack{t' \sim \cU(T) \\ u' \sim \cU(N)}}{\tv(G(\tfrac 1 k, t, u), G(\tfrac 1 k, t', u')) \leq \frac 1 {400}} \\
            &\leq \P{\substack{t' \sim \cU(T) \\ u' \sim \cU(N)}}{\tv(G(\tfrac 1 k, t, u), G(\tfrac 1 k, t', u')) < \frac 1 {200}} \\
            &\leq \P{t' \sim \cU(T)}{[\Delta t_1,\dots,\Delta t_{d-1}]^{\top} \cdot \h r < \tfrac 1 {20k}} ~~\cdot \\
            &\hspace{12pt}\P{u' \sim \cU(N)}{(\angle(u,u') \in [0, \tfrac {\sqrt 2 \pi} {2k}) \cup (\pi - \tfrac {\sqrt 2 \pi} {2k} , \pi])}.
        \end{align*}
        
        For the first term, note that the event 
        \begin{align*}
            \l\{[\Delta t_1,\dots, \Delta t_{d-1}] \cdot \h r < \tfrac 1 {20k}\r\} \sub \l\{t' \in \l\{t + 
            \begin{bmatrix}
                x \\
                0
            \end{bmatrix} + \lambda \begin{bmatrix}
                \h r \\   
                0
            \end{bmatrix}: \|x\|_2 \leq 1, x \cdot \h r = 0, \lambda \leq \frac 1 {20k}\r\}\r\},
        \end{align*}
        which under $\cU(T)$, for some $c_d>0$ depending only on $d$, has probability $\leq c_d \cdot \tfrac 1 {20k}$.
        
        For the second term, note that $\angle(u,u') \in [0, \tfrac {\sqrt 2 \pi} {2k}) \cup (\pi - \tfrac {\sqrt 2 \pi} {2k} , \pi]$ means $u' \in C(u, \tfrac {\sqrt 2 \pi}{2k}) \cup C(-u, \tfrac {\sqrt 2 \pi}{2k})$. By Fact~\ref{fact:cap-area}, we know that under $\cU(N)$, for some $c_d$ depending only on $d$,
        \begin{align*}
            \P{u' \sim \cU(N)}{u' \in C(u, \tfrac {\sqrt 2 \pi} {2k})} 
            &= c_d \cdot \int^{\sqrt 2 \pi/2k}_0 \sin^{d-2}(x) dx \\
            &\leq c_d \cdot \int^{\sqrt 2 \pi/2k}_0 x^{d-2} dx \\
            &= \frac {c_d} {d-1} \l(\frac {\sqrt 2 \pi}{2}\r)^{d-1} \frac 1 {k^{d-1}}.  
        \end{align*}
        The bound is the same for $C(-u, \tfrac{\sqrt 2 \pi}{2k})$.
        Plugging these into the above, we can conclude that
        \begin{align*}
           r_k = \sup_{p \in \cQ_k} \P{Q \sim \cU(\cQ_k)}{\tv(p, Q) \leq \frac 1 {400}} \leq O\l(\frac 1 {k^d}\r) \to 0 \quad\text{as $k\to\infty$.}
        \end{align*}
        This proves the claim.
    \end{proof}
    
    \paragraph{Lower bounding $\bs s_k$, the weight of alternative hypotheses.}
    First, we note that 
    \begin{align*}
        u_k = \sup_{\substack{\bs x \in B \\ q \in \cQ_k}} q^{d-1}(\bs x) = \l(\frac {1} {(2\pi)^{d/2}}k\exp(-\tfrac 1 2)\r)^{d-1},
    \end{align*}
    which is achieved by $G(\tfrac 1 k, \bs 0, e_1)$ (where $\bs 0 \in \R^d$ is the origin) and $\bs x = (e_d,\dots,e_d)$.
    Let
    \begin{align*}
    c = \frac {\exp (-5)^{d-1}}{\exp(-\tfrac 1 2)^{d-1}} = \exp\l(\frac{9(d-1)}{2}\r).
    \end{align*}
    
    \begin{claim}\label{clm:sk-infty}
        For $k\geq 1$, letting $(u_k)_{k=1}^\infty$ and $c$ be defined as above, define
        \begin{align*}
            s_k \coloneqq \inf_{\bs x \in B} \P{Q \sim \cU(\cQ_k)}{Q^{d-1}(\bs x) \geq cu_k}.
        \end{align*}
        Then we have,
        \begin{align*}
            s_k = \Omega\l(\frac 1 {k^{d-1}}\r) \to 0 \quad\text{as $k\to\infty$}.
        \end{align*}
    \end{claim}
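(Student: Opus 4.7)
The plan is to translate the density condition $Q^{d-1}(\bs x)\ge cu_k$ into a quadratic inequality and then exhibit a product region of $(t,u)\in T\times N$ of measure $\Omega(1/k^{d-1})$ on which it holds. Writing out the density of $G(1/k,t,u)$ gives
\[
q(x)=\frac{k}{(2\pi)^{d/2}}\exp\l(-\tfrac{1}{2}E(x,t,u)\r),\qquad E(x,t,u)\coloneqq\|x-t\|_2^2+(u\cdot(x-t))^2(k^2-1).
\]
Using the definitions of $u_k$ and $c$, the condition $Q^{d-1}(\bs x)\ge cu_k$ is equivalent to $\sum_{i=1}^{d-1}E(x_i,t,u)\le 10(d-1)$. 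I would split the sum as $F_1(t)+(k^2-1)F_2(t,u)$, where $F_1(t)\coloneqq\sum_i\|x_i-t\|_2^2$ and $F_2(t,u)\coloneqq\sum_i(u\cdot(x_i-t))^2=u^{\top}M_tu$ with $M_t\coloneqq\sum_i(x_i-t)(x_i-t)^{\top}$.

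For $\bs x\in B=C^{d-1}$ and $t\in T$, writing $x_i=(a_i,z_i)$ with $\|a_i\|_2\le 1/2$, $z_i\in[1,2]$, and $t=(t_{(d-1)},0)$ with $\|t_{(d-1)}\|_2\le 1/2$ gives the uniform bound $\|x_i-t\|_2^2\le\|a_i-t_{(d-1)}\|_2^2+z_i^2\le 1+4=5$, so $F_1(t)\le 5(d-1)$. It therefore suffices to find $(t,u)\in T\times N$ with $u^{\top}M_tu\le 5(d-1)/(k^2-1)$.

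Since $M_t$ is a sum of $d-1$ rank-one PSD matrices, it has rank at most $d-1$ and thus admits a unit null vector $u^*_t$ satisfying $u^*_t\cdot(x_i-t)=0$ for every $i$. Writing this coordinate-wise yields $(u^*_t)_d\cdot z_i=(u^*_t)_{(d-1)}\cdot(t_{(d-1)}-a_i)$; combining $z_i\ge 1$, $\|t_{(d-1)}-a_i\|_2\le 1$, and Cauchy--Schwarz gives $|(u^*_t)_d|\le\|(u^*_t)_{(d-1)}\|_2=\sqrt{1-(u^*_t)_d^2}$, hence $|(u^*_t)_d|\le 1/\sqrt 2$. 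This places $u^*_t$ strictly inside $N$ with slack $\sqrt 3/2-1/\sqrt 2>0$ that is independent of $t$ and $\bs x$. Parametrizing unit $u$ near $u^*_t$ as $u=\cos(\alpha)u^*_t+\sin(\alpha)w$ with $w\perp u^*_t$, $w\in S^{d-1}$, and using $M_tu^*_t=0$, I get $u^{\top}M_tu=\sin^2(\alpha)w^{\top}M_tw\le\sin^2(\alpha)\operatorname{tr}(M_t)\le 5(d-1)\sin^2(\alpha)$; hence any $\alpha\le 1/k$ suffices.

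By Fact~\ref{fact:cap-area}, the two opposite caps $C(\pm u^*_t,1/k)$ have total surface area $\Theta(1/k^{d-1})$, and for $k$ large enough that $1/k<\sqrt 3/2-1/\sqrt 2$ both caps lie inside $N$; hence the $\cU(N)$-probability of the feasible set of $u$ is $\Theta(1/k^{d-1})$ uniformly in $t\in T$. Integrating over $t\in T$ (which has positive Lebesgue measure) yields $s_k=\Omega(1/k^{d-1})$. The main subtlety to nail down is keeping the caps $C(\pm u^*_t,1/k)$ inside $N$ for every $\bs x\in B$ simultaneously, which is exactly what the $\bs x$- and $t$-independent slack $|(u^*_t)_d|\le 1/\sqrt 2$ buys; degenerate $\bs x$ with $\operatorname{rank}(M_t)<d-1$ only enlarge $\ker(M_t)$ and thus the feasible set of $u$, so the lower bound remains valid uniformly over $\bs x\in B$.
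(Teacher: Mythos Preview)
Your proof is correct and follows essentially the same approach as the paper: both find, for each $t\in T$, a unit vector $u^*_t$ orthogonal to all $x_i-t$, show that $|u^*_t\cdot e_d|\le 1/\sqrt 2$ (equivalently $\angle(u^*_t,e_d)\ge\pi/4$) so that $u^*_t$ sits in $N$ with uniform slack, and then argue that every $u$ in the cap $C(u^*_t,1/k)$ gives density at least $cu_k$ on $\bs x$, yielding the $\Omega(1/k^{d-1})$ bound via Fact~\ref{fact:cap-area}. Your write-up is in fact a bit cleaner than the paper's: expressing the Mahalanobis exponent as $\|x-t\|^2+(k^2-1)(u\cdot(x-t))^2$ and packaging the second term as $u^\top M_t u$ makes the role of the null vector and the $\sin^2\alpha$ perturbation bound transparent, whereas the paper computes the same quantity coordinate-wise.
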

    \begin{proof}[Proof of Claim~\ref{clm:sk-infty}]
        Let $k \geq 1$. Fix any $\bs x = (x_1,\dots,x_{d-1}) \in B$. For every $t \in T$, there exists $u \in \{x_1-t, x_2 - t,\dots,x_{d-1}-t\}^\perp$. We show $\angle(u, e_d) \geq \tfrac \pi 4$. Suppose otherwise, that is, $\angle(u, e_d) < \tfrac \pi 4  \implies |u \cdot e_d| = |u^{(d)}|> \tfrac {\sqrt 2} 2$. Then,
        \begin{align*}
            u \cdot (x_1 - t) =
                u^{(1)}(x_1^{(1)} - t^{(1)}) + \dots + u^{(d-1)}(x_1^{(d-1)} - t^{(d-1)}) + u^{(d)} x_1^{(d)}.
        \end{align*}
        By our assumption on $u^{(d)}$, and by the fact that $x_1 \in C$, we have that $|u^{(d)}x_1^{(d)}| > \tfrac {\sqrt 2} 2$. By Cauchy-Schwarz in $\R^{d-1}$, we have that,
        \begin{align*}
            |u^{(1)}(x_1^{(1)} - t^{(1)}) + \dots& + u^{(d-1)}(x_1^{(d-1)} - t^{(d-1)})| \\
            &\leq \|[u^{(1)}, \dots, u^{(d-1)}]^{\top}\|_2\cdot \|[x_1^{(1)},\dots,x_1^{(d-1)}]^{\top} - [t^{(1),\dots,t^{(d-1)}}]^{\top}\|_2 \\
            &< \frac {\sqrt 2} 2 \cdot 1.
        \end{align*}
        The last inequality uses that $\|u\|_2 = 1$ and $(u^{(d)})^2 > \tfrac 1 2$, so the norm of the first $(d-1)$ coordinates is $<\tfrac {\sqrt 2} 2$, and also the fact that the first $(d-1)$ coordinates of $x$ and $t$ are in the $\tfrac 1 2$-disk. This inequality, combined with the fact that $|u^{(d)}x_1^{(d)}|> \tfrac {\sqrt 2} 2$ contradicts that that $(x_1 -t) \cdot u = 0$.
        
        Now, for the $t$ and the $u$ from above, consider an arbitrary $u'$ with $\angle(u,u') \leq \tfrac 1 k$, and the Gaussian with mean $t$ and normal vector $u'$, $G(\tfrac 1 k, t, u')$.
        We will show that any such Gaussian assigns high mass to the point $x$, and furthermore that there is a high density of such Gaussians.
        Note that for $k\geq 5$, $\tfrac 1 k \leq \tfrac \pi 3 - \tfrac \pi 4 \implies \angle(u', e_d) \geq \tfrac \pi 3 \implies u' \in N$. We compute the minimum density this Gaussian assigns to $\bs x$. Consider, for $i \in [d-1]$,
        \begin{align*}
            (x_i-t)^{\top}(R_{u'}D_{1/k} R_{u'}^{\top})^{-1}(x_i-t) 
            &= \|D_{\sqrt k}R_{u'}^{\top}(x_i -t)\|^2 \\
            & = k^2|u' \cdot (x_i -t)|^2 + |v_2 \cdot (x_i -t)|^2 + \dots + |v_d \cdot (x_i -t)|^2 \\
            &\leq 5(k^2|u' \cdot \h r|^2 + 1),
        \end{align*}
        where $\h r = (x_i-t)/\|x_i-t\|$ and $\{v_2,\dots,v_d\}$ is an orthonormal basis of $\{u'\}^\perp$.
        We have that,
        \begin{align*}
            |u' \cdot \h r|^2 &= |(u + (u-u'))\cdot \h r|^2 \\
            &\leq \| u - u'\|_2^2 \cdot \| \h r\|_2^2 \\
            &=u\cdot u - 2u\cdot u' + u'\cdot u' \\
            &= 2 - 2\cos(\angle(u', u)) \\
            &\leq 2 - 2(1-\tfrac {\angle(u',u)^2} 2) \\
            &= \angle(u',u)^2 \leq \frac 1 {k^2}.
        \end{align*}
        Hence, the density of $G(\tfrac 1 k, t, u')$ on $\bs x$ is lower bounded by,
        \begin{align*}
            \l(\frac 1 {(2\pi)^{d/2}}k\exp(-5)\r)^{d-1} = cu_k.
        \end{align*}
        For every $t \in T$, we found a set of $u' \in N$ such that the density $G(\tfrac 1 k, t, u')$ assigns to $\bs x$ is greater than $c u_k$. Since for some constant $c_d>0$ depending only on $d$,
        \begin{align*}
            \P{u' \sim \cU(N)}{u' \in C(u, \tfrac 1 k)} 
            &= c_d \int_0^{1/k} \sin^{d-2}(x) dx \\
            &\geq c_d \int_0^{1/k} \l(\frac 2 \pi x\r)^{d-2} dx\\
            &= c_d \l(\frac 2 \pi\r)^{d-2} \frac 1 {d-1} \cdot \frac 1 {k^{d-1}},
        \end{align*}
        and since $\bs x \in B$ was arbitrary, we indeed have,
        \begin{align*}
            s_k = \inf_{\bs x \in B} \P{Q \sim \cU(\cQ_k)}{Q^{d-1}(\bs x) \geq cu_k} = \Omega\l(\frac 1 {k^{d-1}}\r).
        \end{align*}
        This completes the proof of the claim.
    \end{proof} 
    
    With the three claims, applying Lemma~\ref{lem:ll-nfl} allows us to conclude that the class of all Gaussians in $\R^d$ is not list learnable with $m(\alpha,\beta) = d-1$ samples. This implies that the class is also not public-privately learnable with $m(\alpha,\beta,\eps) = d-1$ public samples.
\end{proof}

\section{Learning when Yatracos class has finite VC dimension}\label{sec:yatracos-finite}

In this section, we describe a public-private learner for classes of distributions whose Yatracos classes have finite VC dimension. We start by defining the Yatracos class of a family of distributions.
\begin{definition}[Yatracos class]
    For $\cQ \subseteq \Delta(\cX)$, the \emph{Yatracos class} of $\cQ$ is given by 
    \begin{align*}
        \cH =  \{\{x \in \cX : p(x) > q(x)\}: p \neq q \in \cQ\}.\footnotemark
    \end{align*}
\end{definition}
\footnotetext{This is for when the distributions in $\cQ$ are discrete. For classes of continuous distributions, we substitute $p$ and $q$ for their respective density functions.}

The following folklore result characterizes the (non-private) learnability of distribution families, whose Yatracos classes have finite VC dimension. When the VC dimension of the Yatracos class of $\cQ$ is finite, the following gives an upper bound on the number of samples required to non-privately learn $\cQ$.
\begin{fact}[\cite{yatracos85}, {\cite[Theorem~6.4]{dl01}}]\label{fact:yatracos}
    Let $\cQ \subseteq \Delta(\cX)$. Let $\cH$ be the Yatracos class of $\cQ$, and let $d=\vc(\cH)$. $\cQ$ is learnable with
    \begin{align*}
        m = O\l(\frac{d + \log(\tfrac 1 \beta)}{\alpha^2}\r)
    \end{align*}
    samples.
\end{fact}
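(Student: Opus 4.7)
The plan is to analyze Yatracos' minimum distance estimator. Given $m$ i.i.d.\ samples from the unknown $p \in \cQ$, form the empirical distribution $\t p_m$ and output an (approximate) minimizer
\begin{equation*}
    \h q \in \argmin_{q \in \cQ} \sup_{h \in \cH} |q(h) - \t p_m(h)|.
\end{equation*}
An exact minimizer need not always exist, so in practice one takes an $\alpha/3$-approximate minimizer; this only affects constants in the end.

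The structural property that drives the argument is that $\cH$ \emph{witnesses} total variation within $\cQ$: for distinct $q_1, q_2 \in \cQ$ the set $\{x : q_1(x) > q_2(x)\}$ lies in $\cH$, so $\tv(q_1, q_2) = \sup_{h \in \cH} |q_1(h) - q_2(h)|$. Next I would invoke the classical VC-based uniform convergence inequality for the empirical process indexed by $\cH$, which guarantees that with $m = O((\vc(\cH) + \log(1/\beta))/\alpha^2)$ i.i.d.\ samples, the event $\sup_{h \in \cH} |\t p_m(h) - p(h)| \leq \alpha/3$ holds with probability at least $1 - \beta$.

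On this event, since $p \in \cQ$ is a feasible candidate in the definition of $\h q$, we have $\sup_{h \in \cH} |\h q(h) - \t p_m(h)| \leq \alpha/3$; combining with the uniform convergence bound via the triangle inequality over $\cH$ yields $\sup_{h \in \cH} |\h q(h) - p(h)| \leq 2\alpha/3$. The Yatracos identity applied to $\h q, p \in \cQ$ then converts this supremum into $\tv(\h q, p) \leq 2\alpha/3 \leq \alpha$, giving the desired accuracy. The only nontrivial ingredient is the VC uniform convergence bound---everything else reduces to definitions, the triangle inequality, and the Yatracos identity---so that is really the crux of the argument, and the source of the $(\vc(\cH) + \log(1/\beta))/\alpha^2$ rate.
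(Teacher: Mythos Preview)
The paper does not give its own proof of this statement: it is recorded as a \emph{Fact} with citations to \cite{yatracos85} and \cite[Theorem~6.4]{dl01}, and no argument is supplied. Your proposal is the standard proof from those references---Yatracos' minimum distance estimator analyzed via VC uniform convergence over the Yatracos class---and it is correct as written (modulo the minor bookkeeping around exact versus $\alpha/3$-approximate minimizers, which you already flag). Indeed, the paper's own proof of Theorem~\ref{thm:finite-vc} reuses exactly this template in the public-private setting, replacing the raw empirical measure $\t p_m$ with the privately released estimates $\h g \circ f$, so your outline also matches the spirit of how the paper deploys this fact downstream.
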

For some classes of distributions, the above bound is tight, e.g., it recovers the $\Theta(\tfrac {d^2} {\alpha^2})$ sample complexity for learning Gaussians in $\R^d$ \cite{AshtianiM18}.

Now, we describe our main result of this section. It essentially gives a sample complexity bound for public-private learning in terms of the class $\cQ$'s Yatracos class. Note that the number of public samples used is less than the non-private upper bound obtained from the VC dimension.
\begin{theorem}\label{thm:finite-vc}
    Let $\cQ \subseteq \Delta(\cX)$. Let $\cH$ be the Yatracos class of $\cQ$, let $d=\vc(\cH)$, and let $d^* = \vc^*(\cH)$, the dual VC dimension of $\cH$. $\cQ$ is public-privately learnable with $m$ public and $n$ private samples, where
    \begin{align*}
        m = O\l(\frac{d\log\left(\tfrac 1 \alpha\right) + \log\left(\tfrac 1 \beta\right)}{\alpha}\r) 
        \qquad\text{and}\qquad
        n = O\l(\frac{d^2\cdot d^* + \log (\tfrac 1 \beta)} {\eps\alpha^3}\r).
    \end{align*}
\end{theorem}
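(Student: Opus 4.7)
The plan is to adapt Yatracos's minimum-distance estimator (MDE) to the public-private setting. I would build a differentially private uniform-convergence oracle $\tilde p \colon \cH \to [0,1]$ from the private data (assisted by the public data), and then output $\hat q \in \cQ$ approximately minimizing $\sup_{h\in\cH} |\tilde p(h) - q(h)|$. In the realizable case $p \in \cQ$, so taking the witness $q_0 = p$ gives $\sup_h |\tilde p(h) - q_0(h)| = \sup_h |\tilde p(h) - p(h)|$; by optimality of $\hat q$ and two triangle inequalities, $\sup_h |\hat q(h) - p(h)| \le 2 \sup_h |\tilde p(h) - p(h)|$, and since both $p, \hat q \in \cQ$, the Yatracos property gives $\tv(\hat q, p) = \sup_h |\hat q(h) - p(h)|$. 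Hence the whole task reduces to constructing $\tilde p$ satisfying $\sup_h |\tilde p(h) - p(h)| \le \alpha/2$ in an $\eps$-DP way with respect to the private data.

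The oracle will be built in two stages. First, I would use the public sample of size $m = O((d\log(1/\alpha)+\log(1/\beta))/\alpha)$ to form a polynomial-size proxy for $\cH$: by Sauer's lemma, the number of distinct restrictions of $\cH$ to the public sample is at most $(em/d)^d$, and a standard VC $\alpha$-net argument guarantees that two events in $\cH$ which agree on the public sample are within $\alpha/4$ in $p$-measure of one another. This public-data step sidesteps the exponential-in-$d$ sample complexity of pure-DP query release over an infinite $\cH$. Second, I would invoke the public-private uniform-convergence machinery of \cite{dp-uniform-convergence} (itself building on the public-private classification framework of \cite{abm19}) on this proxy with $n = O((d^2 d^* + \log(1/\beta))/(\eps\alpha^3))$ private samples to release $\tilde p(h)$ on the proxy — and hence, via the net property, on all of $\cH$ — under pure $\eps$-DP. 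The $d^*$ factor reflects the cost of covering the dual class for simultaneous accuracy across queries, while the $1/\alpha^3$ rate is the standard pure-DP query-release inflation over the $1/\alpha^2$ non-private rate. Overall privacy is automatic: the public stage touches no private data, the private stage is $\eps$-DP, and MDE selection is deterministic post-processing.

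The hard part will be the second stage: verifying that the results of \cite{dp-uniform-convergence} and \cite{abm19}, possibly with light modifications, yield pure-DP uniform convergence over a general Yatracos class $\cH$ with precisely the claimed $m = \tilde O(d/\alpha)$ public and $n = \tilde O(d^2 d^*/\eps\alpha^3)$ private sample complexities, without extra structural hypotheses on $\cH$. A secondary issue is that the infimum in the MDE may not be attained over infinite $\cQ$; this can be handled by taking any near-minimizer within an additive $\alpha/4$ slack, which only inflates the final TV error by a constant factor.
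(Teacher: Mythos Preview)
Your proposal is correct and takes essentially the same approach as the paper: use public data to build a finite $\alpha$-cover $\wh\cH$ of $\cH$ via Sauer's lemma (this is the \cite{abm19} step), obtain DP uniform-convergence estimates over $\wh\cH$ from the private data, and plug into Yatracos's MDE. The one place where the paper is more concrete than your sketch is the second stage: rather than invoking \cite{dp-uniform-convergence} as a black box, the paper collapses $\cX$ to the finite representative domain $\cX_{\wh\cH}$ (whose size is bounded by $(e|\wh\cH|/d^*)^{d^*}$ via Sauer's lemma applied to the \emph{dual} class) and then runs SmallDB directly on this finite domain with the finite query class $\wh\cH$; this is precisely where $d^*$ enters and what resolves the ``hard part'' you flagged.
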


This result is a consequence of a known public-private uniform convergence result {\cite[Theorem~10]{dp-uniform-convergence}}. 
To adapt it to our settings, we  (1) modify their result for pure DP (rather than approximate DP); and (2) conclude that uniform convergence over the Yatracos sets of $\cQ$ suffices to implement the learner from Fact~\ref{fact:yatracos}. 

We employ the following result on generating distribution-dependent covers for binary hypothesis classes with public data.
\begin{fact}[Public data cover {\cite[Lemma~3.3 restated]{abm19}}]\label{fact:cover}
    Let $\cH \subseteq 2^{\cX}$ and $\vc(\cH) = d$. There exists $\cA\colon \cX^* \to \{H \sub 2^{\cX}: |H|<\infty\}$, such that for any $\alpha, \beta \in (0,1]$ there exists
    \begin{align*}
        m = O \l( \frac{d\log (\tfrac 1 \alpha) + \log (\tfrac 1 \beta)} \alpha \r)
    \end{align*}
    such that for any $p \in \Delta(\cX)$, if we draw $\bs X = (X_1,\dots,X_m)$ i.i.d.\ from $p$, with probability $\geq 1-\beta$, $\cA(\bs X)$ outputs $\wh \cH\subseteq 2^{\cX}$ and a mapping $f\colon\cH \to \wh \cH$    
    with 
    \begin{align*}
        p(h \triangle f(h)) \leq \alpha \qquad{\text{for all $h \in \cH$}}
    \end{align*}
    (where for $A,B \sub \cX$, $A \triangle B$ denotes the symmetric set difference $(A \setminus B) \cup (B \setminus A))$.
    Furthermore, we have $|\wh \cH| \leq \l(\tfrac {em} d\r)^{2d}$.
\end{fact}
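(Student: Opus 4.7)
The plan is to combine the Sauer--Shelah lemma with the classical Haussler--Welzl $\alpha$-net theorem, applied to the symmetric-difference class $\cH \triangle \cH := \{h_1 \triangle h_2 : h_1, h_2 \in \cH\}$. The algorithm $\cA$ is the natural empirical one: on input $\bs X = (X_1,\dots,X_m)$, enumerate the distinct labelings $\cH|_{\bs X} := \{(h(X_1),\dots,h(X_m)) : h \in \cH\} \sub \{0,1\}^m$; for each $\sigma \in \cH|_{\bs X}$, pick (by some fixed selection rule) one representative $h_\sigma \in \cH$ realizing $\sigma$; and output $\wh\cH := \{h_\sigma : \sigma \in \cH|_{\bs X}\}$ together with the map $f : \cH \to \wh\cH$ defined by $f(h) := h_{(h(X_1),\dots,h(X_m))}$.

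The cardinality bound is immediate from the Sauer--Shelah lemma: $|\wh\cH| = |\cH|_{\bs X}| \leq (em/d)^d \leq (em/d)^{2d}$. For the covering property, the key structural observation is that by construction $h$ and $f(h)$ agree on every sample point, so $h \triangle f(h) \in \cH \triangle \cH$ is disjoint from $\{X_1,\dots,X_m\}$. Hence the desired conclusion $p(h \triangle f(h)) \leq \alpha$ for all $h \in \cH$ reduces to showing that, with probability $\geq 1 - \beta$ over $\bs X \sim p^m$, the sample $\bs X$ is an $\alpha$-net for $\cH \triangle \cH$ under $p$ --- i.e., every $A \in \cH \triangle \cH$ with $p(A) \geq \alpha$ contains at least one $X_i$.

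To establish the $\alpha$-net property I would invoke the growth-function form of the Haussler--Welzl $\alpha$-net theorem applied to $\cH \triangle \cH$. Since every element of $\cH \triangle \cH$ is determined by an ordered pair in $\cH$, its growth function is controlled by $\Pi_{\cH \triangle \cH}(m) \leq \Pi_\cH(m)^2 \leq (em/d)^{2d}$, so the class behaves, for $\alpha$-net purposes, like one of VC dimension $O(d)$. The $\alpha$-net theorem then delivers exactly the claimed sample size $m = O\l(\tfrac{d\log(1/\alpha) + \log(1/\beta)}{\alpha}\r)$.

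The main obstacle --- such as it is --- is to arrange a bound of the form $\log(1/\alpha)$ rather than $\log m$ inside the numerator: this is precisely the quantitative improvement of Haussler--Welzl over a naive uniform-convergence bound, and follows from the standard double-sample symmetrization combined with the growth-function estimate above. Everything else (Sauer--Shelah, the representative-selection construction, and the one-line reduction of ``$h$ and $f(h)$ agree on $\bs X$'' to the $\alpha$-net property of $\cH \triangle \cH$) is textbook, so the bulk of the work is packaging these standard ingredients.
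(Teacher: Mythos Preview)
The paper does not supply a proof of this statement: it is recorded as a Fact and attributed to \cite[Lemma~3.3]{abm19}, so there is no in-paper argument to compare against. Your proposal is correct and is essentially the standard proof one would expect in the cited reference: take all realized dichotomies on the sample as $\wh\cH$, use Sauer--Shelah for the size bound (you in fact get the stronger $(em/d)^d$, which implies the stated $(em/d)^{2d}$), observe that $h\triangle f(h)$ misses every sample point, and conclude via the $\alpha$-net theorem applied to $\cH\triangle\cH$, whose growth function is at most $\Pi_\cH(m)^2$. The only thing to be slightly careful about is that the $\alpha$-net bound you invoke must be the growth-function (or VC) version yielding the $\tfrac{d\log(1/\alpha)}{\alpha}$ dependence rather than a cruder uniform-convergence bound, which you already flag.
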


We also use the following pure DP algorithm for answering counting queries on finite domains.
\begin{fact}[SmallDB \cite{BlumLR13}, {\cite[Theorem~4.5]{DworkR14}}]\label{fact:smalldb}
    Let $\cX$ be a finite domain. Let $\cH \sub 2^{\cX}$. Let $\alpha,\beta \in (0,1]$ and $\eps>0$, There is an $\eps$-DP randomized algorithm, that on any dataset $\bs x = (x_1,\dots,x_n)$ with
    \begin{align*}
        n = \Omega\l(\frac {\log (|\cX|) \log (|\cH|) + \log (\tfrac 1 \beta)}{\eps\alpha^3}\r)
    \end{align*}
    outputs estimates $\h g \colon \cH \to \R$ such that with probability $\geq 1 -\beta$, 
    \begin{align*}
        \l|\h g(h) - \frac 1 n \sum_{i=1}^n \mathds 1_h(x_i) \r| \leq \alpha \qquad\text{for all $h \in \cH$}.
    \end{align*}
\end{fact}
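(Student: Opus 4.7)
The plan is to prove Fact~\ref{fact:smalldb} via the classical \emph{small database / net} mechanism combined with the exponential mechanism, as in \cite{BlumLR13}. The strategy has three ingredients: (i) an existence lemma showing every dataset is well-approximated on $\cH$ by some \emph{small} synthetic database, (ii) an instantiation of the exponential mechanism that privately selects such an approximator, and (iii) a calculation combining the two to extract the claimed sample complexity.

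First, I would establish the following net lemma: for any dataset $\bs x = (x_1,\dots,x_n) \in \cX^n$ and any $\alpha \in (0,1]$, there exists a synthetic dataset $\bs y \in \cX^s$ of size
\[
s = O\l(\frac{\log |\cH| + \log(1/\beta)}{\alpha^2}\r)
\]
such that $|q_h(\bs x) - q_h(\bs y)| \leq \alpha/2$ for every $h \in \cH$, where $q_h(\bs z) \coloneqq \frac{1}{|\bs z|}\sum_i \mathds 1_h(z_i)$. The proof is a standard probabilistic argument: sample $\bs y$ by drawing $s$ points uniformly with replacement from $\bs x$; for a fixed $h$, $s \cdot q_h(\bs y)$ is a sum of $s$ i.i.d.\ Bernoullis with mean $q_h(\bs x)$, so Hoeffding gives $\P{}{|q_h(\bs y)-q_h(\bs x)| > \alpha/2} \leq 2\exp(-s\alpha^2/2)$; a union bound over $\cH$ and choosing $s$ as above makes the failure probability less than $1$, proving existence.

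Next, I would instantiate the exponential mechanism over the range $\cR \coloneqq \cX^s$, using utility
\[
u(\bs x, \bs y) \coloneqq -\max_{h \in \cH} \bigl| q_h(\bs x) - q_h(\bs y) \bigr|,
\]
whose sensitivity in $\bs x$ is at most $1/n$ (changing one sample shifts each $q_h(\bs x)$ by at most $1/n$). Running the exponential mechanism with privacy parameter $\eps$ is $\eps$-DP. By the standard utility guarantee, with probability $\geq 1-\beta$ the mechanism outputs $\bs y^\star$ with
\[
u(\bs x, \bs y^\star) \geq \max_{\bs y \in \cR} u(\bs x, \bs y) - \frac{2}{n\eps}\l(\log|\cR| + \log(1/\beta)\r).
\]
By the net lemma the maximum is $\geq -\alpha/2$, and $\log|\cR| = s\log|\cX| = O\bigl(\log|\cX|\log|\cH|/\alpha^2\bigr)$ (absorbing the $\log(1/\beta)$ from the net lemma into the additive $\log(1/\beta)$ term here). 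Demanding that the second term also be at most $\alpha/2$ yields exactly the stated sample-complexity requirement
\[
n = \Omega\l(\frac{\log|\cX|\log|\cH| + \log(1/\beta)}{\eps\alpha^3}\r).
\]
Finally, the algorithm releases $\h g(h) \coloneqq q_h(\bs y^\star)$, which by construction satisfies $|\h g(h) - q_h(\bs x)| \leq \alpha$ uniformly over $\cH$ with probability $\geq 1-\beta$, as required.

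There is no real obstacle here; the two points to execute carefully are the Hoeffding-plus-union-bound calculation for the net lemma (to get the logarithmic dependence on $|\cH|$ rather than something weaker), and the arithmetic that combines the $s \approx \log|\cH|/\alpha^2$ factor from the size of the net with the $1/(n\eps\alpha)$ factor from the exponential mechanism's utility bound to land on the $\alpha^3$ in the denominator. Everything else is black-box use of the exponential mechanism's privacy and utility theorems.
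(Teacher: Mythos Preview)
Your proposal is correct and is precisely the standard SmallDB argument from \cite{BlumLR13,DworkR14}; the paper does not prove this statement at all but simply cites it as a known fact, so there is nothing to compare against. One cosmetic point: in the net lemma you only need existence, so the $\log(1/\beta)$ term in $s$ is unnecessary (set the union-bound failure probability below $1$, giving $s=O(\log|\cH|/\alpha^2)$), and your own later remark about absorbing it confirms you already see this.
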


\begin{proof}[Proof of Theorem~\ref{thm:finite-vc}]
    We use our $m$ public samples from the unknown $p \in \cQ$ to generate a public data cover $\wh \cH$ and mapping $f\colon \cH \to \wh \cH$ courtesy of Fact~\ref{fact:cover}, selecting $m$ to target error $\tfrac \alpha 6$ and failure probability $\tfrac \beta 3$. Note that this implies that with probability $\geq 1 - \tfrac \beta 3$, for every $h \in \cH$, $|p(h) - p(f(h))| \leq |p(h \triangle f(h))| \leq \tfrac \alpha 6$.
    
    Next, we consider the representative domain of $\cX$ with respect to $\wh \cH$, denoted by $\cX_{\wh \cH}$. In other words, for every unique behaviour $(\mathds 1_{\h h}(x))_{\h h \in \wh \cH} \in \{0,1\}^{|\wh \cH|}$ induced by a point $x\in \cX$ on $\wh \cH$, we include exactly one representative $[x]$ in $\cX_{\wh \cH}$. By Sauer's lemma we can conclude that
    \begin{align*}
        | \cX_{\wh \cH} | \leq \l(\frac {e|\wh\cH|}{d^*}\r)^{d^*}.
    \end{align*}
    
    Then, we take our $n$ private samples $\bs X = (X_1,\dots,X_n)$ and map each point $X_i$ to its representative $[X_i] \in \cX_{\wh \cH}$, yielding a dataset of $n$ examples $[\bs X]$ on the finite domain $\cX_{\wh \cH}$. Note that for any $\h h \in \wh \cH$, $\tfrac 1 n \sum_{i=1}^n \mathds 1_{\h h}(X_i) = \tfrac 1 n \sum_{i=1}^n \mathds 1_{\h h}([X_i])$. Hence when we run SmallDB (Fact~\ref{fact:smalldb}) on the input $[\bs X]$ over the finite domain $\cX_{\wh \cH}$ with finite class $\wh \cH$, choosing $n$ large enough, we obtain $\h g : \wh \cH \to \R$ such that with probability $\geq 1 -\tfrac \beta 3$, $|\h g (\h h) - \tfrac 1 n \sum_{i=1}^n \mathds{1}_{\h h}(X_i)| \leq \tfrac \alpha 6$ for all $\h h \in \wh \cH$.
    
    We also ensure $n$ is large enough so that we get the uniform convergence property on $\wh \cH$, which has VC dimension $d$, with the private samples. That is, for all $\h h \in \wh \cH$, with probability $\geq 1 - \tfrac \beta 3$,
    $|p(\h h) - \tfrac 1 n \sum_{i=1}^n \mathds 1_{\h h}(X_i)| \leq \tfrac \alpha 6$.
    
    As a post-processing of $\h g$, our learner outputs
    \begin{align*}
        \h q \coloneqq \arg\min_{q \in \cQ} \sup_{h \in \cH} |q(h) - \h g(f(h))|.
    \end{align*}
    By the union bound, with probability $\geq 1- \beta$, all of our good events occur. In this case, we have for all $h \in \cH$,
    \begin{align*}
        \l|p(h) - p(f(h))\r| \leq \tfrac \alpha 6 \\
        \l|p(f(h)) - \tfrac 1 n \sum_{i=1}^n \mathds 1_{f(h)}(X_i)\r| \leq \tfrac \alpha 6 \\
        \l|\tfrac 1 n \sum_{i=1}^n \mathds 1_{f(h)}(X_i) - \h g(f(h))\r| \leq \tfrac \alpha 6 \\
    \end{align*}
    which implies $|p(h) - \h g(f(h))| \leq \tfrac \alpha 2$. So for any $q \in \cQ$,
    \begin{align*}
        &|q(h) - p(h)| - \frac \alpha 2\leq | q(h) - \h g(f(h))| \leq |q(h) - p(h)| + \frac \alpha 2 \\
        \implies&\tv(q,p) - \frac \alpha 2 \leq \sup_{h \in \cH}| q(h) - \h g(f(h))| \leq \tv(q,p) + \frac \alpha 2.
    \end{align*}
    We have that
    \begin{align*}
        \sup_{h \in \cH}|\h q(h) - \h g(f(h)| \leq \sup_{h \in \cH}|p(h) - \h g(f(h)| \leq \tv(p,p) + \frac \alpha 2 \leq \frac \alpha 2.
    \end{align*}
    Therefore,
    \begin{align*}
        \tv(\h q, p) \leq \sup_{h \in \cH}|\h q(h) - \h g(f(h)| + \frac \alpha 2 \leq \alpha.
    \end{align*}
    It can be verified that the choices of $m$ and $n$ in the statement of Theorem~\ref{thm:finite-vc} suffice.
\end{proof}

\newpage

\bibliography{biblio, gbiblio}

\end{document}